\documentclass[12pt]{article}
\usepackage{setspace}
\doublespacing
\usepackage{amsfonts}
\usepackage{srcltx}
\usepackage{url}
\usepackage{multirow}
\usepackage{tabularx}
\usepackage{array,makecell}
\usepackage{rotating}
\usepackage[caption = true]{subfig}
\usepackage{graphicx}
\usepackage{epstopdf}
\usepackage{caption}
\usepackage{amsmath}
\usepackage{amsthm}
\usepackage{amssymb}
\usepackage{titlesec}
\usepackage[authoryear]{natbib}
\usepackage{microtype}
\usepackage{enumerate}
\usepackage{hyperref}
\usepackage{xcolor}
\usepackage{here}
\usepackage{authblk}
\makeatletter
\numberwithin{equation}{section}
\newtheorem{theorem}{Theorem}[section]
\newtheorem{lemma}{Lemma}[section]

\newtheorem{corollary}[theorem]{Corollary}
\theoremstyle{definition}
\newtheorem{definition}{Definition}[section]

\theoremstyle{remark}

\makeatletter

\newcommand{\Rmnum}[1]{\expandafter\@slowromancap\romannumeral #1@}
\makeatother

\newcommand{\blind}{1}

\addtolength{\oddsidemargin}{-.5in}%
\addtolength{\evensidemargin}{-.5in}%
\addtolength{\textwidth}{1in}%
\addtolength{\textheight}{-.3in}%
\addtolength{\topmargin}{-.8in}%

\usepackage{threeparttable} 
\newtheorem{assumption}{Assumption}[section]
\newcommand{\eins}{\boldsymbol{1}}

\usepackage[ruled]{algorithm2e}
\usepackage{booktabs}

\begin{document}
	\def\spacingset#1{\renewcommand{\baselinestretch}%
		{#1}\small\normalsize} \spacingset{1}

	
	\if1\blind
	{
		\title{\bf Density-based Clustering with \\ Best-scored Random Forest}
		\author{Hanyuan Hang$^{\dagger}$, Yuchao Cai$^{\ddagger}$, and Hanfang Yang$^{\ddagger}$\footnote{Corresponding author. E-Mail: hyang@ruc.edu.cn.}
			\\
			${}^{\dagger}$Institute of Statistics and Big Data,
			Renmin University of China 
			\and
			${}^{\ddagger}$School of Statistics,
			Renmin University of China}
		\maketitle
	} \fi
	
	\if0\blind
	{
		\bigskip
		\bigskip
		\bigskip
		\begin{center}
			{\LARGE\bf Adaptive Density-based Clustering with \\ Best-scored Random Forest}
		\end{center}
		\medskip
	} \fi
	
	\bigskip
	\begin{abstract}Single-level density-based approach has long been widely acknowledged to be a conceptually and mathematically convincing clustering method. 
		In this paper, we propose an algorithm called \emph{best-scored clustering forest}
		that can obtain the optimal level and determine corresponding clusters.
		The terminology \emph{best-scored} means to select one random tree with the best empirical performance out of a certain number of purely random tree candidates.
		From the theoretical perspective, we first show that consistency of our proposed algorithm can be guaranteed. 
		Moreover, under certain mild restrictions on the underlying density functions and target clusters, even fast convergence rates can be achieved.
		Last but not least, comparisons with other state-of-the-art clustering methods in the numerical experiments
		demonstrate accuracy of our algorithm on both synthetic data and several benchmark real data sets.
	\end{abstract}

	\noindent%
	{\it Keywords:} 
	cluster analysis, nonparametric density estimation, purely random decision tree, random forest, ensemble learning, statistical learning theory
	\vfill
	\newpage
	\spacingset{1.45}
	
%
%

	\section{Introduction} \label{sec::Introduction}

	Regarded as one of the most basic tools to investigate statistical properties of unsupervised data, clustering aims to group a set of objects in such a way that objects in the same cluster are more similar in some sense to each other than to those in other clusters. 
	Typical application possibilities are to be found reaching from categorization of tissues in medical imaging to grouping internet searching results. For instance, on PET scans, cluster analysis can distinguish between different types of tissue in a three-dimensional image for many different purposes \citep{FILIPOVYCH20112185} while in the process of intelligent grouping of the files and websites, clustering algorithms create a more relevant set of search results \citep{DiMarco2013}. 
	Because of their wide applications, more urgent requirements for clustering algorithms that not only maintain desirable prediction accuracy but also have high computational efficiency are raised. In the literature, a wealth of algorithms have already been proposed
	such as $k$-means \citep{Macqueen67somemethods}, linkage \citep{JoeH1963Hierarchical, Sibson1973SLINK, Defays1977An}, cluster tree \citep{Stuetzle2003Estimating}, DBSCAN \citep{Ester1996A}, spectral clustering \citep{Donath1973Lower, Luxburg2007A}, and expectation-maximization for generative models \citep{Dempster1977Maximum}. 
	
	As is widely acknowledged, an open problem in cluster analysis is how to describe a conceptually and mathematically convincing definition of clusters appropriately. 
	In the literature, great efforts have been made to deal with this problem. 
	Perhaps the first definition dates back to \cite{Har1975}, which is known as the single-level density-based clustering assuming i.i.d.~data $D = (x_1, \dots, x_n)$ generated by some unknown distribution $\mathrm{P}$ that has a continuous density $h$ and the clusters of $\mathrm{P}$ are then defined to be the connected components of the level set $\{ h \geq \rho \}$ given some $\rho \geq 0$. Since then, different methods based on the estimator $\hat{h}$ and the connected components of \{$h \geq \rho\}$ have been established \citep{Cuevas1997A, Maier2012Optimal, Rigollet2006Generalization, Rinaldo2010GENERALIZED}.

	Note that the single-level approach mentioned above is easily shown to have a conceptual drawback that different values of $\rho$ may lead to different (numbers of) clusters, and there is also no general rule for choosing $\rho$. 
	In order to address this conceptual shortcoming, another type of the clustering algorithms, namely hierarchical clustering, where the hierarchical tree structure of the connected components for different levels $\rho$ is estimated, was proposed. Within this framework, instead of choosing some $\rho$, the so-called cluster tree approach tries to consider all levels and the corresponding connected components simultaneously. 
	It is worth pointing out that the advantage of using cluster tree approach lies in the fact that it mainly focuses on the identification of the hierarchical tree structure of the connected components for different levels. 
	For this reason, in the literature, there have already been many attempts to establish their theoretical foundations.
	For example, \cite{Hartigan1981Consistency} proved the consistency of 
	a hierarchical clustering method named single linkage merely for the one-dimensional case
	which becomes a more delicate problem that it is only fractionally consistent in the high-dimensional case. To address this problem, \cite{Chaudhuri10ratesof} proposed a modified single linkage algorithm which is shown to have finite-sample convergence rates as well as lower bounds on the sample complexity under certain assumptions on $h$. 
	Furthermore, \cite{Kpotufe_pruningnearest} obtained similar theoretical results with an underlying $k$-NN density estimator and achieved experimental improvement by means of
	a simple pruning strategy that removes connected components that artificially occur because of finite sample variability. 
	However, the notion of \emph{recovery} taken from \cite{Hartigan1981Consistency} falls short of only focusing on the correct estimation of the cluster tree structure and not on the estimation of the clusters itself, more details we refer to \cite{Rinaldo2010GENERALIZED}.

	So far, the theoretical foundations for hierarchical clustering algorithms such as consistency and learning rates of the existing hierarchical clustering algorithms are only valid for the cluster tree structure and therefore far from being satisfactory. As a result, in this paper, we proceed with the study of single-level density-based clustering. In the literature, recently, 
	various results for estimating the optimal level have already been established. 
	First of all, \cite{steinwart2011adaptive} and \cite{steinwart2015adaptive} presented algorithms based on histogram density estimators that are able to asymptotically determine the optimal level and automatically yield a consistent estimator for the target clusters.  Obviously, these algorithms are of little practical value since only the simplest possible density estimators are considered. Attempting to address this issue, \cite{sriperumbudur2012consistency} proposed a modification of the popular DBSCAN clustering algorithm. 
	Although consistency and optimal learning rates have been established
	for this new DBSCAN-type construction,
    the main difficulty in carrying out this algorithm
     is that it restricts the consideration only to moving window density estimators for $\alpha$-H\"{o}lder continuous densities. 
	In addition, it's worth 
    noticing that
	none of the algorithms mentioned above can be well adapted to the case where the underlying distribution possesses no split in the cluster tree.
	To tackle this problem, \cite{steinwart2017adaptive} proposed an adaptive algorithm using kernel density estimators which, however, also only performs well for low-dimensional data.

	In this paper, we mainly focus on clusters that are defined as the connected components of high density regions
	and present an algorithm called \emph{best-scored clustering forest} which can not only guarantee consistency and attain fast convergence rates, but also enjoy satisfactory performance in various numerical experiments. 
	To notify, the main contributions of this paper are twofold: 
	\emph{(i)} Concerning with the theoretical analysis, we prove that with the help of the best-scored random forest density estimator, our proposed algorithm can ensure consistency and achieve fast convergence rates under certain assumptions for the underlying density functions and target clusters. We mention that the convergence analysis is conducted within the framework established in \cite{steinwart2015adaptive}. To be more precise, 
	under properly chosen hyperparameters of 
	the best-scored random forest density estimator \cite{hang2018best},
     the consistency 
     of the best-scored clustering forest
     can be ensured. Moreover, under some additional regularization conditions, even fast convergence rates can be achieved.
	\emph{(ii)} When it comes to numerical experiments, we improve the original purely random splitting criterion by proposing an adaptive splitting method. Instead, at each step, we randomly select a sample point from the training data set and the to-be-split node is the one which this point falls in. The idea behind this procedure is that when randomly picking sample points from the whole training data set, nodes with more samples will be more likely to be chosen whereas nodes containing fewer samples are less possible to be selected. In this way, 
	the probability to obtain cells with sample sizes evenly distributed will be much greater. Empirical experiments further show that the adaptive/recursive method enhances the efficiency of our algorithm since it actually increases the \emph{effective} number of splits.

	The rest of this paper is organized as follows: Section \ref{sec::Preliminaries} introduces some fundamental notations and definitions related to the density level sets and best-scored random forest density estimator. 
	Section \ref{sec::AGenericClusteringAlgorithmBasedonBRDF} is dedicated to the exposition of the generic clustering algorithm architecture.
	We provide our main theoretical results and statements on the consistency and learning rates of the proposed best-scored clustering forest in Section \ref{sec::ConsistencyandRatesforRFDE-basedClustering},
	where the main analysis aims to verify that our best-scored random forest could provide level set estimator that has control over both its vertical and horizontal uncertainty. 
	Some comments and discussions on the established theoretical results will be also presented in this section. 
	Numerical experiments conducted upon comparisons between best-scored clustering forest and other density-based clustering methods are given in Section \ref{sec::ExperimentalPerformance}. 
	All the proofs of Section \ref{sec::AGenericClusteringAlgorithmBasedonBRDF} and  Section \ref{sec::ConsistencyandRatesforRFDE-basedClustering} can be found in Section \ref{sec::Proofs}. We conclude this paper with a brief discussion in the last section.
	

	\section{Preliminaries}    \label{sec::Preliminaries}

	In this section, we recall several basic concepts and notations related to clusters in the first subsection while in the second subsection we briefly recall the best-scored random forest density estimation proposed recently by \cite{hang2018best}. 
	

	\subsection{Density Level Sets and Clusters} \label{subsec::DensityLevelSetsandClusters}

	This subsection begins by introducing some basic notations and assumptions about density level sets and clusters. 
	Throughout this paper, 
	let $\mathcal{X} \subset \mathbb{R}^d$ be a compact and connected subset, $\mu := \lambda^d$ be the Lebesgue measure with $\mu(\mathcal{X}) > 0$. 
	Moreover, let $\mathrm{P}$ be a probability measure that is absolutely continuous with respect to $\mu$ 
	and possess a bounded density $f$ with support $\mathcal{X}$. 
	We denote the centered hypercube of $\mathbb{R}^d$ with side length $2r$ by $B_r$ where
	\begin{align*}
		B_r := \{ x = (x_1, \ldots, x_d) \in \mathbb{R}^d : x_i \in [-r, r], i = 1, \ldots, d \},
	\end{align*}
	and the complement of $B_r$ is written by $B_r^c := \mathbb{R}^d \setminus [-r, r]^d$.

	Given a set $A \subset \mathcal{X}$, we denote by $\mathring{A}$ its interior, $\bar{A}$ its closure, $\partial{A} = \bar{A} \setminus \mathring{A}$ its boundary, and 
	$\mathrm{diam}(A) := \sup_{x, x' \in A}  \|x - x'\|_2$ its diameter. 
	Furthermore, for a given $x$, $d(x,A)  :=  \inf_{x' \in A} \|x-x'\|_2$ denotes the distance between $x$ and $A$. 
	Given another set $B \subset \mathcal{X}$, we denote by
	$A \triangle B$ the symmetric difference between $A$ and $B$.
	Moreover, $\eins_A$ stands for the indicator function of the set $A$.

	We say that a function $f : \mathbb{R}^d \to \mathbb{R}$ is $\alpha$-H\"{o}lder continuous, if there exists a constant $c > 0$ such that
	\begin{align*}
		|f(x)-f(x')| \leq c \|x - x'\|_2^{\alpha},
		\qquad \qquad
		\alpha \in (0, 1].
	\end{align*}
	To mention, it can be apparently seen that $f$ is constant whenever $\alpha > 1$. 
	
	Finally, throughout this paper, we use the notation $a_n \lesssim b_n$ to denote that there exists a positive constant $c$ such that $a_n \leq c b_n$, for all $n \in \mathbb{N}$.
	
	\subsubsection{Density Level Sets} \label{subsec::DensityLevelSets}

	In order to find a notion of density level set which is topologically invariant against different choices of the density $f$ of the distribution $\mathrm{P}$, \cite{steinwart2011adaptive} proposes to define a density level set at level $\rho \geq 0$ by
	\begin{align*}
		M_{\rho} := \mathrm{supp} \, \mu_{\rho}
	\end{align*}
	where $\mathrm{supp} \, \mu_{\rho}$ stands for the support of $\mu_{\rho}$, and the measure $\mu_{\rho}$  is defined by 
	\begin{align*}
		\mu_{\rho}(A) := \mu(A \cap \{ f \geq \rho \}),
		\qquad \qquad  
		A \in \mathcal{B}(\mathcal{X}),
	\end{align*}
	where $\mathcal{B}(\mathcal{X})$ denotes the Borel $\sigma$-algebra of $\mathcal{X}$.
	According to the definition, the density level set $M_\rho$ should be closed. 
	If the density $f$ is assumed to be $\alpha$-H\"{o}lder continuous, the above construction could be replaced by the usual $\{f\geq\rho \}$ without changing our results. 

    \begin{figure*}[htbp]
    	\begin{minipage}[t]{0.99\textwidth}  
    		\centering  
    		\includegraphics[width=\textwidth]{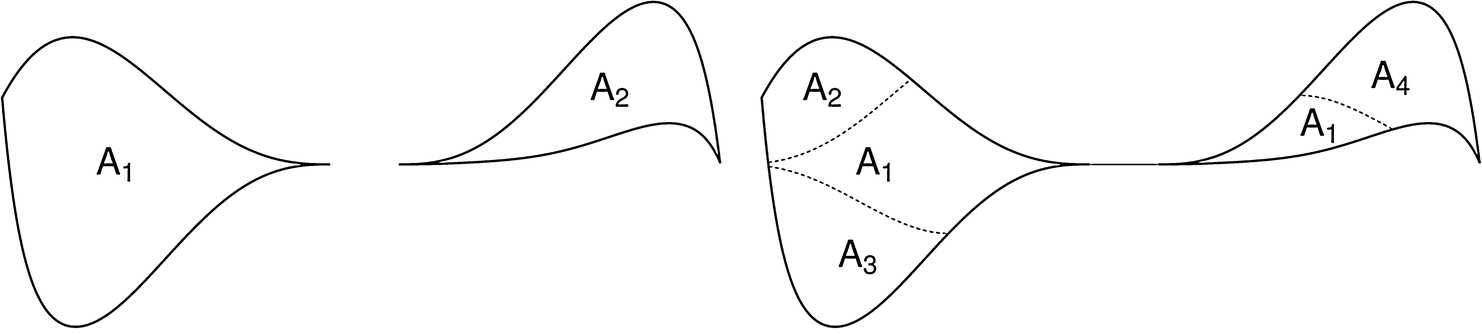}  
    	\end{minipage}  
    	\centering  
    	\caption{Topologically relevant changes on set of measure zero. 
    		Left: The thick solid lines indicate a set consisting of two connected components $A_1$ and $A_2$. The density of $\mathrm{P}$ is $f=c \eins_{A_1 \cup A_2}$ with $c$ being a suitable constant, then $A_1$ and $A_2$ are the two connected components of $\{ f \geq \rho\}$ for all $\rho \in [0, c]$. 
    		Right: This is a similar situation. The straight horizontal thin line indicates a line of measure zero connecting the two components, and the dashed lines indicate cuts of measure zero. In this case, the density of $\mathrm{P}$ is $f' = c \eins_{A_1 \cup A_2 \cup A_3 \cup A_4}$, then $A_1$, $A_2$, $A_3$, and $A_4$ are the four connected components of $\{ f' \geq \rho \}$ for all $\rho \in [0, c]$.}
    	\label{fig:deflevelset}
    \end{figure*}

	Here, some important properties of the sets $M_\rho$, $\rho\geq 0 $ are useful:
	\begin{enumerate}[(i)]
		\item 
	\emph{Level Sets}. 
	$\overline{\{f>\rho\}} \subset M_\rho \subset\{ f \geq\rho \}$
		\item 
	\emph{Monotonicity}. 
	$M_{\rho_2}\subset M_{\rho_1}$ for all $\rho_1\leq \rho_2$.
		\item 
	\emph{Regularity}. 
	$\mu (M_\rho \triangle \{ f \geq\rho\})=0$.
		\item 
	\emph{Normality}. 
	$\bar{M}_\rho=\dot{M}_\rho$, where $\bar{M}_\rho:=\bigcup_{\rho'>\rho}\, M_{\rho'}$ and $\dot{M}_\rho:=\bigcup_{\rho'>\rho}\, \mathring{M}_{\rho'}$.
		\item 
	\emph{Open Level Sets}. 
	$\bar{M}_\rho=\{ f >\rho\}$.
	\end{enumerate}

	\subsubsection{Comparison of Partitions and Notations of Connectivity} \label{subsubsec::ComparisonofPartitionsandNotationsofConnectivity}

	Before introducing the definition of clusters, some notions related to the connected components of level sets are in need. First of all, we give the definition that compares different partitions.

	\begin{definition}	\label{def::partition}
		Let $A, B \subset \mathcal{X}$ be nonempty sets with $A \subset B$,
		and $\mathcal{P}(A)$ and $\mathcal{P}(B)$ be partitions of $A$ and $B$, respectively. 
		Then $\mathcal{P}(A)$ is said to be \emph{comparable} to $\mathcal{P}(B)$, if for all $A' \in \mathcal{P}(A)$, there exists a $B' \in \mathcal{P}(B)$ such that $A'\subset B'$.
		In this case, we write $\mathcal{P}(A) \sqsubset \mathcal{P}(B)$.
	\end{definition}
	
	It can be easily deduced that $\mathcal{P}(A)$ is comparable to $\mathcal{P}(B)$, if no cell $A'\in \mathcal{P}(A)$ is broken into pieces in $\mathcal{P}(B)$. 
	Let $\mathcal{P}_1$ and $\mathcal{P}_2$ be two partitions of $A$, then we call $\mathcal{P}_1$ is \emph{finer} than $\mathcal{P}_2$ if and only if $\mathcal{P}_1\sqsubset \mathcal{P}_2$. Moreover, as is demonstrated in \cite{steinwart2015suppA}, for two partitions $\mathcal{P}(A)$ and $\mathcal{P}(B)$ with $\mathcal{P}(A)\sqsubset\mathcal{P}(B)$, there exits a unique map $\zeta:\mathcal{P}(A)\to \mathcal{P}(B)$ such that $A'\subset \zeta(A')$ for $A'\in \mathcal{P}(A)$. We call $\zeta$ the \emph{cell relating map (CRM)} between $A$ and $B$.
	
	Now, we give further insight into two vital examples of comparable partitions coming from connected components. Recall that an $A\subset \mathcal{X}$ is \emph{topologically connected} if, for every pair $A',A''\subset A$ of relatively closed disjoint subsets of $A$ with $A'\cup A''=A$, we have $A'=\emptyset$ or $A''=\emptyset$. The maximal connected subsets of $A$ are called the \emph{connected components} of $A$. As is widely acknowledged, these components make up a partition of $A$, and we denote it by $\mathcal{C}(A)$. Furthermore, for a closed $A\subset B$ with $|\mathcal{C}(B)|<\infty$, we have $\mathcal{C}(A) \sqsubset \mathcal{C}(B)$.
	
	The next example describes another type of connectivity, namely \emph{$\tau$-connectivity},
	which can be considered as
	 a discrete version of \emph{path-connectivity}. For the latter, let us fix a $\tau>0$ and $A\in \mathcal{X}$.
	 Then, $x,x'\in A$ are called \emph{$\tau$-connected} in $A$, if there exists $x_1,\ldots,x_n\in A$ such that $x_1=x$, $x_n=x'$ and $\|x_i-x_{i+1}\|_2<\tau$ for all $i=1,\ldots,n-1$. 
	 Clearly, being $\tau$-connected gives an equivalence relation on $A$. To be specific, the resulting partition can be written as $\mathcal{C}_{\tau}(A)$, and we call its cells the \emph{$\tau$-connected components} of $A$. 
	 It can be verified that, for all $A\subset B$ and $\tau>0$, we always have $C_\tau(A) \sqsubset C_\tau(B)$, see
	 Lemma A.2.7 in \cite{steinwart2015suppA}. In addition, if $|\mathcal{C}(A)|<\infty$, then we have $\mathcal{C}(A) = \mathcal{C}_\tau(A)$ for all sufficiently small $\tau > 0$, see Section 2.2 in \cite{steinwart2015adaptive}.

	\subsubsection{Clusters} \label{subsubsec::Clusters}

	Based on the concept established in the preceding subsections we now recall the definition of clusters, see also Definition 2.5 in \cite{steinwart2015suppA}.
	
	\begin{definition}[Clusters] \label{def::ClusterProperty}
		Let $\mathcal{X}\subset \mathbb{R}^d$ be a compact and connected set, and $\mathrm{P}$ be a $\mu$-absolutely continuous distribution. Then $\mathrm{P}$ \emph{can be clustered} between $\rho^*\geq 0$ and $\rho^{**}>\rho^*$, if $\mathrm{P}$ is normal and for all $\rho\in [0,\rho^{**}]$, the following three conditions are satisfied: 
		\begin{enumerate}[(i)]
			\item We have either $|\mathcal{C}(M_\rho)| = 1$ or $|\mathcal{C} (M_\rho)| = 2$;
			\item If we have $|\mathcal{C}(M_\rho)|=1$, then $\rho\leq \rho^*$;
			\item If we have $|\mathcal{C}(M_\rho)|=2$, then $\rho\geq \rho^*$ and $\mathcal{C}(M_{\rho^{**}})\sqsubset \mathcal{C}(M_\rho)$.
		\end{enumerate}
		Using the CRMs $\zeta_{\rho}:\mathcal{C}(M_{\rho^{**}})\to \mathcal{C}(M_{\rho})$; we then define the \emph{clusters} of $\mathrm{P}$ by
		\begin{align*}
			A_i^*=\bigcup_{\rho\in (\rho^*,\rho^{**}]} \zeta_\rho (A_i),\quad i\in\{1,2\},
		\end{align*}
		where $A_1$ and $A_2$ are the two topologically connected components of $M_{\rho^{**}}$. Finally, we define
		\begin{align}\label{eq::tau}
			\tau^*(\varepsilon):=\frac{1}{3}d((\zeta_{\rho^*+\varepsilon}(A_1),\zeta_{\rho^*+\varepsilon}(A_2)),
			\qquad \qquad 
			\varepsilon\in (0,\rho^{**}-\rho^{*}]. 
		\end{align}
	\end{definition}
	
	\begin{figure*}[htbp]
		\begin{minipage}[t]{0.99\textwidth}  
			\centering  
			\includegraphics[width=\textwidth]{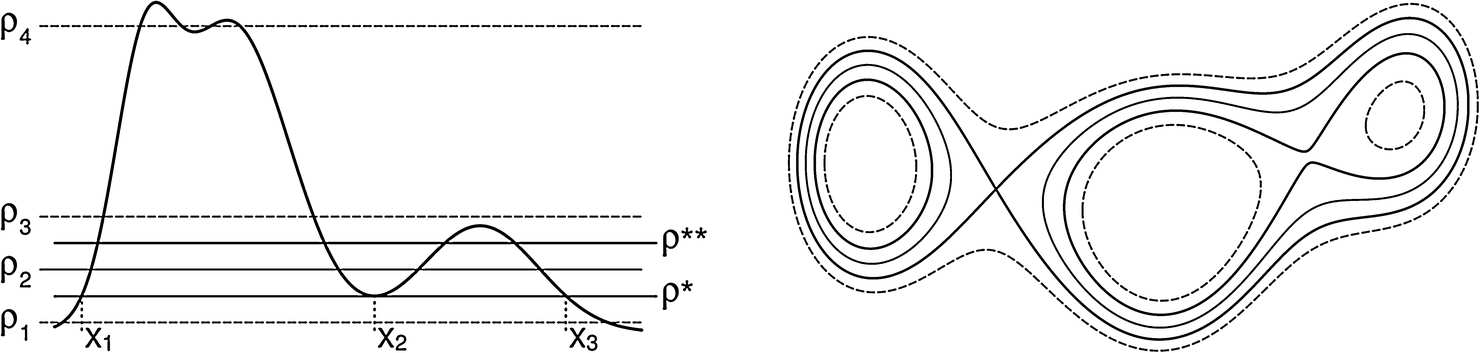}  
		\end{minipage}  
		\centering  
		\caption{Definition of clusters. 
		Left: A one-dimensional mixture of three Guassians with the optimal level $\rho^*$ and a possible choice of $\rho^{**}$. It is easily observed that the open intervals $(x_1, x_2)$ and $(x_2, x_3)$ are the two clusters of the distribution. We only have one connected component for the level $\rho_1 < \rho^*$ and the levels $\rho_3$ and $\rho_4$ are not considered in above definition. 
		Right: Here we have a similar situation for a mixture of three two-dimensional Gaussians drawn by contour lines. The thick solid lines indicate the levels $\rho^*$ and $\rho^{**}$, while the thin solid lines show a level in $(\rho^*, \rho^{**})$. The dashed lines correspond to a level $\rho \leq \rho^*$ and a level $\rho \geq \rho^{**}$. In this case, the clusters are the two connected components by the outer thick solid line.}
		\label{fig:defcluster}
	\end{figure*} 
	
	To illustrate, the above definition ensures that the level set below $\rho^*$ are connected, while there are exactly two components in the level sets for a certain range above $\rho^*$. To notify, any two level sets between this range are supposed to be comparable. As a result, the topological structure between $\rho^*$ and $\rho^{**}$ can be determined by that of $M_{\rho^{**}}$. In this manner, the connected components of $M_{\rho}$, $\rho\in (\rho^*,\rho^{**})$ can be numbered by the connected components of $M_{\rho^{**}}$. This numbering procedure can be clearly reflected from the definition of the clusters $A_i^*$ as well as that of the function $\tau^*$, which in essence measures the distance between the two connected components at level $\rho^*+\varepsilon$.

	Concerning that the quantification of uncertainty of clusters is indispensable, we need to introduce for $A\subset \mathcal{X}$, $\delta>0$, the sets
	\begin{align}
		A^{+\delta}&:=\{x\in \mathcal{X}:d(x,A)\leq \delta\},
		\nonumber\\
		A^{-\delta}&:=\mathcal{X}\setminus (\mathcal{X}\setminus A)^{+\delta}.
		\label{DeltaTubes}
	\end{align}
	In other words, $A^{+\delta}$ can be recognized as adding a $\delta$-tube to $A$, while $A^{-\delta}$ is treated as removing a $\delta$-tube from $A$. We are expected to avoid cases where the density level sets have bridges or cusps that are too thin. To be more precise, recall that for a closed $A\subset \mathbb{R}^d$, the function $\psi^*_A:(0,\infty)\to [0,\infty)$ is defined by
	\begin{align*}
		\psi^*_A(\delta):=\sup_{x\in A} d(x,A^{-\delta}), 
		\qquad \qquad 
		\delta > 0.
	\end{align*}
	Particularly, for all $\delta>0$, we have $\psi^*_A(\delta)\geq \delta$ for all $\delta>0$, and if $A^{-\delta}=\emptyset$, then $\psi^*_A(\delta)=\infty$. Consequently, according to Lemma A.4.3 in \cite{steinwart2015suppA}, for all $\delta>0$ with $A^{-\delta}\neq \emptyset$ and all $\tau>2\psi^*(\delta)$, we have
	\begin{align*}
		|\mathcal{C}_\tau(A^{-\delta})|\leq |\mathcal{C}(A)|,
	\end{align*}
	whenever $A$ is contained in some compact $\mathcal{X}\subset \mathbb{R}^d$ and $|\mathcal{C}(A)|< \infty$.
	
	With the preceding preparations, we now come to the following definition excluding bridges and cusps which are too thin.
	
	\begin{definition} \label{def::ThickLetSets}
		Let $\mathcal{X}\subset \mathbb{R}^d$ be a compact and connected set, and $\mathrm{P}$ be a $\mu$-absolutely continuous distribution  that is normal. Then we say that \emph{$\mathrm{P}$ has thick level sets of order $\gamma\in (0,1]$} up to the level $\rho^{**}>0$, if there exits constants $c_{\textit{thick}}\geq 1$ and $\delta_{\textit{thick}}\in (0,1]$ such that, for all $\delta\in (0,\delta_{\textit{thick}}]$ and $\rho\in [0,\rho^{**}]$, we have
		\begin{align*}
			\psi^*_{M_\rho}(\delta)
			\leq c_{\textit{thick}} \delta^{\gamma}.
		\end{align*}
		In this case, we call $\psi(\delta):=3c_{\textit{thick}}\delta^\gamma$ the thickness function of $\mathrm{P}$.
	\end{definition}

	\begin{figure*}[htbp]
		\begin{minipage}[t]{0.99\textwidth}  
			\centering  
			\includegraphics[width=\textwidth]{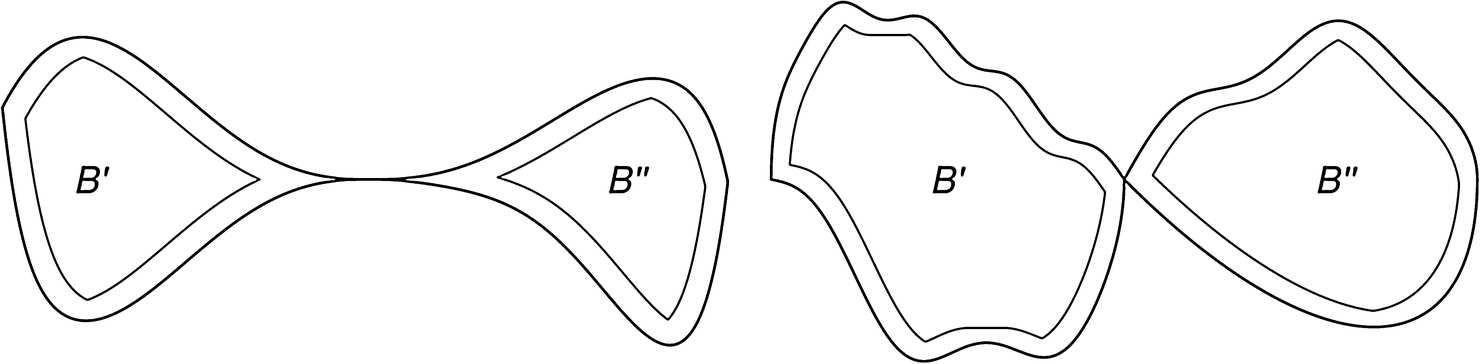}  
		\end{minipage}  
		\centering  
		\caption{Thick level sets. 
		Left: The thick solid line presents a level set $M_{\rho}$ below or at the level $\rho^*$ and the thin solid line indicates the two clusters $B'$ and $B''$ of $M_{\rho}^{-\delta}$. Since the quadratic shape of $M_{\rho}$ around the thin bridge, the distribution has thickness of order $\gamma = 1/2$. 
		Right: In the same situation, the distribution has thick level sets of order $\gamma = 1$. It is worth noting that smaller $\gamma$ leads to a significantly wider separation of $B'$ and $B''$.}
		\label{fig:thicklevelset}
	\end{figure*}

	In order to describe the distribution we wish to cluster, we now make the following assumption based on all concepts introduced so far.
	
	\begin{assumption} \label{ass::MainDistribution}
		The distribution $\mathrm{P}$ with bounded density $f$ is able to be clustered between $\rho^*$ and $\rho^{**}$. Moreover, $\mathrm{P}$ has thick level sets of order $\gamma\in (0,1]$ up to the level $\rho^{**}$. The corresponding thickness function is denoted by $\psi$ and the function defined in \eqref{eq::tau} is abbreviated as $\tau^*$.	
	\end{assumption}
	
	In the case that all level sets are connected, we introduce the following assumption
	to investigate the behavior of the algorithm in situations in which $\mathrm{P}$ cannot be clustered.
	
	\begin{assumption} \label{ass::MainDistributionaddi}
		Let $\mathcal{X}\subset \mathbb{R}^d$ be a compact and connected set, and $\mathrm{P}$ be a $\mu$-absolutely continuous distribution that is normal. 
		Assume that there exist constants $\rho_*\geq 0$, $\gamma \in (0,1]$, $c_{\textit{thick}}\geq 1$ and $\delta_{\textit{thick}}\in(0,1]$ such that for all $\rho \geq \rho_*$ and $\delta \in (0,\delta_{\textit{thick}}]$, the following conditions hold:
		\begin{itemize}
			\item[(i)] 
			$|\mathcal{C}(M_\rho)|\leq 1$.
			\item[(ii)] 
			If $M_\rho^{-\delta}\neq \emptyset$ then $\psi_{M_\rho}^*(\delta) \leq c_{\textit{thick}} \delta^{\gamma}$.
			\item[(iii)] 
			If $M_\rho^{-\delta}=\emptyset$, then  $|\mathcal{C}_{\tau}(A)| = 1$ for all non-empty $A\subset M_\rho^{+\delta}$ and all $\tau>2c_{\textit{thick}} \delta^\gamma$.
			\item[(iv)] 
			For each $\delta \in (0,\delta_{\textit{thick}}]$ there exists a $\rho \geq \rho_*$ with $M_{\rho}^{-\delta}=\emptyset$.
		\end{itemize}
	\end{assumption}

	\subsection{Best-scored Random Forest Density Estimation} \label{subsec::Best-scoredRandomForestDensityEstimation}

	Considering the fact that the density estimation should come first before the analysis on the level sets, we dedicate this subsection to the methodology of building an appropriate density estimator. Different from the usual histogram density estimation \citep{steinwart2015adaptive} and kernel density estimation \citep{steinwart2017adaptive}, this paper adopts a novel random forest-based density estimation strategy, namely the best-scored random forest density estimation proposed recently by \cite{hang2018best}.

	\subsubsection{Purely Random Density Tree} \label{subsubsec::PurelyRandomDensityTree}

	Recall that each tree in the best-scored random forest is established based on a purely random partition followed the idea of \cite{breiman2000some}. To give a clear description of one possible construction procedure of this purely random partition, we introduce the random vector $Q_i:=(L_i,R_i,S_i)$ as in \cite{hang2018best}, which represents the building mechanism at the $i$-th step. To be specific, 
	
		\begin{figure*}[htbp]
		\begin{minipage}[t]{0.99\textwidth}  
			\centering  
			\includegraphics[width=\textwidth]{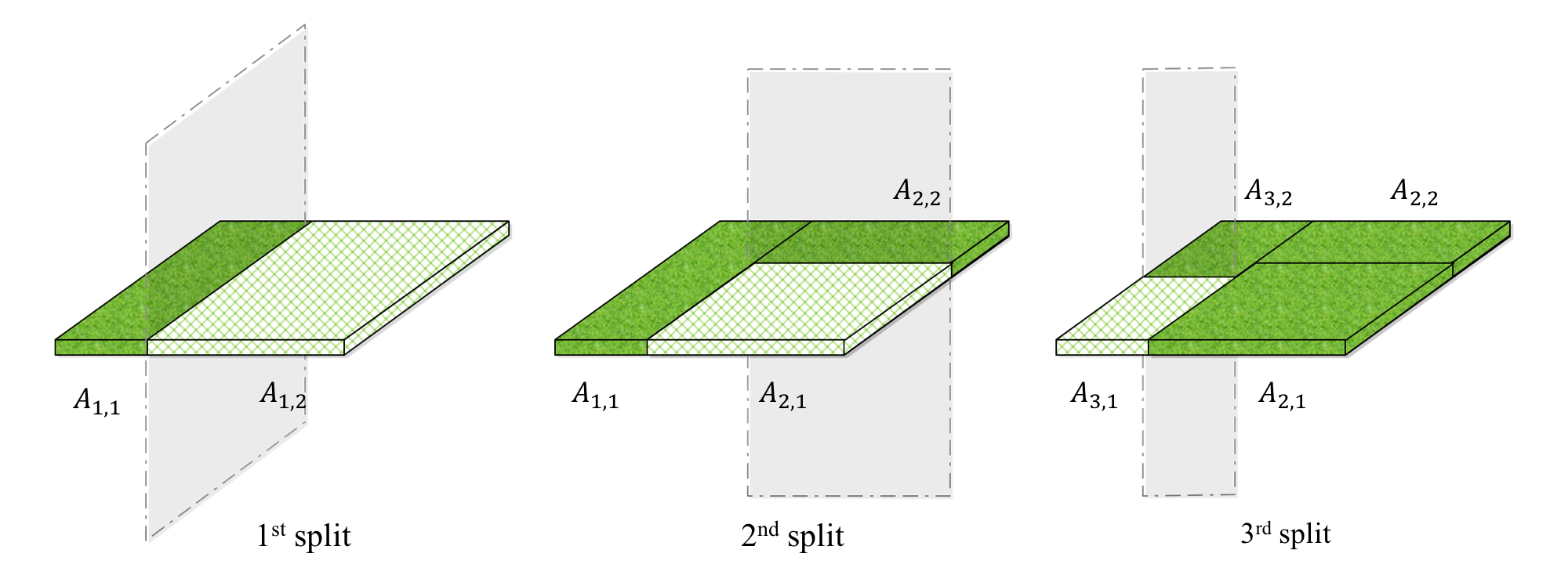}  
		\end{minipage}  
		\centering  
		\caption{Possible construction procedures of three-split axis-parallel purely random partitions in a two-dimensional space. The first split divides the input domain, e.g.~$B_r$ into two cells $A_{1, 1}$ and $A_{1, 2}$. Then, the to-be-split cell is chosen uniformly at random, say $A_{1, 2}$, and the partition becomes $A_{1,1}$, $A_{2,1}$, $A_{2,2}$ after the second random split. Finally, we once again choose one cell uniformly at random, say $A_{1, 1}$, and the third random split leads to a  partition consisting of $A_{2, 1}$, $A_{2,2}$, $A_{3, 1}$ and $A_{3, 2}$.}  
		\label{fig:ap}
	\end{figure*}

	\begin{itemize}
		\item 
	[$L_i$] denotes the to-be-split cell at the $i$-th step chosen uniformly at random from all cells formed in the $(i-1)$-th step; 
			\item 
	[$R_i$] $\in \{1, \ldots, d\}$ stands for the dimension chosen to be split from in the $i$-th step where each dimension has the same probability to be selected, that is, $\{ R_i, i \in \mathbb{N}_{+} \}$ are i.i.d.~multinomial distributed with equal probabilities; 
			\item 
	[$S_i$] is a proportional factor standing for the ratio between the length of the newly generated cell in the $R_i$-th dimension after the $i$-th split and the length of the being-cut cell $L_i$ in the $R_i$-th dimension. We emphasize that $\{S_i, i \in \mathbb{N}_{+}\}$ are i.i.d.~drawn from the uniform distribution $\mathrm{Unif}[0, 1]$.
\end{itemize}

	In this manner, the above splitting procedure leads to a so-called partition variable $Z:=(Q_1,\ldots, Q_p,\ldots) \in \mathcal{Z}$ with probability measure of $Z$ denoted by $\mathrm{P}_Z$, and any specific partition variable $Z \in \mathcal{Z}$ can be treated as a splitting criterion. Moreover, for the sake of notation clarity, we denote by $\mathcal{A}_{(Q_1, \ldots Q_p)}$ the collection of non-overlapping cells formed after conducting $p$ splits on $B_r$ following $Z$. This can be further abbreviated as $\mathcal{A}_{Z, p}$ which exactly represents a random partition on $B_r$. Accordingly, we have $\mathcal{A}_{Z, 0} := B_r$, and for certain sample $x \in B_r$, the cell where it falls is denoted by $A_{Z, p}(x)$.

	In order to better characterize the purely random density tree, we give another expression of the random partition on $B_r$, which is $\mathcal{A}_{Z, p}:=\{A_j, j=1,\ldots,p\}$ where $A_j$ represents one of the resulting cells of this partition. 
	Based on this partition, we can build the random density tree with respect to probability measure $Q$ on $\mathbb{R}^d$, denoted as $f_{Q, Z, p}: \mathbb {R}^d \to [0, \infty)$, defined by
	\begin{align*}
		f_{Q, Z}(x) := f_{Q, Z, p}(x):= \sum_{j=0}^p \frac{Q(A_j)\eins_{A_j}(x)}{\mu(A_j)} + \frac{Q(B_r^c)\eins_{B_r^c}(x)}{\mu(B_r^c)}
	\end{align*}
	where unless otherwise stated, we assume that for all $A \in \mathcal{A}_{Z, p}$, the Lebesgue measure $\mu(A)>0$. In this regard, when taking $Q = \mathrm{P}$, the density tree decision rule becomes
	\begin{align*}
		f_{\mathrm{P}, Z}(x) = \frac{\mathrm{P}(A(x))}{\mu(A(x))}=\frac{1}{\mu(A(x))} \int_{A(x)} f(x') d\mu(x),
		\qquad \qquad
		x \in B_r, 
	\end{align*}
	where $A(x) := A_j$. When taking $Q$ to be the empirical measure $D_n := \frac{1}{n}\sum_{i=1}^n \delta_{x_i}$, we obtain
	\begin{align*}
		D_n(A(x)) = \mathbb{E}_{D_n}\eins_{A(x)} 
		=\frac{1}{n}\sum_{i=1}^n \delta_{x_i}(A(x)) 
		=\frac{1}{n}\sum_{i=1}^n \eins_{A(x)}(x_i),
	\end{align*}
	and hence the density tree turns into
	\begin{align} \label{RandomDensityTree}
		f_{D, Z}:=f_{D_n, Z}(x) = \frac{D_n(A(x))}{\mu(A(x))} 
		= \frac{1}{n \mu(A(x))} \sum_{i=1}^n \eins_{A(x)}(x_i).
	\end{align}

	\subsubsection{Best-scored Random Density Trees and Forest} \label{subsubsec::Best-scoredRandomDensityTreesandForest}

	Considering the fact that the above partitions completely make no use of the sample information, the prediction results of their ensemble forest may not be accurate enough. 
	In order to improve the prediction accuracy, we select one partition for tree construction out of $k$ candidates with the best density estimation performance according to certain 
	performance measure such as \emph{ANLL} \citep[Section 5.4]{hang2018best}.
	The resulting trees are then called the \emph{best-scored random density trees}.
	
	Now, let $f_{D,Z_t}$, $1\leq t\leq m$ be the $m$ best-scored random density tree estimators generated by the splitting criteria $Z_1,\dots,Z_m$ respectively, which is defined by
	\begin{align*}
		f_{D,Z_t}(x) 
		:=\sum^p_{j=0} \frac{D(A_{tj})1_{A_{t_j}}(x)}{\mu(A_{t_j})}+\frac{D(B_r^c)1_{B_r^c}(x)}{\mu(B_r^c)}
	\end{align*}
	where $\mathcal{A}_{Z_t}:=\{A_{tj},\, j=0,\dots,p\}$ is a random partition of $B_r$. Then the best-scored random density forest can be formulated by
	\begin{align}\label{BRF}
		f_{D,Z_{\mathrm{E}}}(x):=\frac{1}{m} \sum^m_{t=1} f_{D,Z_t}(x),
	\end{align}
	and its population version is denoted by $f_{\mathrm{P},Z_{\mathrm{E}}}$.

	\section{A Generic Clustering Algorithm} \label{sec::AGenericClusteringAlgorithmBasedonBRDF}

	In this section, we present a generic clustering algorithm, where the clusters are
	estimated with the help of a generic level set estimator which can be specified later by histogram, kernel, or random forest density estimators.
	To this end, 
	let the optimal level $\rho^*$ and the resulting clusters $A_i^*$, $i = 1, 2$ for distributions
	be as in Definition \ref{def::ClusterProperty}, and the constant $\rho_*$ be as in Assumption \ref{ass::MainDistributionaddi}.
	The goal of this section is to investigate whether $\rho^*$ or $\rho_*$
	is possible to be estimated and $A_i^*$, $i = 1, 2$ can be clustered.
	
	Let us first recall some more notations introduced in Section \ref{sec::Preliminaries}. 
	For a $\mu$-absolutely continuous distribution $\mathrm{P}$, let
	the level $\rho^{**}$, the level set $M_{\rho}$, $\rho \geq 0$, 
	and  the function $\tau^*$
	be as in Definition \ref{def::ClusterProperty}.
	Furthermore, for a fixed set $A$, its $\delta$-tubes $A^{-\delta}$ and $A^{+\delta}$ are defined by \eqref{DeltaTubes}.
	 Moreover, 
	concerning with the thick level sets,
	the constant $\delta_{\text{thick}}$ and the function $\psi(\delta)$ are introduced 
	by Definition \ref{def::ThickLetSets}.

	In what follows, let $(L_\rho)_{\rho\geq 0}$ always be a decreasing family of sets $L_\rho\subset \mathcal{X}$ such that
	\begin{equation} \label{UncertaintyControl}
	M_{\rho+\varepsilon}^{-\delta} \subset L_{\rho} \subset M_{\rho-\varepsilon}^{+\delta}
	\end{equation}
	holds for all $\rho \in [0, \rho^{**}]$.

	The following theorem relates the component structure of a family of level sets estimators $L_\rho$, which is a decreasing family of subsets of $\mathcal{X}$, to the component structure of certain sets $M_{\rho+\varepsilon}^{-\delta}$,
	more details see e.g., \cite{steinwart2015adaptive}.
	\begin{theorem}  \label{thr::taoCondition}
		Let Assumption \ref{ass::MainDistribution} hold.
		Furthermore, 
		for $\varepsilon^*\in (0,\rho^{**}-\rho^*)$, let 
		$\varepsilon \in (0,\varepsilon^*]$,
		$\delta\in (0,\delta_{\text{thick}}]$, 
		$\tau \in (\psi(\delta),\tau^*(\varepsilon^*))$,
		and $(L_\rho)_{\rho\geq 0}$ be as in \eqref{UncertaintyControl}.
		Then, for all $\rho\in [0,\rho^{**}-3\varepsilon]$ and the corresponding CRMs $\zeta:\mathcal{C}_{\tau}(M_{\rho+\varepsilon}^{-\delta})\to \mathcal{C}_{\tau}(L_\rho)$, the following disjoint union holds:
		\begin{align*}\label{the:1}
			\mathcal{C}_\tau(L_\rho)=\zeta(\mathcal{C}_\tau(M_{\rho+\varepsilon}^{-\delta}))\cup \{B'\in \mathcal{C}_\tau(L_\rho):B'\cap L_{\rho+2\varepsilon}=\varnothing\}.
		\end{align*}
	\end{theorem}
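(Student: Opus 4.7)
The plan is to prove three statements: existence of the CRM $\zeta$, the set-theoretic equality, and disjointness of the two summands on the right-hand side.

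First I would establish the CRM $\zeta$. Applying \eqref{UncertaintyControl} at $\rho$ gives $M_{\rho+\varepsilon}^{-\delta}\subset L_\rho$. Since the $\tau$-component relation $\sqsubset$ is preserved under inclusion (Lemma A.2.7 in \cite{steinwart2015suppA}, quoted in Section \ref{subsubsec::ComparisonofPartitionsandNotationsofConnectivity}), this yields $\mathcal{C}_\tau(M_{\rho+\varepsilon}^{-\delta})\sqsubset\mathcal{C}_\tau(L_\rho)$, which is exactly what is needed to define $\zeta$.

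Next I would prove the set equality. The inclusion ``$\supset$'' is immediate since both summands on the right-hand side are subsets of $\mathcal{C}_\tau(L_\rho)$. For ``$\subset$'', fix $B'\in\mathcal{C}_\tau(L_\rho)$; if $B'\cap L_{\rho+2\varepsilon}=\varnothing$, it already lies in the second summand. Otherwise, pick $x\in B'\cap L_{\rho+2\varepsilon}$. Applying \eqref{UncertaintyControl} at level $\rho+2\varepsilon$ gives $x\in M_{\rho+\varepsilon}^{+\delta}$, so some $y\in M_{\rho+\varepsilon}$ satisfies $\|x-y\|_2\leq\delta$. The thickness bound $\psi^*_{M_{\rho+\varepsilon}}(\delta)\leq c_{\textit{thick}}\delta^{\gamma}$ produces $z\in M_{\rho+\varepsilon}^{-\delta}$ with $\|y-z\|_2\leq c_{\textit{thick}}\delta^{\gamma}$. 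Since $\delta\leq 1$ and $c_{\textit{thick}}\geq 1$ give $\delta\leq c_{\textit{thick}}\delta^{\gamma}$, the triangle inequality yields $\|x-z\|_2\leq 2c_{\textit{thick}}\delta^{\gamma}<\psi(\delta)<\tau$. As $z\in M_{\rho+\varepsilon}^{-\delta}\subset L_\rho$, $x$ and $z$ are $\tau$-connected inside $L_\rho$, so $z\in B'$. Letting $A'\in\mathcal{C}_\tau(M_{\rho+\varepsilon}^{-\delta})$ be the $\tau$-component containing $z$, one has $A'\subset B'$ and thus $\zeta(A')=B'$.

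Finally, for the disjointness, I need to show that if $B'=\zeta(A')$ for some $A'\in\mathcal{C}_\tau(M_{\rho+\varepsilon}^{-\delta})$, then $B'\cap L_{\rho+2\varepsilon}\neq\varnothing$. Since $A'\subset B'$ and \eqref{UncertaintyControl} at $\rho+2\varepsilon$ gives $M_{\rho+3\varepsilon}^{-\delta}\subset L_{\rho+2\varepsilon}$, it suffices to exhibit a point in $A'\cap M_{\rho+3\varepsilon}^{-\delta}$. This is the crux of the argument. By Lemma A.4.3 in \cite{steinwart2015suppA} applied with $\tau>\psi(\delta)=3c_{\textit{thick}}\delta^{\gamma}>2\psi^*_{M_{\rho+\varepsilon}}(\delta)$, we get $|\mathcal{C}_\tau(M_{\rho+\varepsilon}^{-\delta})|\leq|\mathcal{C}(M_{\rho+\varepsilon})|$, and the analogous bound holds at level $\rho+3\varepsilon$; Assumption \ref{ass::MainDistribution} plus $\rho+3\varepsilon\leq\rho^{**}$ restricts both counts to at most two. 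When $|\mathcal{C}(M_{\rho+\varepsilon})|=1$, thickness ensures $M_{\rho+3\varepsilon}^{-\delta}\neq\varnothing$ and it is contained in the unique $\tau$-component $A'$. When $|\mathcal{C}(M_{\rho+\varepsilon})|=2$, I would invoke the cluster comparability $\mathcal{C}(M_{\rho^{**}})\sqsubset\mathcal{C}(M_{\rho+\varepsilon})$ of Definition \ref{def::ClusterProperty} together with the separation $\tau<\tau^*(\varepsilon^*)$ to match each $\tau$-component of $M_{\rho+\varepsilon}^{-\delta}$ to a distinct cluster $A_j^*$, inside which thickness again produces a non-empty piece of $M_{\rho+3\varepsilon}^{-\delta}$. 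This two-component matching, which has to synchronise topological components at three different levels with $\tau$-components of the peeled sets, is the step I expect to be the main obstacle; it is driven entirely by the structural hypotheses (Assumption \ref{ass::MainDistribution}) but requires careful bookkeeping.
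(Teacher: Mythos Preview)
The paper does not actually prove this theorem. It is stated as a quoted result, with the sentence immediately preceding it pointing to \cite{steinwart2015adaptive} for ``more details'', and no argument for it appears in Section~\ref{sec::Proofs}. So there is nothing in the paper to compare your proposal against; you are effectively reconstructing the proof from the cited source.

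As a reconstruction, your outline is largely sound. The existence of the CRM and the inclusion ``$\subset$'' are correct as written; the chain $x\in L_{\rho+2\varepsilon}\subset M_{\rho+\varepsilon}^{+\delta}$, then thickness to reach $z\in M_{\rho+\varepsilon}^{-\delta}$ with $\|x-z\|_2<\psi(\delta)<\tau$, is exactly the right mechanism. For disjointness, the one-component case is fine: Lemma~A.4.3 forces $\mathcal{C}_\tau(M_{\rho+\varepsilon}^{-\delta})$ to be the single set $M_{\rho+\varepsilon}^{-\delta}$, which contains the non-empty peel $M_{\rho+3\varepsilon}^{-\delta}$. The two-component case, which you flag as the obstacle, is genuinely where the work lies, and your sketch is not yet a proof. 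In particular, you implicitly need that each $\tau$-component $A'$ of $M_{\rho+\varepsilon}^{-\delta}$ lies inside a single topological component of $M_{\rho+\varepsilon}$; this would follow from $d(C_1,C_2)\geq\tau$, but $\tau<\tau^*(\varepsilon^*)$ does not give that directly since $\rho+\varepsilon-\rho^*$ can be smaller than $\varepsilon^*$. The argument in \cite{steinwart2015suppA} handles this via additional structural lemmas synchronising $\tau$-components across levels, so your instinct that ``careful bookkeeping'' is required is correct, but the bookkeeping is not yet on the page.
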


From
	 Theorem \ref{thr::taoCondition} we see that for suitable $\varepsilon$, $\delta$, and $\tau$, all $\tau$-connected components $B'$ of $L_{\rho}$ are either contained in $\zeta(\mathcal{C}_\tau(M_{\rho+\varepsilon}^{-\delta}))$, or vanish at level $\rho+2\varepsilon$. Accordingly, carrying out these steps precisely, we obtain a generic clustering strategy shown in Algorithm \ref{alg::clustering}.

	\begin{algorithm}
		\caption{Estimate clusters with the help of a generic level set estimator}	
		\label{alg::clustering}	
		\KwIn{
			some $\tau>0$, $\varepsilon>0$ and a start level $\rho_0\geq 0$. A decreasing family $(L_{\rho})_{\rho \geq 0}$ of subsets of X.
		}
		$\rho = \rho_0$\\
		\Repeat
		{$M\neq 1$}
		{
			Identify the $\tau$-connected components $B_1',\dots,B_M'$ of $L_{\rho}$ satisfying
			\begin{align*}
				B_i' \cap L_{\rho+2\varepsilon}\neq \emptyset
			\end{align*}
			$\rho=\rho+\varepsilon$
		}
		$\rho=\rho+2\varepsilon$\\
		Identify the $\tau$-connected components $B_1',\dots,B_M'$ of $L_{\rho}$ satisfying
		\begin{align*}
			B_i'\cap L_{\rho+2\varepsilon}\neq \emptyset
		\end{align*}
		\If {$M > 1$} 
		{\Return {$\rho_{\textit{out}}=\rho$ and the sets $B_i'$ for $i=1,\ldots,M$.}}
		\Else {\Return {$\rho_{\textit{out}}=\rho_0$ and the set $L_{\rho_0}$.}}
		\KwOut{An estimator of $\rho_*$ or $\rho^*$ the corresponding clusters.}
	\end{algorithm}
	
	\begin{figure*}[htbp]
		\begin{minipage}[t]{0.99\textwidth}  
			\centering  
			\includegraphics[width=\textwidth]{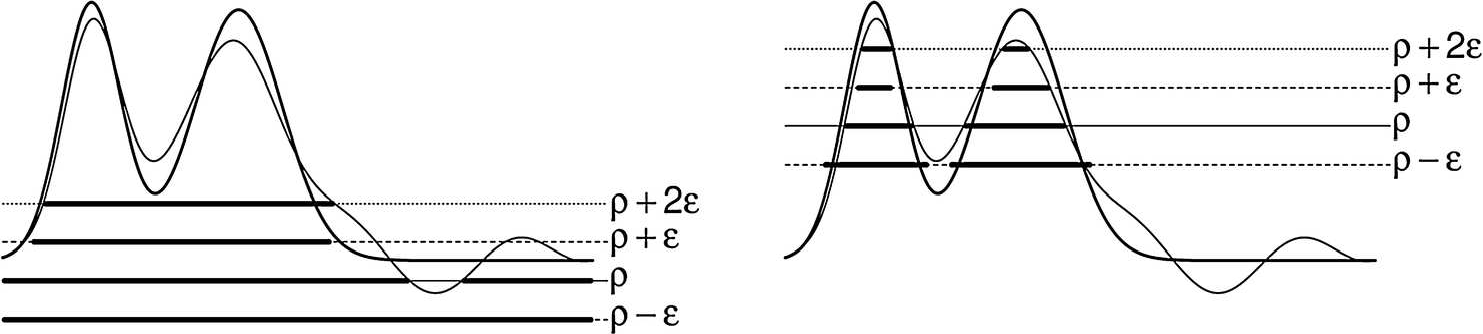}  
		\end{minipage}  
		\centering  
		\caption{Illustration of Algorithm \ref{alg::clustering}. 
		Left: The density presented by solid line has two modes on the left and a flat part on the right. A plug in approach based on a density estimator (thin solid line) with three modes is used to provide the level set estimator. The level set estimator $L_{\rho}$ satisfies \eqref{UncertaintyControl}. Only the left two components of $L_{\rho}$ do not vanish at $\rho+2\varepsilon$. Therefore, the algorithm only finds one component. 
		Right: We consider the same distribution at a higher level. In this case, both components of $L_\rho$ do not vanish at $\rho+2\varepsilon$ and thus the algorithm correctly identifies two connected components. }
		\label{fig:illualg}
	\end{figure*} 
	
	Under Assumptions \ref{ass::MainDistribution} and \ref{ass::MainDistributionaddi},
	the following theorem bounds the level $\rho_{\textit{out}}$ and the components $B_i(D)$, $i = 1, 2$, and the start level $\rho_0$ and the corresponding single cluster $M_{\rho_0} =: L_0$,
	respectively, which are outputs returned by Algorithm \ref{alg::clustering}.

\begin{theorem} \label{thr::LevelSetBound}
\textit{(i)}
Let Assumption \ref{ass::MainDistribution} hold. 
		For $\varepsilon^*\leq (\rho^{**}-\rho^*)/9$,
		let $\varepsilon \in (0,\varepsilon^*]$,
		$\delta \in (0, \delta_{\textit{thick}}]$, 
		$\tau \in (\psi(\delta),\tau^*(\varepsilon^*)]$, and
		$(L_\rho)_{\rho\geq 0}$ satisfy \eqref{UncertaintyControl} for all $\rho \geq \rho_0$.
		Then, for any data set $D$,
		 the following statements hold for Algorithm \ref{alg::clustering}:
			\begin{itemize}
				\item[(a)] The returned level $\rho_{\textit{out}}$ satisfies both $\rho_{\textit{out}}\in [\rho^*+2\varepsilon,\rho^*+\varepsilon^*+5\varepsilon]$ and
				\begin{equation*}
					\tau-\psi(\delta)<3\tau^*(\rho_{\textit{out}}-\rho^*+\varepsilon);
				\end{equation*}
				\item[(b)] The returned sets $B_i(D)$, $i = 1, 2$, can be ordered such that 
				\begin{equation} \label{equ:6}
				\sum^2_{i=1}\mu(B_i(D)\triangle A_i^*)\leq 2\sum^2_{i=1}\mu(A_i^*\setminus (A_{\rho_{\textit{out}}+\varepsilon}^i)^{-\delta})+\mu(M_{\rho_{\textit{out}}-\varepsilon}^{+\delta}\setminus \{f>\rho^*\}).
				\end{equation}
				Here, $A_{\rho_{\textit{out}}+\varepsilon}^i\in\mathcal{C}(M_{\rho_{\textit{out}}+\varepsilon})$, $i = 1, 2$, are ordered in the sense of $A_{\rho_{\textit{out}}+\varepsilon}^i\subset A_i^*$. 
				\end{itemize}
\textit{(ii)}
Let Assumption \ref{ass::MainDistributionaddi} hold. Moreover,
let $\varepsilon > 0$, $\delta \in (0, \delta_{\textit{thick}}]$ be fixed, $\tau > 2 c_{\textit{thick}} \delta^\gamma$,
and $(L_{\rho})_{\rho\geq 0}$ satisfy \eqref{UncertaintyControl} for all $\rho \geq \rho_0$. 
If $\rho_0 \geq \rho_*$,  then Algorithm \ref{alg::clustering} returns 
the start level $\rho_0$ and
the corresponding single cluster $L_0:=M_{\rho_0}$ such that
\begin{align*}
\mu(L_{\rho_0} \triangle \widehat{M}_{\rho_{*}}) \leq \mu(M_{\rho_{0}-\varepsilon}^{+\delta} \backslash \widehat{M}_{\rho_{*}})+\mu\left(\widehat{M}_{\rho_{*}} \backslash M_{\rho_{0}+\varepsilon}^{-\delta}\right)
\end{align*}
where $\widehat{M}_{\rho_{*}}:=\bigcup_{\rho>\rho_{*}} M_{\rho_{*}}$.
\end{theorem}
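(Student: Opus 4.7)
The proof naturally splits into the three claims---the level bound and $\tau^*$-inequality in (i)(a), the symmetric-difference bound in (i)(b), and the single-cluster case in (ii). In all three, the central engine will be Theorem \ref{thr::taoCondition}, which translates between $\tau$-connected components of the estimator $L_\rho$ and those of $M_{\rho+\varepsilon}^{-\delta}$.

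For part (i)(a), I would let $\tilde{\rho}$ denote the first level in the Repeat loop at which $M\geq 2$ non-vanishing $\tau$-connected components are identified, so that $\rho_{\textit{out}}=\tilde{\rho}+3\varepsilon$. The lower bound $\rho_{\textit{out}}\geq\rho^*+2\varepsilon$ would then follow by applying Theorem \ref{thr::taoCondition} at $\rho=\tilde{\rho}$ to obtain $|\mathcal{C}_\tau(M_{\tilde{\rho}+\varepsilon}^{-\delta})|\geq 2$, combining with Lemma A.4.3 of \cite{steinwart2015suppA} (applicable since $\tau>\psi(\delta)\geq 2\psi^*_{M_\rho}(\delta)$) to get $|\mathcal{C}(M_{\tilde{\rho}+\varepsilon})|\geq 2$, and invoking Definition \ref{def::ClusterProperty}(ii) to force $\tilde{\rho}+\varepsilon\geq\rho^*$. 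For the upper bound $\rho_{\textit{out}}\leq\rho^*+\varepsilon^*+5\varepsilon$ I would argue by contradiction: if the loop were still running at $\rho=\rho^*+\varepsilon^*+2\varepsilon$, then the two components of $M_{\rho+\varepsilon}$, separated by at least $3\tau^*(\varepsilon^*)>\tau$, would produce two non-empty $\tau$-separated $\tau$-connected components of $M_{\rho+\varepsilon}^{-\delta}$ under the thickness assumption, which by Theorem \ref{thr::taoCondition} contradicts $M\leq 1$. The strict $\tau^*$-inequality encodes the actual cause of termination: at the step preceding $\tilde{\rho}$ the thinned cluster components were still merged into one $\tau$-connected component of $M_{\tilde{\rho}}^{-\delta}$, so their separation must be less than $\tau$ and, after accounting for the $\delta$-tube through $\psi(\delta)$, yields $\tau-\psi(\delta)<3\tau^*(\rho_{\textit{out}}-\rho^*+\varepsilon)$.

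For part (i)(b), I would decompose $B_i(D)\triangle A_i^*=(A_i^*\setminus B_i(D))\cup(B_i(D)\setminus A_i^*)$ and bound the two pieces separately. Theorem \ref{thr::taoCondition} combined with \eqref{UncertaintyControl}, after ordering the $B_i(D)$ by the natural correspondence $A^i_{\rho_{\textit{out}}+\varepsilon}\subset A_i^*$, gives $(A^i_{\rho_{\textit{out}}+\varepsilon})^{-\delta}\subset B_i(D)$ and hence $\mu(A_i^*\setminus B_i(D))\leq\mu(A_i^*\setminus(A^i_{\rho_{\textit{out}}+\varepsilon})^{-\delta})$. For the excess I would further split $B_i(D)\setminus A_i^*=(B_i(D)\cap A_j^*)\cup(B_i(D)\cap\{f\leq\rho^*\})$ with $j\neq i$, using $A_1^*\cup A_2^*=\{f>\rho^*\}$; the first piece sits inside $A_j^*\setminus B_j(D)\subset A_j^*\setminus(A^j_{\rho_{\textit{out}}+\varepsilon})^{-\delta}$ by disjointness of the $B_i(D)$, while the second is contained in $L_{\rho_{\textit{out}}}\setminus\{f>\rho^*\}\subset M_{\rho_{\textit{out}}-\varepsilon}^{+\delta}\setminus\{f>\rho^*\}$ by \eqref{UncertaintyControl}. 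Summing over $i\in\{1,2\}$ yields the factor of $2$ on the first term of \eqref{equ:6} and a single copy of the second.

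For part (ii), Assumption \ref{ass::MainDistributionaddi}(i) and (ii) together with Lemma A.4.3 of \cite{steinwart2015suppA} force $|\mathcal{C}_\tau(M_\rho^{-\delta})|\leq 1$ whenever $M_\rho^{-\delta}\neq\emptyset$ and $\rho\geq\rho_*$, while condition (iii) keeps $|\mathcal{C}_\tau(L_\rho)|\leq 1$ even when $M_\rho^{-\delta}=\emptyset$, since $L_\rho\subset M_{\rho-\varepsilon}^{+\delta}$. Condition (iv) guarantees that eventually $M_\rho^{-\delta}=\emptyset$, at which point the non-vanishing count drops to $M=0$ and the algorithm exits into the Else branch, returning $\rho_0$ and $L_{\rho_0}$. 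The symmetric-difference bound then follows immediately from \eqref{UncertaintyControl} at level $\rho_0$, since $L_{\rho_0}\setminus\widehat{M}_{\rho_*}\subset M_{\rho_0-\varepsilon}^{+\delta}\setminus\widehat{M}_{\rho_*}$ and $\widehat{M}_{\rho_*}\setminus L_{\rho_0}\subset\widehat{M}_{\rho_*}\setminus M_{\rho_0+\varepsilon}^{-\delta}$. The main obstacle will be the upper bound in (i)(a) together with its $\tau^*$-inequality, where the interplay between $\varepsilon$, $\psi(\delta)$, $\tau$, and $\tau^*$ must be calibrated so that the $\delta$-thinning followed by the $\tau$-connectivity test detects the split at the correct level without missing it early or losing the separation late.
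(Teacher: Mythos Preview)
The paper does not supply its own proof of this theorem. Theorem \ref{thr::LevelSetBound} is stated as a direct import of the generic clustering analysis of \cite{steinwart2015adaptive} (cf.~Theorems 3.1, 3.3 and 3.5 there, together with Section A.9 of \cite{steinwart2015suppA}); in Section \ref{sec::Proofs} the paper only verifies the hypotheses of Theorem \ref{thr::LevelSetBound} in the course of proving Theorem \ref{thr::BoundProbability}, and invokes it again in the proof of Theorem \ref{the::Consistency}. So there is no in-paper argument to compare against.

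That said, your outline reproduces precisely the mechanism used in \cite{steinwart2015adaptive}: Theorem \ref{thr::taoCondition} translates the non-vanishing $\tau$-components of $L_\rho$ into $\mathcal{C}_\tau(M_{\rho+\varepsilon}^{-\delta})$, the thickness bound via Lemma A.4.3 of \cite{steinwart2015suppA} then ties the latter back to $\mathcal{C}(M_{\rho+\varepsilon})$, and Definition \ref{def::ClusterProperty} pins down when that count can exceed one. The decomposition in (i)(b) and the sandwich in (ii) are also the standard ones. Two places where you should be explicit when writing out the details: first, the passage from ``$M$ non-vanishing components'' to ``$|\mathcal{C}_\tau(M_{\rho+\varepsilon}^{-\delta})|\geq M$'' needs the injectivity of the CRM $\zeta$ in Theorem \ref{thr::taoCondition}, which Steinwart proves separately (see Corollary A.8.2 in \cite{steinwart2015suppA}); second, the $\tau^*$-inequality in (i)(a) requires tracking exactly which level the ``step preceding $\tilde{\rho}$'' refers to and why the two thinned components at $M_{\rho_{\textit{out}}-2\varepsilon}^{-\delta}$ are $\tau$-merged there---your sketch has the right idea, but the bookkeeping between $\tilde{\rho}$, $\rho_{\textit{out}}=\tilde{\rho}+3\varepsilon$, and the argument $\rho_{\textit{out}}-\rho^*+\varepsilon$ of $\tau^*$ must be made precise.
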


	The above analysis is mainly illustrated on the general cases where we assume that the underlying density has already been successfully estimated. Therefore, in the following, we delve into the characteristic of components structure and other properties of clustering algorithm under the condition where the density is estimated by the forest density estimator \eqref{BRF}.

	Note that one more notation is necessary for clear understanding: One way to define level set estimators with the help of the forest density estimator \eqref{BRF}
	is a simple plug-in approach, which is 
	\begin{align*}
		L_{D, \rho} := \{ f_{D,Z_{\mathrm{E}}}(x) \geq \rho \}.
	\end{align*}
	However, these level set estimators are too complicated to compute the $\tau$-connected components in Algorithm \ref{alg::clustering}. Instead, we take level set estimators of the form
	\begin{align}\label{LevelSetEstimator}
		L_{D,\rho} := \{ x \in D : f_{D,Z_{\mathrm{E}}}(x) \geq \rho \}^{+\sigma}.
	\end{align}

	The following theorem shows that 
	some kind of uncertainty control of the form \eqref{UncertaintyControl} is valid for
	level set estimators of the form \eqref{LevelSetEstimator}
	induced by the forest density estimator \eqref{BRF}.

	\begin{theorem} \label{theorem::UncertaintyControlForest}
		Let $\mathrm{P}$ be a $\mu$-absolutely continuous distribution on $\mathcal{X}$ and 
		$f_{D,Z_{\mathrm{E}}}(x)$ be the forest density estimator \eqref{BRF} with $\|f_{D,Z_{\mathrm{E}}}-f_{\mathrm{P},Z_{\mathrm{E}}}\|_{\infty}\leq \varepsilon$.
		For any $A \in \mathcal{A}_{Z_t,p}$, $t = 1, \ldots, m$, that is, $A$ is one of the $p+1$ cells in 
		the $t$-th partition, there exists a constant $\delta > 0$ such that
		$\mathrm{diam}(A) \leq \delta$.  
		Then, for all $\rho > 0$ and $\sigma \geq \delta$, there holds
		\begin{align} \label{UncertaintyControlForest}
			M_{\rho+\varepsilon}^{-2\sigma} 
			\subset L_{D,\rho}
			\subset M_{\rho-\varepsilon}^{+2\sigma}.
		\end{align}
	\end{theorem}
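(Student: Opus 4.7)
My plan is to handle the two inclusions in \eqref{UncertaintyControlForest} separately. Both rest on three deterministic ingredients: the $\varepsilon$-closeness $\|f_{D,Z_{\mathrm{E}}}-f_{\mathrm{P},Z_{\mathrm{E}}}\|_{\infty}\leq \varepsilon$, which transfers bounds between the empirical and population forest estimators at the cost of $\pm\varepsilon$ in the level; the cell-diameter bound $\mathrm{diam}(A)\leq \delta \leq \sigma$, which controls spatial displacement within any single cell; and the null-set identity $\mu(M_{\rho}\triangle\{f\geq\rho\})=0$ from property (iii) of Section \ref{subsec::DensityLevelSets}, which identifies $M_{\rho}$ with $\{f\geq\rho\}$ up to a $\mu$-null set.

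For the inclusion $L_{D,\rho}\subset M_{\rho-\varepsilon}^{+2\sigma}$, I would take $x\in L_{D,\rho}$ and, by \eqref{LevelSetEstimator}, extract a sample $x_i\in D$ with $\|x-x_i\|_2\leq \sigma$ and $f_{D,Z_{\mathrm{E}}}(x_i)\geq \rho$; the $\varepsilon$-closeness then yields $f_{\mathrm{P},Z_{\mathrm{E}}}(x_i)\geq \rho-\varepsilon$. Since $f_{\mathrm{P},Z_{\mathrm{E}}}(x_i)=\tfrac{1}{m}\sum_{t=1}^{m}\mathrm{P}(A_t(x_i))/\mu(A_t(x_i))$ is a finite average, at least one cell $A_t(x_i)$ satisfies $\int_{A_t(x_i)} f\,d\mu\geq (\rho-\varepsilon)\mu(A_t(x_i))$. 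I would then argue that this cell must intersect $M_{\rho-\varepsilon}$: otherwise property (iii) forces $f<\rho-\varepsilon$ $\mu$-almost everywhere on $A_t(x_i)$, and on a cell of positive measure this strict pointwise inequality gives the strict integral inequality $\int_{A_t(x_i)} f\,d\mu<(\rho-\varepsilon)\mu(A_t(x_i))$, contradicting the previous bound. Picking $z\in A_t(x_i)\cap M_{\rho-\varepsilon}$, the triangle inequality delivers $\|x-z\|_2\leq \|x-x_i\|_2+\mathrm{diam}(A_t(x_i))\leq \sigma+\delta\leq 2\sigma$, so $x\in M_{\rho-\varepsilon}^{+2\sigma}$.

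The reverse inclusion $M_{\rho+\varepsilon}^{-2\sigma}\subset L_{D,\rho}$ follows the dual logic but needs an additional bootstrap step to manufacture a nearby sample point. For $x\in M_{\rho+\varepsilon}^{-2\sigma}$, every cell $A_t(x)$ sits inside $\overline{B}(x,\delta)\subset \overline{B}(x,2\sigma)$, and by definition \eqref{DeltaTubes} of the $-2\sigma$-tube this places $A_t(x)\cap\mathcal{X}$ inside $M_{\rho+\varepsilon}$. Combined with property (iii) and the fact that $f$ vanishes outside $\mathcal{X}$, this yields $\mathrm{P}(A_t(x))/\mu(A_t(x))\geq \rho+\varepsilon$ for every $t$, hence $f_{\mathrm{P},Z_{\mathrm{E}}}(x)\geq \rho+\varepsilon$ and, by the $\varepsilon$-closeness, $f_{D,Z_{\mathrm{E}}}(x)\geq \rho$. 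This in turn forces $D_n(A_t(x))>0$ for at least one $t$, which yields a sample $x_i\in D\cap A_t(x)$ with $\|x-x_i\|_2\leq \delta\leq \sigma$. Re-running the cell bound at $x_i$---using $A_{t'}(x_i)\subset \overline{B}(x_i,\delta)\subset \overline{B}(x,2\delta)\subset \overline{B}(x,2\sigma)$ for every $t'$---delivers $f_{D,Z_{\mathrm{E}}}(x_i)\geq \rho$, so $x_i$ lies in the set of which $L_{D,\rho}$ is the $\sigma$-inflation, and therefore $x\in L_{D,\rho}$.

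The principal obstacle I anticipate is exactly this bootstrap in the left inclusion. Because $L_{D,\rho}$ is defined as a $\sigma$-inflation of \emph{sample points} of high estimated density rather than of arbitrary points where $f_{D,Z_{\mathrm{E}}}\geq \rho$, the pointwise estimate $f_{D,Z_{\mathrm{E}}}(x)\geq \rho$ at the generic point $x$ is not by itself enough; one must pass from $x$ to some $x_i$ in a cell containing it and then control every cell around $x_i$. It is precisely this step that doubles the spatial budget from $\delta$ to $2\delta$ and explains why the left-hand side of \eqref{UncertaintyControlForest} involves $M^{-2\sigma}_{\rho+\varepsilon}$ rather than the tighter $M^{-\sigma}_{\rho+\varepsilon}$.
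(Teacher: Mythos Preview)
Your proposal is correct and matches the paper's proof in substance: both rest on the same key step that a cell contained in $M_{\rho+\varepsilon}$ (respectively disjoint from $M_{\rho-\varepsilon}$) has population average at least $\rho+\varepsilon$ (respectively strictly below $\rho-\varepsilon$), combined with the $\varepsilon$-closeness and the cell-diameter bound. The only cosmetic difference is that the paper phrases both inclusions as proofs by contradiction---for the left inclusion it first shows any sample within $\sigma$ of $x$ must have high empirical density and then that such a sample exists---whereas you argue constructively (pigeonholing to one good cell for the right inclusion, and bootstrapping from $f_{D,Z_{\mathrm{E}}}(x)\geq\rho>0$ to a nearby sample for the left); these are simply contrapositives of one another.
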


Before we present the next theorem, recall that
$r$ denotes half of the side length of the centered hypercube in $\mathbb{R}^d$ and
$m$ denotes the number of trees in the best-scored random forest.

\begin{theorem}\label{thr::BoundProbability}
		Let $\mathrm{P}$ be a $\mu$-absolutely continuous distribution on $\mathcal{X}$.
	For $r \geq 1$, $m > 0$, $\varsigma \geq 1$, $n \geq 1$,  we choose an $\varepsilon > 0$ satisfying
	\begin{align} \label{BoundVarepsilon}
	\varepsilon
	\geq \sqrt{\|f\|_{\infty} \mathcal{E}_{\varsigma, p}/ n} 
	+ \mathcal{E}_{\varsigma, p} / (3n) 
	+ 2/n,
	\end{align}
	where $\mathcal{E}_{\varsigma,p}$ is defined by
	\begin{align} \label{eq::CalE_n}
	\mathcal{E}_{\varsigma,p} := 
	128 m^2 p^{2a} \mu(B_r)^{-1} e^{2 \varsigma} 
	\bigl( (8 d + 1) (\log(4m) + \varsigma) + 23 \log n + 8 a d \log p \bigr).
	\end{align}
	Furthermore,  
	for $\delta \in (0,\delta_{\textit{thick}}/2]$ and $\tau > 0$, we choose a $\sigma$ with $\sigma\geq \delta$ and assume this $\sigma$ satisfying $\sigma <\delta_{\text{thick}}/2$ and $\psi(2\sigma)<\tau$. 
	Moreover, for each random density tree, we pick the number of splits 
	$p$ satisfying
   \begin{align} \label{BoundSplits}
   p > \bigl( 2 m d e^{\varsigma} / \delta \bigr)^{4 d / c_T}.
    \end{align}
 If we feed Algorithm \ref{alg::clustering} with parameters $\varepsilon$, $\tau$, $\sigma$, and  $(L_{D,\rho})_{\rho\geq 0}$ as in \eqref{LevelSetEstimator},
 then the following statements hold: \\
{~} \textit{(i)}
If $\mathrm{P}$ satisfies Assumption \ref{ass::MainDistribution} and 
there exists an $\varepsilon^*$ satisfying
	\begin{align*}
	\varepsilon+\inf\{\varepsilon'\in(0,\rho^{**}-\rho^*]:\tau^*(\varepsilon')\geq \tau\}
	\leq \varepsilon^*
	\leq (\rho^{**}-\rho^*) / 9,
	\end{align*}
     then with probability $\mathrm{P}^n$ not less than $1-e^{-\varsigma}$, the following statements hold:
	\begin{itemize}
		\item[(a)] 
	The returned level $\rho_{D,\textit{out}}$ satisfies both $\rho_{D,\textit{out}}\in [\rho^*+2\varepsilon,\rho^*+\varepsilon^*+5\varepsilon]$ and 
		\begin{align*}
		\tau-\psi(2\sigma)<3\tau^*(\rho_{D,\textit{out}}-\rho^*+\varepsilon);
		\end{align*}
		\item[(b)] The returned sets $B_i(D)$, $i = 1, 2$, can be ordered such that 
		\begin{align*}
		\sum^2_{i=1}\mu(B_i(D)\triangle A_i^*)\leq 2\sum^2_{i=1}\mu(A_i^*\setminus (A^i_{\rho_{D,\textit{out}}+\varepsilon})^{-2\sigma})+\mu(M^{+2\sigma}_{\rho_{D,\textit{out}}-\varepsilon}
		\setminus \{h>\rho^*\}).
		\end{align*}
		Here, $A^i_{\rho_{D,\textit{out}}+\varepsilon}\in \mathcal{C}(M_{\rho_{D,\textit{out}}+\varepsilon})$, $i = 1, 2$, are ordered in the sense of $A^i_{\rho_{D,\textit{out}}+\varepsilon}\in A_i^*$.
	\end{itemize}
{~}\textit{(ii)}
If $\mathrm{P}$ satisfies Assumption \ref{ass::MainDistributionaddi} and $\rho_0 \geq \rho^*$, then 
\begin{align*}
\mu(L_{\rho_0} \triangle \widehat{M}_{\rho_*})
\leq \mu(M^{+2\sigma}_{\rho_0-\varepsilon} \setminus \widehat{M}_{\rho_*})
+ \mu(\widehat{M}_{\rho_*} \setminus M^{-2\sigma}_{\rho_0+\varepsilon})
\end{align*}
holds with probability $\mathrm{P}^n$ not less than $1-e^{-\varsigma}$ for the returned level $\rho_0$ and the corresponding single cluster $L_0 := M_{\rho_0}$,
where $\widehat{M}_{\rho_*}:=\bigcup_{\rho>\rho_*} M_{\rho}$. 
\end{theorem}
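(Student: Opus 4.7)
The plan is to combine three ingredients that have essentially been set up by the earlier results: a high-probability uniform sup-norm deviation bound for the best-scored forest density estimator, the deterministic geometric Theorem \ref{theorem::UncertaintyControlForest} which upgrades that bound into level-set uncertainty control of the form \eqref{UncertaintyControl}, and Theorem \ref{thr::LevelSetBound} which reads off the Algorithm \ref{alg::clustering} guarantees from that control. Accordingly, I would first isolate the good event
\begin{equation*}
\mathcal{E}_{\varsigma} := \bigl\{\|f_{D,Z_{\mathrm{E}}} - f_{\mathrm{P},Z_{\mathrm{E}}}\|_{\infty}\leq \varepsilon\bigr\} \cap \bigl\{\max_{t\leq m}\max_{A\in \mathcal{A}_{Z_t,p}}\mathrm{diam}(A)\leq \delta\bigr\},
\end{equation*}
show that $\mathrm{P}^n(\mathcal{E}_{\varsigma})\geq 1-e^{-\varsigma}$ under the hypotheses \eqref{BoundVarepsilon} and \eqref{BoundSplits}, and then carry out a purely deterministic argument on $\mathcal{E}_{\varsigma}$.

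For the first factor of $\mathcal{E}_{\varsigma}$, writing $f_{D,Z_{\mathrm{E}}}(x) - f_{\mathrm{P},Z_{\mathrm{E}}}(x) = m^{-1}\sum_{t,j}\mu(A_{tj})^{-1}(D_n - \mathrm{P})(A_{tj})\eins_{A_{tj}}(x)$, I would apply Bernstein's inequality to each centered count $(D_n - \mathrm{P})(A_{tj})$, using the variance estimate $\mathrm{P}(A_{tj})\leq \|f\|_{\infty}\mu(A_{tj})$, and take a union bound across the at most $m(p+1)$ cells together with an additional bracketing over the best-scored selection from the candidate pool. The functional $\mathcal{E}_{\varsigma,p}$ in \eqref{eq::CalE_n}, with its $m^2 p^{2a}$ covering factor, $e^{2\varsigma}$ scale, and log-terms $(8d+1)(\log(4m)+\varsigma)+23\log n + 8ad\log p$, is exactly the Bernstein-style complexity derived in \cite{hang2018best}; consequently \eqref{BoundVarepsilon} yields the sup-norm bound with tail $e^{-\varsigma}$. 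For the second factor, the choice \eqref{BoundSplits} is the threshold beyond which the tail estimate for purely random axis-parallel partitions (again from \cite{hang2018best}) forces every cell in every one of the $m$ trees to have diameter at most $\delta$ with matching exponential probability. Up to a harmless rescaling of $\varsigma$, both events hold simultaneously.

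On $\mathcal{E}_{\varsigma}$, Theorem \ref{theorem::UncertaintyControlForest} applied with this common $\delta$ and with $\sigma\geq \delta$ gives
\begin{equation*}
M_{\rho+\varepsilon}^{-2\sigma}\subset L_{D,\rho}\subset M_{\rho-\varepsilon}^{+2\sigma},\qquad \rho>0,
\end{equation*}
which is precisely \eqref{UncertaintyControl} with the width $2\sigma$ in place of $\delta$. I would then check that the parameters meet the hypotheses of Theorem \ref{thr::LevelSetBound} with $\delta\mapsto 2\sigma$: the assumption $2\sigma<\delta_{\text{thick}}$ gives admissibility, $\psi(2\sigma)<\tau$ gives $\tau>\psi(2\sigma)$, and the chosen $\varepsilon^*$ satisfies both $\varepsilon\leq \varepsilon^*$ and $\tau\leq \tau^*(\varepsilon^*)$ (the latter is exactly the meaning of the infimum in the $\varepsilon^*$ condition). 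Part (i) of Theorem \ref{thr::LevelSetBound} then transfers verbatim into statements (a) and (b) of part (i) of the present theorem. For part (ii), the inequality $\tau>2c_{\textit{thick}}(2\sigma)^{\gamma}$ needed by Theorem \ref{thr::LevelSetBound}(ii) is a direct consequence of $\psi(2\sigma)=3c_{\textit{thick}}(2\sigma)^{\gamma}<\tau$, so under Assumption \ref{ass::MainDistributionaddi} and $\rho_0\geq \rho^*$ (hence $\rho_0\geq \rho_*$ since $\rho_*\leq \rho^*$ in the relevant regime), the symmetric-difference bound also carries over on $\mathcal{E}_{\varsigma}$.

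The main obstacle is the sup-norm concentration step: one must control $\|f_{D,Z_{\mathrm{E}}}-f_{\mathrm{P},Z_{\mathrm{E}}}\|_{\infty}$ uniformly over every cell of every tree, while the trees themselves have been selected as the best scorers among a pool of candidates, which inflates the effective covering number. The factors $m^2$ and $p^{2a}$ in \eqref{eq::CalE_n} are exactly the price of that uniformity and that selection; once this input is imported from \cite{hang2018best}, the remainder of the proof is a deterministic plug-in of Theorems \ref{theorem::UncertaintyControlForest} and \ref{thr::LevelSetBound}.
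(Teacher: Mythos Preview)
Your proposal is correct and follows essentially the same approach as the paper: isolate the high-probability event on which both the sup-norm deviation bound and the cell-diameter bound hold (the paper writes these as $B_{1,\mathrm{E},\varepsilon}\cap B_{2,\mathrm{E},\delta}$ and cites Proposition~15 and Inequality~(19) of \cite{hang2018best} directly, rather than re-deriving the Bernstein bound), invoke Theorem~\ref{theorem::UncertaintyControlForest} to obtain \eqref{UncertaintyControl} with $2\sigma$ in place of $\delta$, and then apply Theorem~\ref{thr::LevelSetBound}. The only place where the paper is slightly more careful than your sketch is the verification that $\tau\leq \tau^*(\varepsilon^*)$: since the infimum $\inf\{\varepsilon':\tau^*(\varepsilon')\geq \tau\}$ need not be attained, the paper picks an $\varepsilon_0$ in the defining set with $\varepsilon_0\leq \inf E+\varepsilon\leq \varepsilon^*$ and then uses the monotonicity of $\tau^*$, whereas you assert this directly; you should make that $\varepsilon$-slack explicit.
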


	\section{Main Results} \label{sec::ConsistencyandRatesforRFDE-basedClustering}

	In this section, we present main theoretical results 
	of our best-scored clustering forest
	on the consistency as well as 
	convergence rates
	for both the optimal level $\rho^*$ and the true clusters $A_i^*$, $i = 1, 2$, simultaneously 
	using the error bounds derived in Theorem \ref{thr::LevelSetBound} and Theorem \ref{thr::BoundProbability}, respectively. 
	We also present some comments and discussions on the obtained theoretical results.

   \subsection{Consistency for Best-scored Clustering Forest}

   \begin{theorem}[Consistency] \label{the::Consistency}
   	Let Assumption \ref{ass::MainDistribution} hold. Furthermore, 
   	for certain constant $q \in (0, 1)$,
   	assume that $(\varepsilon_n)$, $(\tau_n)$, $(\delta_n)$, and $(\sigma_n)$ are strictly positive sequences converging to zero satisfying 
   	$\varepsilon_n\geq 2/(nq)$ for sufficiently large $n$,
   	$\sigma_n = \delta_n$, $\psi(2\sigma_n)\leq \tau_n$.
   	Moreover,
   	let the number of splits $p_n$ satisfy
   	\begin{align*}
   	\lim_{n \to \infty} n p_n^{-2a} (\log n)^{-1} \varepsilon_n^2
   	& = \infty,
   	   	\\
   	   	\lim_{n \to \infty}  \delta_n p_n^{c_T/(4d)} 
   	& = \infty,
   	\end{align*}
   	where $c_T = 0.22$ and $a = 4.33$.
   	If we feed Algorithm \ref{alg::clustering} with parameters $\varepsilon_n$, $\sigma_n$, $(L_{D,\rho})_{\rho\geq 0}$ as in \eqref{LevelSetEstimator}, and $p_n$,
   then the following statements hold:	
   	\begin{itemize}
   		   		\item[(i)]
   		If $\mathrm{P}$ satisfies Assumption \ref{ass::MainDistribution}, then for all $\epsilon > 0$,  the returned level $\rho_{D,\textit{out}}$ satisfies
   		\begin{align*}
   		\lim_{n\to \infty} \mathrm{P}^n \bigl( 
   		\{D\in X^n: 0<\rho_{D,\textit{out}}-\rho^*\leq \epsilon \} \bigr) = 1.
   		\end{align*}
   		Moreover, if $\mu(\overline{A_1^*\cup A_2^*}\setminus(A_1^*\cup A_2^*))=0$, then for all $\epsilon > 0$,
   		the returned sets $B_i(D)$, $i = 1, 2$, satisfy  
   		\begin{align*}
   		\lim_{n\to\infty} \mathrm{P}^n 
   		\biggl( \biggl\{ D \in X^n :\sum_{i=1}^2 \mu(B_i(D) \triangle A_i^*) \leq \epsilon \biggr\} \biggr) = 1.
   		\end{align*}
   		\item[(ii)] 
   		If $\mathrm{P}$ satisfies Assumption \ref{ass::MainDistributionaddi} and $\rho_*=0$, then for all $\epsilon > 0$,  the returned level $\rho_{D,\textit{out}}$ satisfies
   		\begin{align*}
   		\lim_{n\rightarrow \infty} \mathrm{P}^n(\{D\in X^n:0<\rho_{D,\textit{out}}\leq \epsilon\} )=1.
   		\end{align*}
   		Moreover, if $\mu(\overline{\{f>0\}}\setminus \{f>0\})=0$,
   		then for all $\epsilon > 0$,
   		the returned set $L_{D,\rho_{D,\textit{out}}}$ satisfies  
   		\begin{align*}
   		\lim_{n\rightarrow }\mathrm{P}^n(\{D\in X^n:\mu(L_{D,\rho_{D,\textit{out}}}\triangle \{f>0\})\leq \epsilon\})=1;
   		\end{align*}

   	\end{itemize}
   \end{theorem}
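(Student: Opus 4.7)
The strategy is to invoke Theorem~\ref{thr::BoundProbability} along the chosen sequence of parameters $(\varepsilon_n,\tau_n,\delta_n,\sigma_n,p_n)$ and then pass to the limit in $n$. The only preparatory work is to pick a diverging confidence sequence $\varsigma_n\to\infty$ such that the two quantitative conditions \eqref{BoundVarepsilon} and \eqref{BoundSplits} hold for all sufficiently large $n$. Because the hypotheses supply $n p_n^{-2a}(\log n)^{-1}\varepsilon_n^2\to\infty$ and $\delta_n p_n^{c_T/(4d)}\to\infty$, one may take, e.g.,
\begin{align*}
\varsigma_n := \tfrac{1}{3}\min\bigl\{\log\bigl(n p_n^{-2a}(\log n)^{-1}\varepsilon_n^2\bigr),\ \log\bigl(\delta_n p_n^{c_T/(4d)}\bigr)\bigr\},
\end{align*}
which forces $e^{2\varsigma_n}=o(n\varepsilon_n^2/(p_n^{2a}\log n))$ and $e^{\varsigma_n}=o(\delta_n p_n^{c_T/(4d)})$, so that \eqref{BoundVarepsilon} and \eqref{BoundSplits} are eventually satisfied while $1-e^{-\varsigma_n}\to 1$. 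Hence Theorem~\ref{thr::BoundProbability} yields its conclusion on an event of $\mathrm{P}^n$-probability tending to one.

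\textbf{Level convergence.} Under Assumption~\ref{ass::MainDistribution}, part~(i)(a) of Theorem~\ref{thr::BoundProbability} gives, on the good event, $\rho_{D,\text{out}}-\rho^*\in[2\varepsilon_n,\,\varepsilon_n^*+5\varepsilon_n]$, where $\varepsilon_n^*$ is any value in $[\varepsilon_n+\inf\{\varepsilon':\tau^*(\varepsilon')\geq\tau_n\},(\rho^{**}-\rho^*)/9]$. By the definition of $\tau^*$ in \eqref{eq::tau}, the distance between the two cluster components satisfies $\tau^*(\varepsilon')\downarrow 0$ as $\varepsilon'\downarrow 0$; combined with $\tau_n\to 0$ this forces the above infimum, and hence $\varepsilon_n^*$, to tend to $0$. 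Therefore $\rho_{D,\text{out}}-\rho^*\to 0^{+}$ in probability, which proves the level claim in~(i). The level claim in~(ii) follows in the same way by prescribing an input sequence $\rho_{0,n}\downarrow 0$ with $\rho_{0,n}>0=\rho_*$ and invoking part~(ii) of Theorem~\ref{thr::BoundProbability}, which simply returns $\rho_{0,n}$.

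\textbf{Cluster convergence.} For~(i)(b), write $\rho_n:=\rho_{D,\text{out}}$; Theorem~\ref{thr::BoundProbability}(i)(b) reduces the goal to proving
\begin{align*}
\mu\bigl(A_i^*\setminus(A^i_{\rho_n+\varepsilon_n})^{-2\sigma_n}\bigr)\to 0
\quad\text{and}\quad
\mu\bigl(M^{+2\sigma_n}_{\rho_n-\varepsilon_n}\setminus\{f>\rho^*\}\bigr)\to 0.
\end{align*}
For the first term, decompose $A_i^*\setminus(A^i_{\rho_n+\varepsilon_n})^{-2\sigma_n}=(A_i^*\setminus A^i_{\rho_n+\varepsilon_n})\cup(A^i_{\rho_n+\varepsilon_n}\setminus(A^i_{\rho_n+\varepsilon_n})^{-2\sigma_n})$: the first piece shrinks to a null set by monotone continuity of $\mu$, using $A^i_{\rho_n+\varepsilon_n}\uparrow A_i^*$ as $\rho_n+\varepsilon_n\downarrow\rho^*$, which follows from the Level Sets, Monotonicity, and Normality properties in Section~\ref{subsec::DensityLevelSets} together with Definition~\ref{def::ClusterProperty}; the second piece is a $2\sigma_n$-collar at $\partial A_i^*$ whose measure vanishes by the thickness hypothesis (controlling $\psi^*_{A}(2\sigma_n)\leq c_{\text{thick}}(2\sigma_n)^\gamma$) combined with the closure assumption $\mu(\overline{A_1^*\cup A_2^*}\setminus(A_1^*\cup A_2^*))=0$, which excludes a boundary of positive volume. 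For the second term, the tubes $M^{+2\sigma_n}_{\rho_n-\varepsilon_n}$ decrease to $\overline{\{f>\rho^*\}}$ (using the Open Level Sets property and $\rho_n-\varepsilon_n\downarrow\rho^*$, $\sigma_n\downarrow 0$), and regularity~(iii) of Section~\ref{subsec::DensityLevelSets} then gives $\mu\bigl(\overline{\{f>\rho^*\}}\setminus\{f>\rho^*\}\bigr)=0$. Part~(ii) is handled identically using Theorem~\ref{thr::BoundProbability}(ii) and the analogous assumption $\mu(\overline{\{f>0\}}\setminus\{f>0\})=0$, where the identity $\widehat{M}_{0}=\{f>0\}$ from the Open Level Sets property identifies the two targets.

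\textbf{Main obstacle.} The delicate part is the geometric step in Paragraph~3: pure set-monotonicity is not enough to force $\mu(A\setminus A^{-2\sigma_n})\to 0$, and one really needs to combine normality of $\mathrm{P}$, the thickness condition, and the explicit closure hypothesis of the theorem to rule out pathological boundary behaviour of $A_i^*$. The balancing of $\varsigma_n$ in Step~1 is mechanical, but it has to be executed so that the exponential factor $e^{2\varsigma_n}$ hidden in $\mathcal{E}_{\varsigma,p}$ is genuinely dominated by both $n\varepsilon_n^2/(p_n^{2a}\log n)$ and $\delta_n p_n^{c_T/(4d)}$ simultaneously, which is why the hypotheses on $\varepsilon_n$ and $p_n$ are stated with explicit polynomial slack.
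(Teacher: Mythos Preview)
Your overall plan coincides with the paper's: establish that the events in Theorem~\ref{thr::BoundProbability} (equivalently, conditions \eqref{BoundVarepsilon} and \eqref{BoundSplits}) hold with probability tending to one for a diverging confidence sequence $\varsigma_n\to\infty$, and then pass to the limit in the error bounds. The paper carries out the first part differently: instead of writing down an explicit $\varsigma_n$, it fixes $\varepsilon_n$ and lets $\varsigma_n$ be the implicit solution of the equation coming from \eqref{eq::CalE_n}, and then proves the separate technical Lemma~\ref{lem::trival} to show this implicit $\varsigma_n$ diverges. Your explicit construction is a legitimate and arguably cleaner alternative that avoids the auxiliary lemma; the verification you give (that $e^{2\varsigma_n}$ is dominated by both slack quantities simultaneously) is correct. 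For the final passage to the limit the paper is terse: it simply invokes Theorem~\ref{thr::LevelSetBound} together with Section~A.9 of \cite{steinwart2015suppA}, whereas you try to do the measure-theoretic work inline.

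That inline argument, however, contains a genuine slip. In your treatment of the collar term $\mu\bigl(A^i_{\rho_n+\varepsilon_n}\setminus (A^i_{\rho_n+\varepsilon_n})^{-2\sigma_n}\bigr)$ you invoke the thickness hypothesis, but Definition~\ref{def::ThickLetSets} only bounds the distance quantity $\psi^*_{M_\rho}(\delta)=\sup_{x\in M_\rho}d(x,M_\rho^{-\delta})$; it says nothing about the \emph{measure} of the collar $A\setminus A^{-\delta}$, and indeed thickness is not what drives this step. The correct route is pure monotone continuity of $\mu$: for any fixed $\rho'\in(\rho^*,\rho^{**}]$ and $\sigma'>0$ one has, for all large $n$, $(A^i_{\rho'})^{-2\sigma'}\subset (A^i_{\rho_n+\varepsilon_n})^{-2\sigma_n}$, so the limsup of the collar measure is bounded by $\mu\bigl(A_i^*\setminus (A^i_{\rho'})^{-2\sigma'}\bigr)$; sending first $\sigma'\downarrow 0$ (using that $A^i_{\rho'}$ is closed) and then $\rho'\downarrow\rho^*$ (using the definition of $A_i^*$ and the closure hypothesis $\mu(\overline{A_1^*\cup A_2^*}\setminus(A_1^*\cup A_2^*))=0$) gives zero. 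Similarly, in the level paragraph you assert $\tau^*(\varepsilon')\downarrow 0$, which fails when $\kappa=\infty$; what you actually need---and what suffices---is only that $\tau^*(\varepsilon')>0$ for every $\varepsilon'>0$, so that $\inf\{\varepsilon':\tau^*(\varepsilon')\geq\tau_n\}\to 0$ as $\tau_n\to 0$. With these two corrections your argument goes through and matches what the paper defers to \cite{steinwart2015suppA}.
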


	\subsection{Convergence Rates for Best-scored Clustering Forest}

	In this subsection, we derive the convergence rates 
	for both estimation problems, that is, for estimating the optimal level $\rho^*$ and the true clusters $A_i^*$, $i = 1, 2$, in our proposed algorithm separately.

	\subsubsection{Convergence Rates for Estimating the Optimal Level}

	In order to derive the convergence rates for estimating the optimal level $\rho^*$,
	we need to make following assumption
	that describes how well the clusters are separated above 
	$\rho^*$.

	\begin{definition} \label{def::SeparationExponent}
		Let Assumption \ref{ass::MainDistribution} hold. The clusters of $\mathrm{P}$ 
		are said to have \emph{separation exponent} $\kappa \in (0,\infty]$ if there exists a constant $\underline{c}_{\textit{sep}}>0$ such that
		\begin{align*}
			\tau^*(\varepsilon)\geq \underline{c}_{\textit{sep}}\varepsilon^{1/\kappa}
		\end{align*}
		holds for all $\varepsilon\in (0,\rho^{**}-\rho^*]$. 
		Moreover, the separation exponent $\kappa$ is called \emph{exact} if there exists another constant $\overline{c}_{\textit{sep}}>0$ such that
		\begin{align*}
			\tau^*(\varepsilon)\leq \overline{c}_{\textit{sep}}\varepsilon^{1/\kappa}
		\end{align*}
		holds for all $\varepsilon\in (0,\rho^{**}-\rho^*]$.
	\end{definition}

	The \emph{separation exponent} describes how fast the connected components of the $M_\rho$ approach each other for $\rho\to \rho^*$ and a distribution having separation exponent $\kappa$ also has separation exponent $\kappa'$ for all $\kappa'<\kappa$. If the separation exponent $\kappa=\infty$, then the clusters $A_1^*$ and $A_2^*$ do not touch each other. 
	With the above Definition \ref{def::SeparationExponent}, we are able to establish error bounds for estimating the optimal level $\rho^*$ 
	in the following  theorem
	whose proof is quite similar to that of Theorem 4.3 in \cite{steinwart2015adaptive} and hence will be omitted.
	
	\begin{theorem} \label{the::rho_rates}
		Let Assumption \ref{ass::MainDistribution} hold,
       and
		assume that $\mathrm{P}$ has a bounded $\mu$-density $f$ whose clusters have separation exponent $\kappa\in (0,\infty]$. 
		For $r \geq 1$, $m > 0$, $\varsigma \geq 1$, $n \geq 1$,  we choose an $\varepsilon > 0$ satisfying
		\begin{align*}
		\varepsilon \geq 
		\sqrt{\|f\|_{\infty} \mathcal{E}_{\varsigma, p}/ n} 
		+ \mathcal{E}_{\varsigma, p} / (3n) 
		+ 2/n,
		\end{align*}
		with $\mathcal{E}_{\varsigma,p}$ as in \eqref{eq::CalE_n}.
		Furthermore,  
		for $\delta \in (0,\delta_{\textit{thick}}/2]$ and $\tau > 0$, we choose a $\sigma$ with $\sigma\geq \delta$ and assume this $\sigma$ satisfying $\sigma <\delta_{\text{thick}}/2$ and $\psi(2\sigma)<\tau/2$. 
		Moreover, for each random density tree,
		we pick
		the number of splits $p$ satisfying
		\begin{align*}
			p > \bigl( 2 m d (K+2r)e^\varsigma / \delta \bigr)^{4 d / c_T}.
		\end{align*}
		Finally, suppose that
		$\varepsilon^*:=\varepsilon+(\tau/\underline{c}_{\textit{sep}})^{\kappa}$ satisfies $\varepsilon^*\leq (\rho^{**}-\rho^*)/9$.
		If we feed
		Algorithm \ref{alg::clustering} with 
		parameters $\varepsilon$, $\tau$, $\sigma$, $(L_{D,\rho})_{\rho\geq 0}$ 
		as in \eqref{LevelSetEstimator}, and $p_n$, 
		then  the returned level $\rho_{D,\textit{out}}$ satisfies
		\begin{align} \label{RhoExessError}
			\varepsilon
			< \rho_{D,\textit{out}}-\rho^*
            \leq (\tau/\underline{c}_{\textit{sep}})^{\kappa}+6\varepsilon
		\end{align}
		with probability $\mathrm{P}^n$ 
		not less than $1-e^{-\varsigma}$. 
		Moreover, if the separation exponent $\kappa$ is exact and $\kappa<\infty$, then we have
		\begin{align}\label{equ::tau_rho}
			\rho_{D,\textit{out}}-\rho^*
			> \bigl( \tau / (6 \overline{c}_{\textit{sep}} )  \bigr)^{\kappa} / 4.
		\end{align}
	\end{theorem}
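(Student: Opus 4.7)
The plan is to build directly on Theorem \ref{thr::BoundProbability}(i), which already furnishes on an event of $\mathrm{P}^n$-probability at least $1 - e^{-\varsigma}$ the bracket $\rho_{D,\textit{out}} \in [\rho^* + 2\varepsilon, \rho^* + \varepsilon^* + 5\varepsilon]$ together with the key inequality $\tau - \psi(2\sigma) < 3\tau^*(\rho_{D,\textit{out}} - \rho^* + \varepsilon)$. Everything else in the proof amounts to converting these conclusions into explicit polynomial rates by invoking the separation-exponent assumption, and to verifying that the hypotheses of Theorem \ref{thr::BoundProbability} are implied by those of the current theorem.

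First I would check that the choice $\varepsilon^* := \varepsilon + (\tau / \underline{c}_{\textit{sep}})^{\kappa}$ meets the condition
\[
\varepsilon + \inf\{\varepsilon' \in (0, \rho^{**} - \rho^*] : \tau^*(\varepsilon') \geq \tau\} \leq \varepsilon^* \leq (\rho^{**} - \rho^*)/9
\]
required by Theorem \ref{thr::BoundProbability}(i). The separation lower bound $\tau^*(\varepsilon') \geq \underline{c}_{\textit{sep}}(\varepsilon')^{1/\kappa}$ forces $\tau^*(\varepsilon') \geq \tau$ as soon as $\varepsilon' \geq (\tau / \underline{c}_{\textit{sep}})^{\kappa}$, so the infimum above is at most $(\tau / \underline{c}_{\textit{sep}})^{\kappa}$; the upper half of the display is exactly the hypothesis that the current theorem imposes on $\varepsilon^*$. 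Inserting this $\varepsilon^*$ into Theorem \ref{thr::BoundProbability}(i)(a) immediately yields the lower bound $\rho_{D,\textit{out}} - \rho^* \geq 2\varepsilon > \varepsilon$ and the upper bound $\rho_{D,\textit{out}} - \rho^* \leq \varepsilon^* + 5\varepsilon = (\tau / \underline{c}_{\textit{sep}})^{\kappa} + 6\varepsilon$, which together establish \eqref{RhoExessError}.

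For the reverse inequality \eqref{equ::tau_rho} under the exact-separation assumption, I would combine the $\tau^*$-estimate of Theorem \ref{thr::BoundProbability}(i)(a) with the strengthened hypothesis $\psi(2\sigma) < \tau/2$ to obtain $\tau/6 < \tau^*(\rho_{D,\textit{out}} - \rho^* + \varepsilon)$. The exact-separation upper bound $\tau^*(\varepsilon') \leq \overline{c}_{\textit{sep}}(\varepsilon')^{1/\kappa}$ then gives $\rho_{D,\textit{out}} - \rho^* + \varepsilon > \bigl(\tau / (6 \overline{c}_{\textit{sep}})\bigr)^{\kappa}$. Finally, the already-established bound $\rho_{D,\textit{out}} - \rho^* \geq 2\varepsilon$, i.e.\ $\varepsilon \leq \tfrac{1}{2}(\rho_{D,\textit{out}} - \rho^*)$, yields $\rho_{D,\textit{out}} - \rho^* + \varepsilon \leq \tfrac{3}{2}(\rho_{D,\textit{out}} - \rho^*)$ and hence $\rho_{D,\textit{out}} - \rho^* > \tfrac{2}{3}\bigl(\tau/(6\overline{c}_{\textit{sep}})\bigr)^{\kappa} > \bigl(\tau/(6\overline{c}_{\textit{sep}})\bigr)^{\kappa}/4$, which is \eqref{equ::tau_rho}.

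The proof is essentially an exercise in constant-tracking: all the probabilistic content has been packaged into Theorem \ref{thr::BoundProbability}, which itself rests on the best-scored forest density bounds of \cite{hang2018best}. The main obstacle is therefore not technical but notational, namely verifying that the hypotheses on $p$, $\delta$, $\sigma$, and $\tau$ stated here imply those of Theorem \ref{thr::BoundProbability} (in particular, $\psi(2\sigma) < \tau/2$ is strictly stronger than the $\psi(2\sigma) < \tau$ required there, so no difficulty arises), and checking that the chain of elementary inequalities delivering the $1/4$ factor in \eqref{equ::tau_rho} is valid uniformly in $\kappa \in (0, \infty]$. The overall skeleton mirrors the proof of Theorem 4.3 in \cite{steinwart2015adaptive}, with our forest-based level-set estimator \eqref{LevelSetEstimator} replacing the histogram-based one used there.
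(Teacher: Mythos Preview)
Your proposal is correct and follows essentially the same approach as the paper, which omits the proof and simply refers to Theorem 4.3 in \cite{steinwart2015adaptive}; you have spelled out precisely the argument implied by that reference, namely feeding the separation-exponent bounds into the conclusions of Theorem \ref{thr::BoundProbability}(i) and then tracking constants. One small remark: in your final paragraph the phrase ``uniformly in $\kappa \in (0,\infty]$'' is slightly off for \eqref{equ::tau_rho}, since the exact-separation statement explicitly assumes $\kappa < \infty$, so no uniformity at $\kappa = \infty$ is needed there.
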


	\begin{corollary}[Convergence Rates for Estimating the Optimal Level] \label{col::rho_rates}
		Let Assumption \ref{ass::MainDistribution} hold and suppose that $f$ is $\alpha$-H\"{o}lder continuous with exponent $\alpha \in (0,1]$ whose clusters have separation exponent $\kappa\in (0,\infty)$. 
		For any $\epsilon > 0$, and all $n \geq 1$, let $(\varepsilon_n)$, $(\tau_n)$, $(\delta_n)$, and $(\sigma_n)$ be sequences with
		\begin{align*}
		\varepsilon_n & = \bigl( n^{- \lambda \alpha} (\log n)^{2+\lambda \alpha} \log \log n \bigr)^{\gamma\kappa/(2(\gamma\kappa+\epsilon))},
				\\
		\tau_n & = \bigl( n^{- \lambda \alpha} (\log n)^{2+\lambda \alpha}\log\log n \bigr)^{\gamma/2(\gamma\kappa+\epsilon)},
		\\
		\sigma_n & = \delta_n = \bigl( n^{- \lambda} (\log n)^{2+\lambda} \log \log \log n \bigr)^{1/(2(\gamma\kappa+\epsilon))},
		\end{align*}
		where $\lambda=c_T/(c_T\alpha+4ad)$, $c_T = 0.22$ and $a = 4.33$. 
		Moreover, we choose the number of splits as
		\begin{align*}
			p_n = (n/\log n)^{2d/(c_T \alpha + 4 a d)}.
		\end{align*}
		If we feed Algorithm \ref{alg::clustering} with parameters $\varepsilon_n$, $\tau_n$, $\sigma_n$, $(L_{D,\rho})_{\rho\geq 0}$, and $p_n$, 
		then for all sufficiently large n,	
		there exists a constant $\overline{c} \geq 1$ such that the returned level $\rho_{D,\textit{out}}$
		 satisfies
		\begin{align*}
			\mathrm{P}^n( \rho_{D,\textit{out}}-\rho^*\leq \overline{c}\varepsilon_n )\geq 1 - 1/\log n.
		\end{align*}
		Moreover, if the separation exponent $\kappa$ is exact, there exists another constant $\underline{c} \geq 1$ such that for all sufficiently large $n$, there holds
		\begin{align*}
			\mathrm{P}^n( \underline{c}\varepsilon_n\leq \rho_{D,\textit{out}}-\rho^*\leq \overline{c}\varepsilon_n) \geq 1 - 1/\log n.
		\end{align*}
	\end{corollary}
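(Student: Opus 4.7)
The plan is to verify the hypotheses of Theorem \ref{the::rho_rates} with the confidence parameter $\varsigma_n = \log\log n$ (so that $1 - e^{-\varsigma_n}$ equals the target $1 - 1/\log n$) and then read off the rates directly from its conclusions \eqref{RhoExessError} and \eqref{equ::tau_rho}. The core design principle behind the chosen sequences is the identity $\tau_n^\kappa = \varepsilon_n$: under this calibration the upper bound $(\tau_n/\underline{c}_{\textit{sep}})^\kappa + 6\varepsilon_n$ collapses to a constant multiple $\overline{c}\,\varepsilon_n$, and in the exact case the lower bound $(\tau_n/(6\overline{c}_{\textit{sep}}))^\kappa/4$ is of order $\underline{c}\,\varepsilon_n$.

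The first substantive step is to verify the admissibility condition \eqref{BoundVarepsilon}. A direct computation with $p_n = (n/\log n)^{2d/(c_T\alpha+4ad)}$ gives $p_n^{2a} \asymp (n/\log n)^{1-\lambda\alpha}$; substituting $\varsigma_n = \log\log n$ and $\log p_n = O(\log n)$ into \eqref{eq::CalE_n} then yields $\mathcal{E}_{\varsigma_n, p_n}/n \asymp n^{-\lambda\alpha}(\log n)^{2+\lambda\alpha}$. Hence $\sqrt{\mathcal{E}_{\varsigma_n, p_n}/n}$ is of order $(n^{-\lambda\alpha}(\log n)^{2+\lambda\alpha})^{1/2}$, while $\varepsilon_n$ raises the same vanishing quantity (times a $\log\log n$ factor) to the strictly smaller exponent $\gamma\kappa/(2(\gamma\kappa+\epsilon))<1/2$. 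Since raising a quantity that tends to zero to a smaller positive power enlarges it, $\varepsilon_n$ dominates $\sqrt{\mathcal{E}_{\varsigma_n, p_n}/n}$ and the remainders $\mathcal{E}_{\varsigma_n,p_n}/(3n)$ and $2/n$ are absorbed for all sufficiently large $n$.

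The remaining conditions are checked by direct exponent comparisons. Using $\psi(\delta) = 3c_{\textit{thick}}\delta^\gamma$ from Definition \ref{def::ThickLetSets}, the inequality $\psi(2\sigma_n) < \tau_n/2$ reduces to $\sigma_n^\gamma \lesssim \tau_n$, which follows because $n^{-\lambda} \leq n^{-\lambda\alpha}$ for $\alpha \in (0,1]$ and the logarithmic factors align. The split-count requirement $p_n > \bigl(2md(K+2r)e^{\varsigma_n}/\delta_n\bigr)^{4d/c_T}$ is equivalent to $\delta_n p_n^{c_T/(4d)} \gtrsim \log n$; with $p_n^{c_T/(4d)} = (n/\log n)^{\lambda/2}$ the estimate then follows from the explicit form of $\delta_n$ by multiplying out exponents. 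Finally, $\varepsilon_n^* = \varepsilon_n + (\tau_n/\underline{c}_{\textit{sep}})^\kappa = O(\varepsilon_n) \to 0$, so $\varepsilon_n^* \leq (\rho^{**}-\rho^*)/9$ is satisfied for all sufficiently large $n$.

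With all hypotheses verified, Theorem \ref{the::rho_rates} delivers, with probability at least $1 - 1/\log n$, the upper bound $\rho_{D,\textit{out}} - \rho^* \leq \underline{c}_{\textit{sep}}^{-\kappa}\varepsilon_n + 6\varepsilon_n = \overline{c}\,\varepsilon_n$ and, under the exact-separation hypothesis, the matching lower bound $\rho_{D,\textit{out}} - \rho^* \geq \varepsilon_n/\bigl(4(6\overline{c}_{\textit{sep}})^\kappa\bigr) = \underline{c}\,\varepsilon_n$. The main obstacle is the bookkeeping in the admissibility verification: the small margin $\epsilon > 0$ in the exponent of $\varepsilon_n$, together with the $\log\log n$ inflation, must buy just enough room to swallow both $\sqrt{\mathcal{E}_{\varsigma_n, p_n}/n}$ and the confidence level $\varsigma_n = \log\log n$ without spoiling the advertised rate, and the argument must remain uniform across the regimes $\alpha<1$ and $\alpha=1$.
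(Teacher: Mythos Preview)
Your proposal is correct and follows essentially the same approach as the paper's proof: set $\varsigma_n=\log\log n$, verify in turn the conditions of Theorem~\ref{the::rho_rates} (the $\varepsilon$-admissibility via the estimate $\mathcal{E}_{\varsigma_n,p_n}/n\asymp n^{-\lambda\alpha}(\log n)^{2+\lambda\alpha}$, the thickness condition $\psi(2\sigma_n)<\tau_n$, the split bound on $p_n$, and $\varepsilon_n^*\le(\rho^{**}-\rho^*)/9$), and then read off \eqref{RhoExessError} and \eqref{equ::tau_rho}. Your explicit identification of the calibration $\tau_n^\kappa=\varepsilon_n$ is a nice clarification the paper leaves implicit, but otherwise the arguments coincide.
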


	\subsubsection{Convergence Rates for Estimating the True Clusters}

	Our next goal is to
	establish learning rates for the true clusters, in other words, describing how fast 
	$\sum_{i=1}^2 \mu( B_i(D) \triangle A_i^*)$ goes to $0$. 
	On account that this is a modified level set estimation problem, we need to make some further assumptions on $\mathrm{P}$. The first definition can be considered as a one-sided variant of a well-known condition introduced by \cite[Theorem 3.6]{polonik1995measuring}. 
	
	\begin{definition}
		Let $\mu$ be a measure on $\mathcal{X}$ and $\mathrm{P}$ be a distribution on $\mathcal{X}$ that has a $\mu$-density $f$. For a given level $\rho\geq 0$, we say that \emph{$\mathrm{P}$ has flatness exponent $\vartheta \in (0,\infty]$} if there exists a constant $c_{\textit{flat}}>0$ such that for all $s > 0$, we have
		\begin{align} \label{ineq::flat}
			\mu(\{0<f-\rho<s\})\leq (c_{\textit{flat}}s)^\vartheta.
		\end{align}
	\end{definition}
	It can be easily observed from \eqref{ineq::flat} that the larger $\vartheta$ is, the steeper $f$ approaches $\rho$ from above. Particularly, in the case of $\vartheta = \infty$, the density $f$ is allowed to take the value $\rho$, otherwise it would be bounded away from $\rho$.

	The next definition describes the roughness of the boundary of the clusters, see also Definition 4.6 in \cite{steinwart2015adaptive}.
	
	\begin{definition} \label{def::SmoothBoundary}
		Let Assumption \ref{ass::MainDistribution} hold. Given some $\alpha_0 \in(0,1]$, we say that \emph{the clusters have an $\alpha_0$-smooth boundary} if there exists a constant $c_{\textit{bound}}>0$ such that 
		for all $\rho \in (\rho^*,\rho^{**}]$ and
		$\delta \in (0,\delta_{\textit{thick}}]$, there holds
		\begin{align*}
			\mu((A_\rho^i)^{+\delta}\setminus (A_\rho^i)^{-\delta})\leq c_{\textit{bound}}\delta^{\alpha_0},
			\qquad
			i = 1, 2,
		\end{align*}
		where $A_\rho^i$, $i = 1, 2$ denote the connected components of the level set $M_\rho$. 
	\end{definition}

	Note that considering $\alpha > 1$ does not make sense in $\mathbb{R}^d$ and if $A \subset \mathbb{R}^d$ has rectifiable boundary, we always have $\alpha = 1$, see Lemma A.10.4 in \cite{steinwart2015suppA}.
	
	Now, we summarize all the conditions on $\mathrm{P}$ needed to obtain learning rates for cluster estimation.

	\begin{assumption} \label{ass::SetsAssumption}
		Let Assumption \ref{ass::MainDistribution} hold.
		Moreover, assume that 
		$\mathrm{P}$ has a bounded $\mu$-density $f$
		and
		a flatness exponent $\vartheta \in(0,\infty]$ at level $\rho^*$,
		whose clusters have 
		an $\alpha_0$-smooth boundary for some $\alpha_0\in(0,1]$ and 
		a separation exponent $\kappa \in (0,\infty]$.
	\end{assumption}
	
	\begin{figure*}[htbp]
		\begin{minipage}[t]{0.99\textwidth}  
			\centering  
			\includegraphics[width=\textwidth]{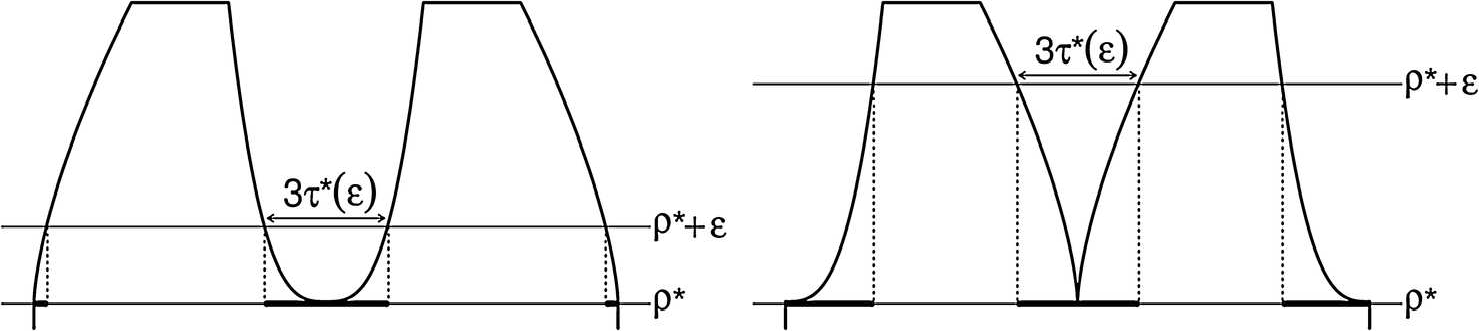}  
		\end{minipage}  
		\centering  
		\caption{Separation and flatness. 
		These two figures illustrate two possible shapes of the density $f$. The bold horizontal line indicates the set $\{\rho^*<f<\rho^*+\varepsilon\}$ and $3\tau^*(\varepsilon)$ describes the width of the valley at level $\rho^*+\varepsilon$. The value of $\varepsilon$ is chosen such that $3\tau^*(\varepsilon)$ on the right equals the value of the left. The density on the right has a narrower valley than that on the left. Therefore, $\varepsilon$ needs to be chosen larger. Moreover, it becomes more difficult to estimate the optimal level $\rho^*$ and the two clusters.}
		\label{fig:sepandflat}
	\end{figure*} 
	
	The following theorem provides a finite sample bound
	that can be later used to describe
	how well our algorithm estimates the true clusters $A_i^*$, $i = 1, 2$,
	see also Theorem 4.7
	in \cite{steinwart2015adaptive}.
	
	\begin{theorem} \label{the::sets_rate}
		Let Assumption \ref{ass::SetsAssumption} hold. 
		Furthermore, let $\varepsilon^*$  be defined as in Theorem \ref{the::rho_rates}.
		For $r > 1$, $m > 0$, $\varsigma \geq 1$, $n \geq 1$, 
		if we feed Algorithm \ref{alg::clustering} with parameters $\varepsilon$, $\tau$, $\sigma$, $(L_{D,\rho})_{\rho\geq 0}$, and $p$
		as in Theorem \ref{the::rho_rates}, then 
		the returned level $\rho_{D,\textit{out}}$
		satisfy
		inequalities \eqref{RhoExessError} and
		the returned sets
		$B_i(D)$, $i = 1, 2$ satisfy
		\begin{align*}
			\sum_{i=1}^2 \mu( B_i(D) \triangle A_i^*) 
			\leq \bigl( 7 c_{\textit{flat}} \varepsilon 
			                + c_{\textit{flat}} ( \tau/\underline{c}_{\textit{sep}})^{\kappa} \bigr)^{\vartheta} 
			         + 6 c_{\textit{bound}} (2 \sigma)^{\alpha_0}
		\end{align*}
		with probability $\mathrm{P}^n$ not less than $1-e^{-\varsigma}$.  
	\end{theorem}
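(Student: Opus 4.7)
The plan is to use the deterministic error decomposition from Theorem \ref{thr::BoundProbability}(i)(b), which already holds on an event of probability at least $1-e^{-\varsigma}$ under the present parameter choices, and to control its two summands by the flatness and smooth-boundary assumptions, respectively. Concretely, Theorem \ref{thr::BoundProbability}(i)(b) yields
\begin{align*}
\sum_{i=1}^2 \mu(B_i(D) \triangle A_i^*)
\leq 2\sum_{i=1}^2 \mu\bigl(A_i^*\setminus (A^i_{\rho_{D,\textit{out}}+\varepsilon})^{-2\sigma}\bigr)
+ \mu\bigl(M^{+2\sigma}_{\rho_{D,\textit{out}}-\varepsilon}\setminus \{f>\rho^*\}\bigr),
\end{align*}
where the geometric constants (the ``$2\sigma$'' inflations) come from the forest-based level set estimator \eqref{LevelSetEstimator}. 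The remaining task is purely measure-theoretic, using Definitions of flatness and smooth boundary.

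For the first summand I would split
\begin{align*}
\mu\bigl(A_i^*\setminus (A^i_{\rho_{D,\textit{out}}+\varepsilon})^{-2\sigma}\bigr)
\leq \mu\bigl(A_i^*\setminus A^i_{\rho_{D,\textit{out}}+\varepsilon}\bigr)
+ \mu\bigl(A^i_{\rho_{D,\textit{out}}+\varepsilon}\setminus (A^i_{\rho_{D,\textit{out}}+\varepsilon})^{-2\sigma}\bigr).
\end{align*}
The second piece is immediately bounded by $c_{\textit{bound}}(2\sigma)^{\alpha_0}$ via the $\alpha_0$-smooth boundary condition (valid because $\rho_{D,\textit{out}}+\varepsilon\in(\rho^*,\rho^{**}]$). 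For the first piece, note that the clusters $A_1^*,A_2^*$ are disjoint and contained in $\{f>\rho^*\}$, while the components $A^i_{\rho_{D,\textit{out}}+\varepsilon}$ exhaust $A_i^*\cap\{f\geq\rho_{D,\textit{out}}+\varepsilon\}$ by Definition \ref{def::ClusterProperty}. Hence summing over $i$ and invoking the flatness exponent gives
\begin{align*}
\sum_{i=1}^2\mu\bigl(A_i^*\setminus A^i_{\rho_{D,\textit{out}}+\varepsilon}\bigr)
\leq \mu\bigl(\{0<f-\rho^*<\rho_{D,\textit{out}}+\varepsilon-\rho^*\}\bigr)
\leq \bigl(c_{\textit{flat}}(\rho_{D,\textit{out}}+\varepsilon-\rho^*)\bigr)^{\vartheta}.
\end{align*}
Plugging in the bound $\rho_{D,\textit{out}}-\rho^*\leq(\tau/\underline{c}_{\textit{sep}})^{\kappa}+6\varepsilon$ from \eqref{RhoExessError} of Theorem \ref{the::rho_rates} then turns this into $\bigl(7c_{\textit{flat}}\varepsilon+c_{\textit{flat}}(\tau/\underline{c}_{\textit{sep}})^{\kappa}\bigr)^{\vartheta}$, which is exactly the first term in the stated bound (with the leading factor of two absorbed into the constant $c_{\textit{flat}}$, or equivalently by redefining $c_{\textit{flat}}$ through $2^{1/\vartheta}c_{\textit{flat}}$).

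For the second summand, the key observation is that $M_{\rho_{D,\textit{out}}-\varepsilon}$ decomposes into the two components $A^1_{\rho_{D,\textit{out}}-\varepsilon}\cup A^2_{\rho_{D,\textit{out}}-\varepsilon}$, each entirely contained in $\{f>\rho^*\}$ since $\rho_{D,\textit{out}}-\varepsilon>\rho^*$ by \eqref{RhoExessError}. Therefore the set $M^{+2\sigma}_{\rho_{D,\textit{out}}-\varepsilon}\setminus\{f>\rho^*\}$ is contained in the outer tubes $\bigcup_i\bigl((A^i_{\rho_{D,\textit{out}}-\varepsilon})^{+2\sigma}\setminus(A^i_{\rho_{D,\textit{out}}-\varepsilon})^{-2\sigma}\bigr)$, and the smooth-boundary hypothesis immediately gives $2c_{\textit{bound}}(2\sigma)^{\alpha_0}$. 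Combined with the two boundary-tube contributions $2c_{\textit{bound}}(2\sigma)^{\alpha_0}$ from the first summand, one obtains the constant $6c_{\textit{bound}}(2\sigma)^{\alpha_0}$ as claimed.

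The main obstacle is to keep the geometric containments completely rigorous: one must verify that $A^i_{\rho_{D,\textit{out}}\pm\varepsilon}$ lies inside $\{f>\rho^*\}$ (which follows from $\rho_{D,\textit{out}}\pm\varepsilon>\rho^*$, itself granted by the parameter constraints plus \eqref{RhoExessError}), that the cluster-relating maps align $A^i_{\rho_{D,\textit{out}}+\varepsilon}\subset A_i^*$ correctly so that the flatness estimate on $\{0<f-\rho^*<\rho_{D,\textit{out}}+\varepsilon-\rho^*\}$ upper-bounds the disjoint union, and that all the tubes $A^{\pm\delta}$ behave monotonically as expected. With those verifications the rest of the proof is essentially algebraic, and the result follows along the same lines as Theorem 4.7 in \cite{steinwart2015adaptive}, with the forest-density error bound of Theorem \ref{thr::BoundProbability} playing the role of the histogram or kernel uncertainty control used there.
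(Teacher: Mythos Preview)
Your proposal is correct and follows exactly the route the paper intends: the paper gives no explicit proof of this theorem but refers to Theorem~4.7 in \cite{steinwart2015adaptive}, and your argument---invoke the probabilistic error decomposition of Theorem~\ref{thr::BoundProbability}(i)(b), bound the ``vertical'' part $\sum_i\mu(A_i^*\setminus A^i_{\rho_{D,\textit{out}}+\varepsilon})$ via the flatness exponent together with the level bound \eqref{RhoExessError}, and bound all three ``horizontal'' tube contributions via the $\alpha_0$-smooth boundary hypothesis to obtain the $6c_{\textit{bound}}(2\sigma)^{\alpha_0}$ term---is precisely that argument adapted to the forest estimator. The only loose end you already flag, namely the stray factor~$2$ in front of the flatness term versus the stated constant, is a cosmetic discrepancy in the paper's statement rather than a gap in your reasoning.
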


		Note that if the separation exponent $\kappa$ is exact and finite, then the inequality \eqref{equ::tau_rho} also holds for the returned level $\rho_{D,\textit{out}}$.
	Moreover, if $\vartheta$ and $\kappa$ are of finite values, then the bound in Theorem \ref{the::sets_rate} behaves like 
	$$
	\varepsilon^{\vartheta} + \tau^{\vartheta \kappa} + \delta^{\alpha_0}
	$$ 
	and the convergence rates are presented in the following corollary.

	\begin{corollary}[Convergence Rates for Estimating the True Clusters] \label{col::sets_rate}
		Let Assumption \ref{ass::SetsAssumption} hold.
        Furthermore, for $n \geq 1$, 
        let 
        $(\varepsilon_n)$, $(\tau_n)$, $(\delta_n)$, $(\sigma_n)$, and $(p_n)$    
        be sequences with
		\begin{align*}
			\varepsilon_n & = \bigl( n^{- \lambda \alpha} (\log n)^{2+\lambda \alpha} \log \log n \bigr)^{\varrho/(2(\varrho+\vartheta))},
			\\
						\tau_n & = \bigl( n^{- \lambda \alpha} (\log n)^{2+\lambda \alpha} \log \log n \bigr)^{\vartheta\gamma/(2(\varrho+\vartheta))},
			\\
			\sigma_n & = \delta_n = \bigl( n^{- \lambda} (\log n)^{2+\lambda} \log \log \log n  \bigr)^{\vartheta/(2(\varrho+\vartheta))},
			\\
						p_n & = (n / \log n)^{2d/(c_T\alpha+4ad)},
		\end{align*}
		where $\lambda=c_T/(c_T\alpha+4ad)$, 
		$c_T=0.22$, $a=4.33$, 
		and  $\varrho:=\min\{\alpha_0,\vartheta\gamma\kappa\}$. 
	If we feed Algorithm \ref{alg::clustering} with parameters $\varepsilon_n$, $\tau_n$, $\sigma_n$, $(L_{D,\rho})_{\rho\geq 0}$ as in \eqref{LevelSetEstimator}, and $p_n$, then 
	there exists a constant $c \geq 1$ such that
the returned sets
$B_i(D)$, $i = 1, 2$, satisfy
		\begin{align*}
			\mathrm{P}^n \biggl( D : \sum_{i=1}^2 \mu(B_i(D) \triangle A_i^*) 
			\leq c \bigl( n^{- \lambda \alpha} (\log n)^{2+\lambda \alpha}\log\log n \bigr)^{\frac{\varrho\vartheta}{2(\varrho+\vartheta)}} \biggr)
			\geq 1 - 1/\log n.
		\end{align*}
	\end{corollary}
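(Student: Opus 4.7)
The plan is to reduce the corollary to a direct invocation of Theorem~\ref{the::sets_rate} with the choice $\varsigma_n := \log \log n$, which converts the confidence $1-e^{-\varsigma}$ into the required $1 - 1/\log n$. With this choice in hand, I would plug the sequences $(\varepsilon_n),(\tau_n),(\sigma_n),(p_n)$ into the finite-sample bound provided by that theorem, namely
\begin{align*}
\sum_{i=1}^2 \mu(B_i(D)\triangle A_i^*)
\leq \bigl(7 c_{\textit{flat}} \varepsilon_n + c_{\textit{flat}}(\tau_n/\underline{c}_{\textit{sep}})^{\kappa}\bigr)^{\vartheta} + 6 c_{\textit{bound}} (2\sigma_n)^{\alpha_0},
\end{align*}
and match the three resulting terms against the target rate $R_n := (n^{-\lambda\alpha}(\log n)^{2+\lambda\alpha}\log\log n)^{\varrho\vartheta/(2(\varrho+\vartheta))}$.

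Before applying the theorem, I would verify that the sequences satisfy the hypotheses inherited from Theorem~\ref{the::rho_rates}. Since $e^{2\varsigma_n} = (\log n)^2$, the quantity $\mathcal{E}_{\varsigma_n,p_n}$ from \eqref{eq::CalE_n} grows like $p_n^{2a}(\log n)^{3}$; using $p_n = (n/\log n)^{2d/(c_T\alpha+4ad)}$ and $\lambda = c_T/(c_T\alpha+4ad)$, one checks $p_n^{2a}/n \asymp n^{-\lambda\alpha}$ up to logarithms, so the right-hand side of \eqref{BoundVarepsilon} is of order $n^{-\lambda\alpha/2}$ up to polylogs. Since $\varepsilon_n$ decays strictly slower, at rate $n^{-\lambda\alpha\varrho/(2(\varrho+\vartheta))}$ with $\varrho/(\varrho+\vartheta)<1$, condition \eqref{BoundVarepsilon} holds for large $n$. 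The thickness condition $\psi(2\sigma_n) < \tau_n/2$ reduces to showing $\sigma_n^{\gamma}/\tau_n \to 0$, which follows because the $n$-exponent of the ratio equals $-\lambda\vartheta\gamma(1-\alpha)/(2(\varrho+\vartheta)) \leq 0$ and the polylog factors are in the favorable direction. Condition \eqref{BoundSplits} on $p_n$ compares $n$-exponents: the exponent needed from the right-hand side is $2d\vartheta/((c_T\alpha+4ad)(\varrho+\vartheta))$, which is strictly smaller than $p_n$'s exponent $2d/(c_T\alpha+4ad)$ since $\vartheta/(\varrho+\vartheta) < 1$. Finally $\varepsilon_n^* := \varepsilon_n + (\tau_n/\underline{c}_{\textit{sep}})^\kappa$ vanishes as $n\to\infty$, hence fits into $(0,(\rho^{**}-\rho^*)/9]$ eventually.

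With the hypotheses in place, elementary subadditivity bounds the right-hand side of the displayed inequality by a constant multiple of $\varepsilon_n^\vartheta + \tau_n^{\vartheta\kappa} + \sigma_n^{\alpha_0}$. By construction, $\varepsilon_n^\vartheta$ equals $R_n$ exactly. The term $\tau_n^{\vartheta\kappa}$ has $n$-exponent $-\lambda\alpha\vartheta^2\gamma\kappa/(2(\varrho+\vartheta))$, which is at least as negative as that of $R_n$ because $\varrho = \min\{\alpha_0,\vartheta\gamma\kappa\}\leq \vartheta\gamma\kappa$. For $\sigma_n^{\alpha_0}$, the $n$-exponent is $-\lambda\vartheta\alpha_0/(2(\varrho+\vartheta))$, and the comparison against $R_n$ reduces to the inequality $\alpha_0 \geq \alpha\varrho$, which is immediate from $\varrho \leq \alpha_0$ and $\alpha \leq 1$. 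Summing the three contributions yields the stated bound with probability at least $1-1/\log n$, the constant $c$ absorbing all numerical prefactors. I expect the main nuisance to be the polylogarithmic bookkeeping: the sequences $(\sigma_n)$ and $(\delta_n)$ carry $(\log n)^{2+\lambda}\log\log\log n$ while $(\varepsilon_n)$ and $(\tau_n)$ carry $(\log n)^{2+\lambda\alpha}\log\log n$, so the three bounds do not a priori share the exact same polylog decoration. Fortunately, the dominant $n$-exponents already satisfy strict inequality whenever $\varrho<\vartheta\gamma\kappa$ or $\alpha<1$, and in the borderline cases the polylog discrepancies can be absorbed into $c$ without degrading the stated rate.
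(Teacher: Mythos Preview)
Your proposal is correct and follows essentially the same route as the paper: set $\varsigma_n=\log\log n$, verify the hypotheses of Theorem~\ref{the::sets_rate} (inherited from Theorem~\ref{the::rho_rates}) for large $n$, apply the finite-sample bound, and reduce the resulting sum $\varepsilon_n^\vartheta+\tau_n^{\vartheta\kappa}+\sigma_n^{\alpha_0}$ to the target rate $R_n$ by comparing exponents. The paper itself just writes ``elementary calculations show'' where you spell out the exponent comparisons and polylog bookkeeping, so your version is in fact more detailed. The only point you omit is the final step of extending the bound from ``sufficiently large $n$'' to all $n\geq 1$: the paper uses the trivial bound $\sum_{i=1}^2\mu(B_i(D)\triangle A_i^*)\leq 2\mu(\mathcal{X})<\infty$ and enlarges $c$ accordingly.
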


	\subsection{Comments and Discussions}

	This subsection presents some comments and discussions on the established learning rates for estimating the optimal level $\rho^*$ and the true clusters 
	$A_i^*$, $i = 1, 2$.
	
	First of all, let us compare our convergence rates for estimating the optimal level with existing convergence rates in the literature. 
	Corollary \ref{col::rho_rates} tells us that
	for any $\epsilon > 0$,
	our learning rate is of the form
	\begin{align*}
	n^{- \frac{\gamma \kappa}{2(\gamma\kappa+\epsilon)}
	     \cdot \frac{c_T \alpha}{c_T\alpha+4ad}},
	\end{align*}
	where $c_T=0.22$, $a=4.33$. 
	In contrast, 
	\cite{steinwart2015adaptive} has shown that
	the clustering algorithm using histogram density estimator learns with the rate
	\begin{align*}
	n^{-\frac{\gamma\kappa}{2\gamma\kappa+d}}.
	\end{align*}
	Simple algebraic calculations show that 	
	if $\epsilon$ is sufficiently small and $8 a \gamma \kappa < c_T \alpha$, then 
	this rate will be slower than ours.	
	However, if the best separation exponent $\kappa = \infty$, that is, the clusters $A_1^*$ and $A_2^*$ do not touch each other, then our learning rate becomes 
	\begin{align*}
	n^{-\frac{c_T \alpha}{2(c_T\alpha+4ad)}}
	\end{align*}
	which turns out to be slower than the rate $n^{-\frac{1}{2}}$ established in \cite{steinwart2015adaptive}.
	
    On the other hand, concerning with the learning rates for 
    estimating the true clusters, 
    Corollary \ref{col::sets_rate} shows that our algorithm learns with rate
	\begin{align*}
	n^{-\frac{\varrho\vartheta}{2(\varrho+\vartheta)} \cdot 
	       \frac{c_T\alpha}{c_T\alpha+4ad}},
	\end{align*}
	where $c_T=0.22$ and $a=4.33$. 
	Obviously, this rate is strictly slower than the rate 
	$n^{-\frac{\varrho\vartheta}{2\varrho+\vartheta d}}$
	derived by \cite{steinwart2015adaptive}. Nevertheless, 
	in the case of
	$d \varrho \vartheta c_T\geq 2(4ad\varrho+c_T\alpha\vartheta+4\alpha d\vartheta)$,
	it can be easily shown that our rate is faster than 
	the rate $n^{-\frac{\alpha}{2\alpha+d}}$ established in \cite{sriperumbudur2012consistency}.

    Note that if Assumption \ref{ass::SetsAssumption} holds with
$\alpha_0 = 1$ and $\varrho \gamma \kappa \leq 1$, then the convergence rates for estimating $\rho^*$ and the clusters can be achieved simultaneously.
In contrast, in the case of $\varrho \gamma\kappa > 1$, 
the estimation of $\rho^*$ is easier than the estimation of the level set $M_{\rho^*}$, more detailed discussion can be found in \cite{steinwart2015adaptive}.

Finally, we mention that in general, our convergence rates can be slower than other clustering algorithms due to the nature of random partition, which in turn leads to diversity and thus accuracy of our clustering algorithm.

	\section{Experimental Performance} \label{sec::ExperimentalPerformance}

	In this section, we first summarize the proposed best-scored clustering forest algorithm in Subsection \ref{subsec::AlgorithmConstruction}, and
	discuss the model selection problem of various clustering algorithms  in Subsection \ref{subsec::ExperimentalSetup}. 
	Then we compare our clustering algorithm with other proposals 
	both on synthetic data in Subsection \ref{subsec::SyntheticData}
	and real data sets in Subsection \ref{subsec::RealDataAnalysis}, respectively.

	\subsection{Algorithm Construction} \label{subsec::AlgorithmConstruction}

	\begin{algorithm}[h]
		\caption{Estimate clusters by best-scored random forest density estimation}	
		\label{alg::clustering1}	
		\SetAlgoNoLine	
		\KwIn{
			$D=\{x_1,...x_n\}$, number of density trees $m$, some ratio $r>0$, $q>0$, some positive integer $k$, $k_N$, and $k_c$.
		}
		$\hat{f}(\cdot)$ $\leftarrow$ density estimate by random forest with the number of splits $\lfloor n*r \rfloor$ and the number $m$ of best-scored density trees each generated from $k$ random trees based on $\{x_1,\ldots,x_n\}$.\\
		$\hat{D}\leftarrow \{x_j:\hat{f}(x_j)>\hat{f}(x)_q\}$, where $\hat{f}(x)_q$ denotes the $q$-quantile of $\{\hat{f}(x_i),i=1,\ldots,n\}$.\\
		$G$ $\leftarrow$ $\varepsilon$ similarity graph on $\hat{D}$.\\
		$j$ $\leftarrow$ 0\\
		\Repeat
		{$M=k_c$}
		{
			$j \leftarrow j+1$;\\
			$\lambda_j\leftarrow\hat{f}(x)_{(j)}$, where $\hat{f}(x)_{(j)}$ denotes the $j$-th smallest value of $\{\hat{f}(x):x\in\hat{D}\}$;\\
			$L_{\lambda_j}\leftarrow\{x_i\in \hat{D}:\hat{f}(x_i)\geq \lambda_j\}$;\\
			$G_j \leftarrow $ subgraph of $G$ induced by $L_j$;\\
			Identify the connected components $B_1',\dots,B_M'$ of $G_j$.
		}
		Allocate background points to these clusters with $k_N$-nearest neighbor classification.\\
		\KwOut{$\rho_{D}^*:=\lambda_j$ and $k_c$ clusters.}
	\end{algorithm}

	Our proposed best-scored clustering forest algorithm 
	is presented in detail in Algorithm \ref{alg::clustering1}.	
	In order to measure the similarity between two data clusterings, we adopt the adjusted rand index (ARI) through all experiments which can be formulated as follows:
	Given a set $S$ of n elements and two clusterings of these elements, namely $X=\{X_1,X_2,\ldots,X_r\}$ and $Y=\{Y_1,Y_2,\ldots,Y_s\}$, the overlap between $X$ and $Y$ can be summarized with $n_{ij}$ which stands for the number of objects in set $X_i\cap Y_j$, 
	$a_i=\sum^s_{j=1} n_{ij}$, and $b_i=\sum^r_{j=1} n_{ji}$.
	Then the Adjusted Rand Index is defined as 
	\begin{align*}
		\mathrm{ARI}
		= \frac{\displaystyle \sum_{ij} \binom {n_{ij}} 2 - \biggl[ \sum_i \binom {a_i} 2 \sum_j \binom {b_j} 2 \biggr] \bigg/ \binom n 2}{\displaystyle \frac{1}{2} \biggl[ \sum_i \binom {a_i} 2+ \sum_j \binom {b_j} 2 \biggr] - \biggl[ \sum_i \binom {a_i} 2 \sum_j \binom {b_j} 2 \biggr] \bigg/ \binom n 2}.
	\end{align*}

	\begin{figure*}[htbp] 
		\centering
		\begin{minipage}[b]{0.45\textwidth}
			\centering
			\includegraphics[width=\textwidth]{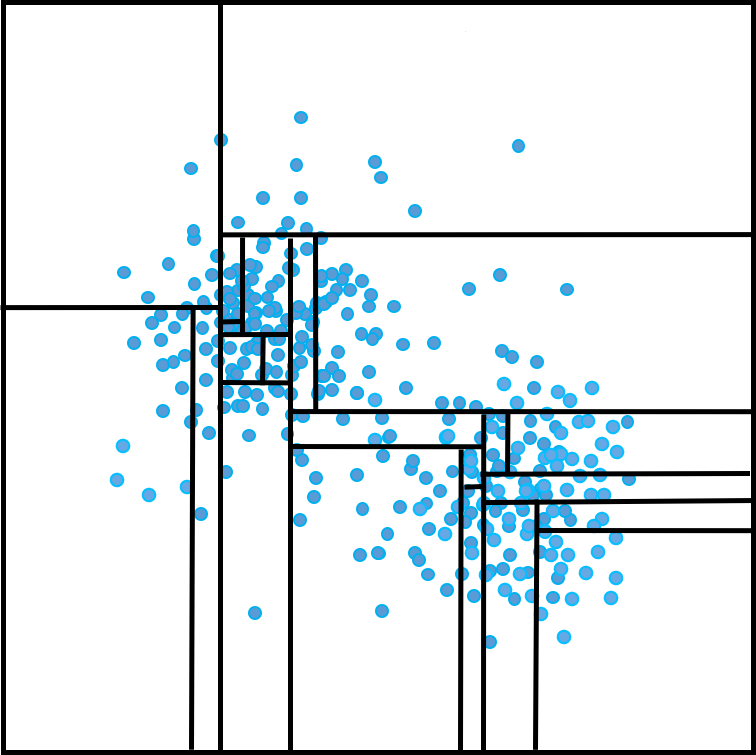}
			\centering
		\end{minipage}
		\qquad
		\begin{minipage}[b]{0.45\textwidth}
			\centering
			\includegraphics[width=\textwidth]{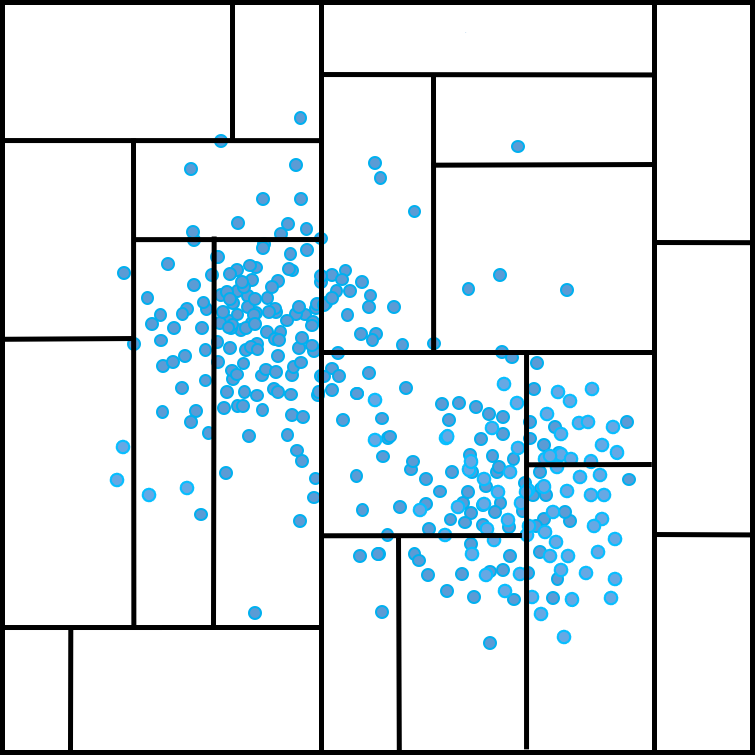}
		\end{minipage}
		\caption{Left: Adaptive method \emph{vs.} Right: Purely random method}
		\label{fig:adaptiveVSpurelyrandom}
	\end{figure*}

	Finally, it's worth mentioning that 
	in order to improve the efficiency and accuracy of our clustering algorithm,
	we also employ the adaptive splitting method (see Figure \ref{fig:adaptiveVSpurelyrandom})
	which is proposed for the density estimation problem, 
	more details please refer to Section 5.1 in \cite{hang2018best}.

	\subsection{Experimental Setup} \label{subsec::ExperimentalSetup}

	In our experiments, we compare the clusters with true classes generated by computing the following performance measures ARI (adjusted rand index) of different approaches. We conduct comparisons among some baseline density-based methods including Fast Clustering Using Adaptive Density Peak Detection (ADP-Cluster), Density-Based Spatial Clustering of Applications with Noise (DBSCAN), $k$-means and PDF-Cluster.

	\begin{itemize}
		\item 
		ADP-Cluster: The algorithm is built and improved upon the idea of \cite{2016Fast} by finding density peaks in a density-distance plot generated from local multivariate Gaussian density estimation. There are two leading parameters: the bandwidths of the multivariate kernel density and the number of the clusters $k$ determined automatically by validation criterion.
		\item 
		DBSCAN: The algorithm can be traced back to \cite{Ester1996A}. It is also a density-based clustering non-parametric algorithm while it groups points that are closely packed together. The algorithm requires two parameters: $\varepsilon$ and the minimum number of points {\tt minPts} required to form a dense region.
		\item 
		$k$-means: The only parameter in $k$-means is the number of cluster $k$. The idea goes back to \cite{Macqueen67somemethods}
		and is popular for cluster analysis in data mining. It is significant to run diagnostic checks for determining the number of clusters in the data set.
		\item 
		PDF-Cluster: The leading parameters in the algorithm are $h$ as bandwidth of kernel density estimation selected by least-square cross validation and $\lambda$ as tolerance threshold to set edges between two observations. The idea was proposed by \cite{Menardi2014An} developing a viable solution to the problem of finding connected sets in higher dimensional spaces.
	\end{itemize}

	To notify, more free parameters are alternative in the best-score clustering forest algorithm compared with other methods. To be specific, these free parameters include the number of density trees in the forest $m$, the ratio of number of splits for trees in the forest to the sample size $r$, the positive number $q$ for selecting low-density points as background points, the positive integer $k_N$ to allocate background points to clusters with $k$-NN classification as well as the number of clusters $k_c$.

	For DBSCAN, the parameter $\varepsilon$ is picked from $0.01$ to $0.30$ by $0.01$, {\tt minPts} is default and $k$ is picked from $\{1,2,3,4,5\}$. For $k$-means, the parameter $k$ is selected from $\{2,3,4,5,6,7,8,9,10\}$. For PDF-Cluster, the parameter $\lambda$ is selected from 0.01 to 0.51 by 0.01 and for our method, the parameter $m$ is set to be 100, the ratio $r$ is selected from $\{0.05, 0.1, 0.15, 0.2, 0.3, 0.4, 0.5, 0.6, 0.7, 0.8\}$ and $\varepsilon$ is selected from $q_{\varepsilon}$-quantile of the pairwise distances $\{\|x_i-x_j\|_2,1\leq i<j \leq n\}$, where $q_\varepsilon$ is chosen from $\{ 0.01$, $0.03$, $0.05$, $0.07$, $0.09$, $0.12$, $0.15$, $0.20 \}$, the parameter $k_N$ of $k$-NN is selected from $\{1,2,5\}$ and the number of clusters $k_c$ is selected from $\{2,3,4,5,6\}$. It's worth pointing out that both DBSCAN and our method assigns only a fraction of points to clusters (the foreground points), while leaving low-density observations (background points) unlabeled. Therefore, assigning the background points to clusters can be done with $k$-NN algorithm.
	In our experiment, for the algorithm with determined results, the performance is reported with the best parameter setting while for the algorithm with stochastic results, the experiment is repeated 10 times and the average performance is reported with the best parameter setting.

	We simply use the Python-package scikit-learn for DBSCAN and $k$-means and R package for ADP-Cluster and PDF-Cluster.

	\subsection{Synthetic Data}  \label{subsec::SyntheticData}

	In this subsection, we apply the density-based clustering methods mentioned above
	on four artificial examples. To be specific, we simulate four two-dimensional toy datasets with different shapes of clusters: 
	\begin{itemize}
		\item
	\textbf{noisy circles}: contains a large circle containing a smaller circle with 
		two-dimensional noise;
		\item
	\textbf{varied blob}: is generated by isotropic Gaussian blobs with variant variances for clustering;
		\item
	\textbf{noisy moons}: is made up of two interleaving half circles adding standard deviation of Gaussian noise;
		\item
	\textbf{aniso-bolb}: is anisotropicly distributed, i.e., the data set is generated by anisotropic Gaussian blobs.
	\end{itemize}
	In order to see the scalability of these algorithms, we choose the size big enough ($n = 1500$), but not too big to avoid too long running time.

	   \begin{figure*}[htbp]
		\begin{minipage}[t]{0.49\textwidth}  
			\centering  
			\includegraphics[width=\textwidth]{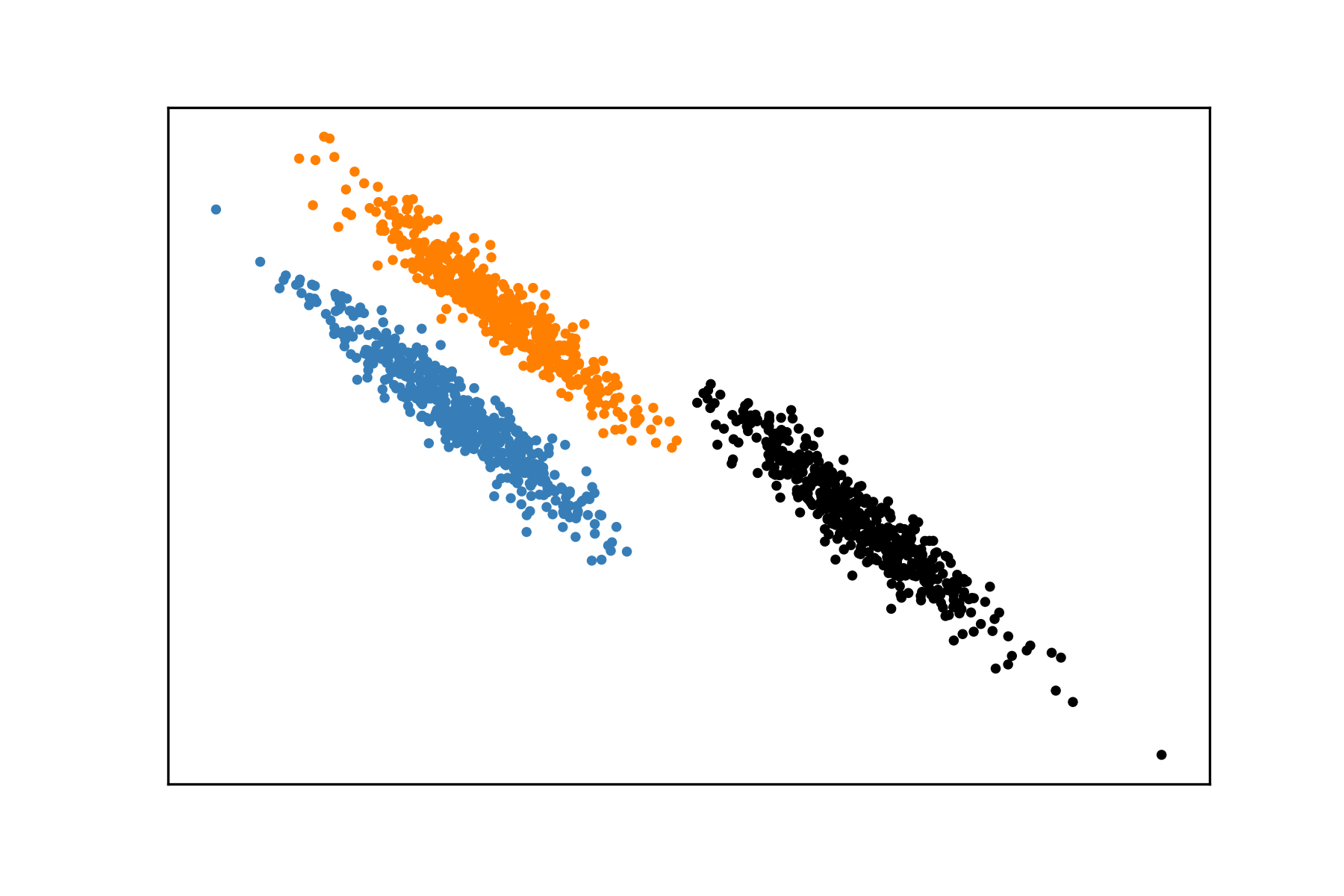} \includegraphics[width=\textwidth]{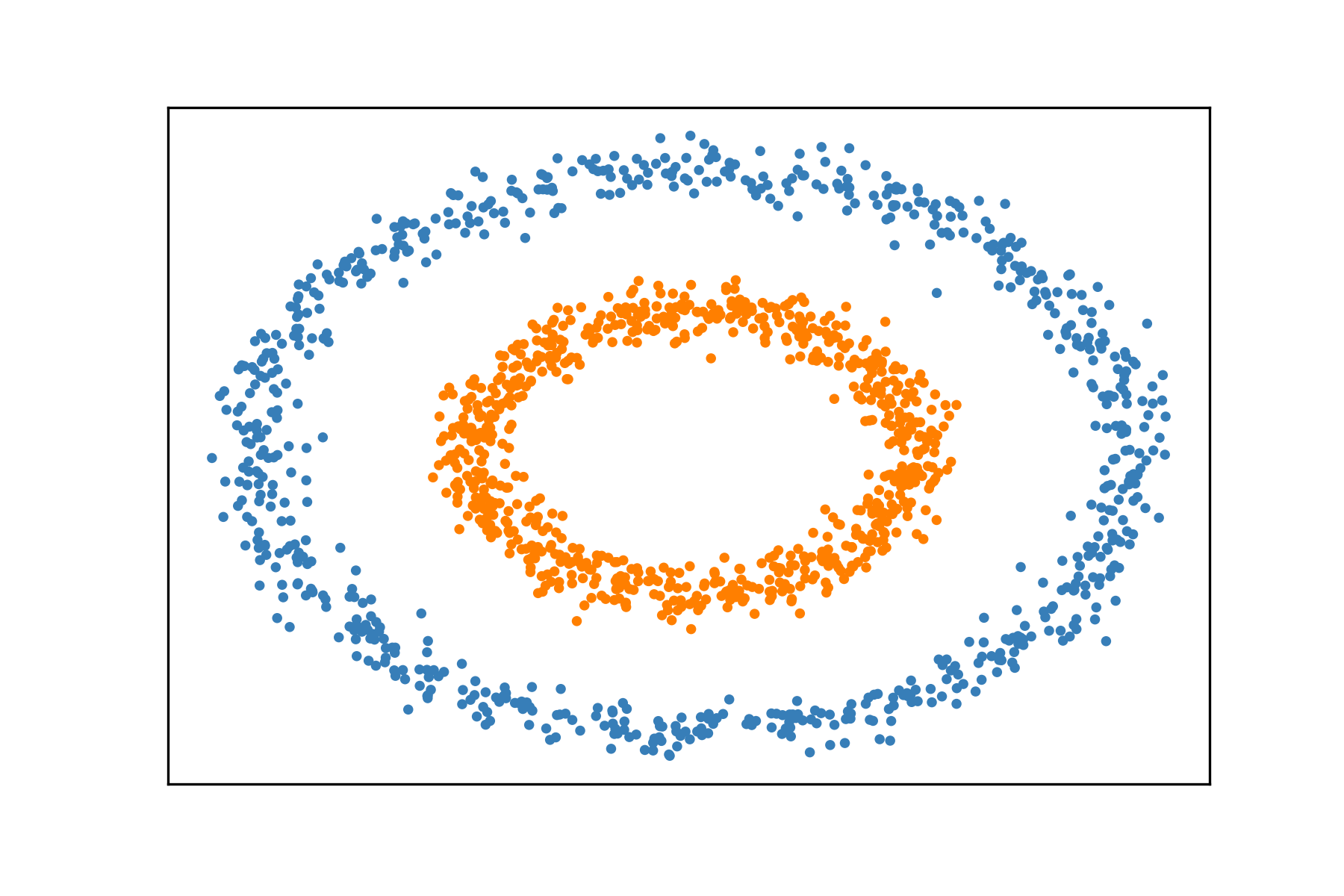}	
		\end{minipage} 
		\begin{minipage}[t]{0.49\textwidth}  
			\includegraphics[width=\textwidth]{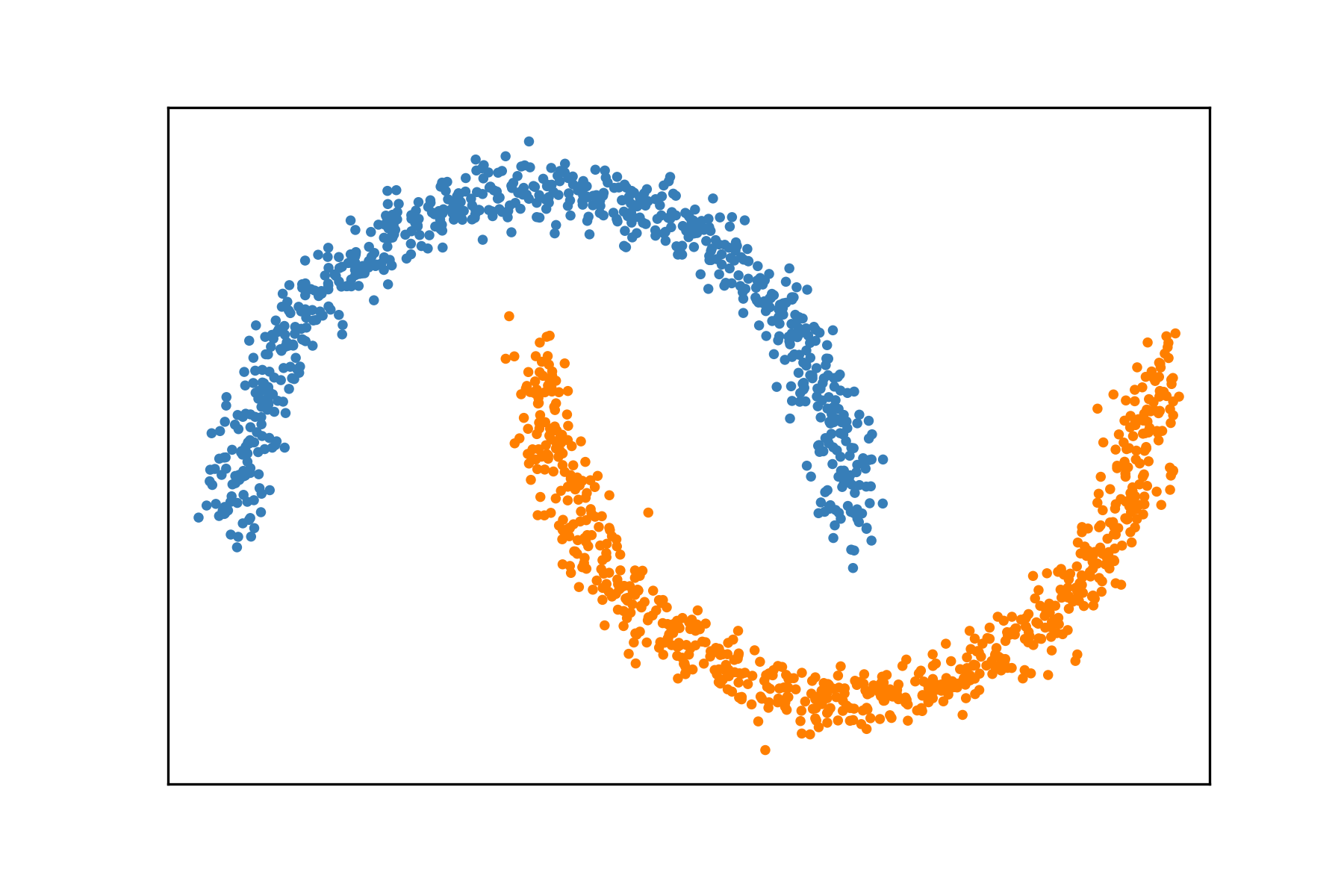} \includegraphics[width=\textwidth]{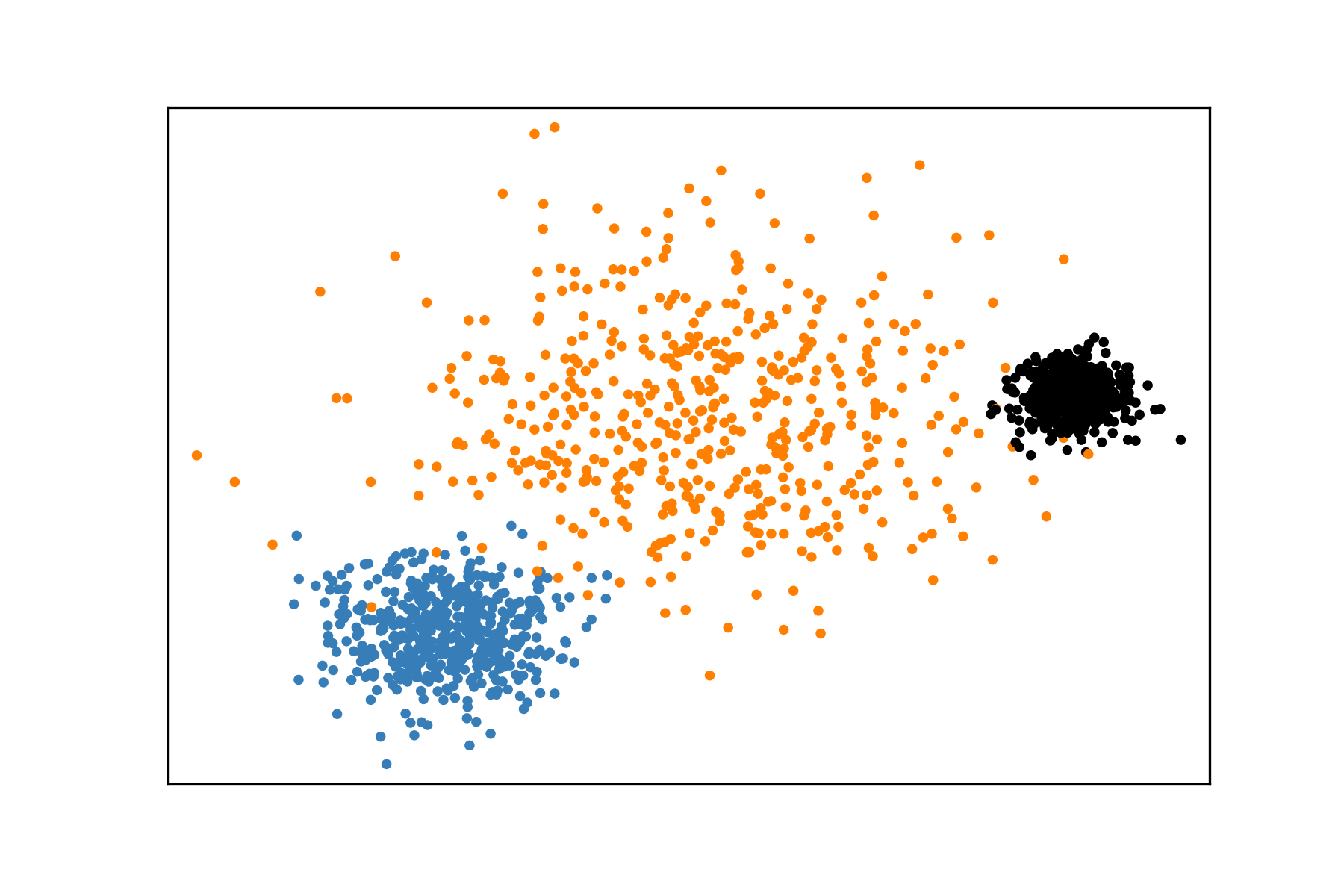}			  
		\end{minipage}  
		\centering  
		\caption{Synthetic data. The picture at the upper left shows \textit{aniso-blob} data; the picture at the upper right shows \textit{noise moons} data; the picture at the bottom left shows \textit{noise circles} data and the picture at the bottom right shows \textit{varied} data. The clusters are distinguished by different colors.}
		\label{fig:synthetic data}
	\end{figure*} 
	
	\begin{table*}[h] 
		\setlength{\tabcolsep}{9pt}
		\centering
		\captionsetup{justification=centering}
		\caption{\footnotesize{Average ARI (adjusted rand index) over Four simulated data sets}}
		\resizebox{\textwidth}{13mm}{
			\begin{tabular}{c|ccccc}
				\toprule
				Datasets & ADP-Cluster	& DBSCAN &	$k$-means & PDF-Cluster	& Ours
				\\
				\hline 
				\hline
				\text{aniso-blob}	& 0.519693602	&\textbf{1}	&0.590986466	&0.992013365 &	\textbf{1}
				\\
				\hline
				\text{noisy circles} &0.165814882	&\textbf{1}	 & 0.156841194	& 0.189604897	&\textbf{1}
				\\
				\hline
				\text{noisy moons}	& 0.491529298 &\textbf{1}	& 0.50093108	&\textbf{1}	&\textbf{1}
				\\
				\hline
				\text{varied blob}	& 0.844780818	&0.906919484&	0.824744996	&\textbf{0.939402597} &	0.936238496	
				\\
				\bottomrule	
		\end{tabular}}
		\begin{tablenotes}
			\footnotesize
			\item{*} The best results are marked in \textbf{bold}.
		\end{tablenotes}
		\label{tab::1}
	\end{table*}

	Table \ref{tab::1} reports the ARI of our clustering algorithm and other clustering methods with the best parameter setting over four toy datasets. It can be evidently observed from the Table \ref{tab::1} that our algorithm has the best ARI performances on almost all data sets, further demonstrating the effectiveness of the algorithm. Our algorithm as well as DBSCAN recognizes the correct clusters on three data sets: {\tt aniso-blob}, {\tt noisy circles}, and {\tt noisy moons}.

	\subsection{Real Data Analysis}  \label{subsec::RealDataAnalysis}

	In our experiment, to assess the performance of various clustering methods, we evaluate the ARI among ADP-Cluster, DBSCAN, $k$-means and PDF-Cluster and our best-scored clustering forest on the following real data sets from UCI and Kaggle:
	\begin{itemize}
		\item 
		\textbf{Appendicitis}: The appendicitis data collected in the medical field was first put forward in \cite{Weiss1991Computer}.
		The whole data represents $7$ medical measures taken over $106$ patients on which the class label represents if the patient has appendicitis (class label $1$) or not (class label $0$).
		\item 
		\textbf{Customers}: The data set refers to clients of a wholesale distributor including the annual spending in monetary units on diverse product categories. This database available on UCI contains $440$ observations of dimension $8$ representing attributes such as {\tt fresh}, {\tt milk}, {\tt grocery}, {\tt frozen}, etc.
		\item 
		\textbf{Flee-beetles}: For three species of flea-beetles: concinna, heptapotamica, and heikertingeri, the whole data set was collected with six measurements: {\tt tars1}, {\tt tars2}, {\tt head}, {\tt aede1}, {\tt aede2} and {\tt aede3}. The whole data set consists of $74$ samples.
		\item 
		\textbf{Iris}:  Regarded as one of the best known database shown in the pattern recognition literature, 
		Iris contains $3$ classes of $50$ instance each, where each class refers to a type of iris plant. 
		The learning goal is to group the iris data with four features: {\tt sepal length}, {\tt sepal width}, {\tt petal length}, and {\tt petal width} into the true classes.
		\item 
		\textbf{Oliveoil}: Oliveoil comprises $572$ observations from oil analysis using measurements of different specimen for olive oil produced in various regions in Italy which can be further divided into three macro-areas: Centre-North, South, Sardinia. 
		This $8$-dimensional input data represent attributes such as {\tt palmitic}, {\tt palmitoleic}, {\tt stearic}, {\tt oleic}, {\tt linoleic}, {\tt linolenic}, {\tt arachidic}, {\tt eicosenoic}.
		The learning task is to reconstruct the macro-area membership.
	    \item 
\textbf{Wifi-localization}: The database comprising $2000$ observations was collected in indoor space by observing signal strengths of seven WiFi signals visible on a smartphone. The experiment was performed to explore how wifi signal strengths can be used to determine one of the indoor locations.
	    \item 
	    \textbf{Wine}: This data set including $178$ observations are the results of a chemical analysis of wines grown in the same region in Italy but derived from three different cultivars. The analysis determined the quantities of $13$ constituents found in each of the three types of wines.
	\end{itemize}

	\begin{table*}[h] 
		\setlength{\tabcolsep}{9pt}
		\centering
		\captionsetup{justification=centering}
		\caption{\footnotesize{Average ARI (adjusted rand index) over UCI Data Sets}}
		\resizebox{\textwidth}{13mm}{
			\begin{tabular}{c|ccccc}
				\toprule
				Datasets & ADP-Cluster	& DBSCAN &	$k$-means & PDF-Cluster	& Ours
				\\
				\hline 
				\hline
				\text{appendicitis}	& \textbf{0.525568993} &	0.419405321	&0.318301412	&0.468845316 &	0.518105812
				\\
				\hline
				\text{customers} &	0.260017768	&0.196905491&	0.400496777	& 0.012502467	&\textbf{0.625997132}\\
				\hline
				\text{flea} &0.879214629	& \textbf{1} & 	0.957543737	&\textbf{1} &	0.973650744\\
				\hline
				\text{iris}	& 0.568115942 &	0.61410887	& 0.716342113	&0.568115942 &	\textbf{0.778123403}
				\\
				\hline
				\text{oliveoil}	& 0.572903083	 & \textbf{1}	& 0.627921018	& 0.865827662	& \textbf{1}
				\\
				\hline
				\text{wifi localization} & \textbf{0.914080542} &0.869141447 & 0.314316461 &0.232882926 & 0.909102543
				\\
				\hline
				\text{wine} & 0.817666167 & 0.847096681 & 0.622913 & 0.845786696 & \textbf{0.872752411}
				\\
				\bottomrule	
		\end{tabular}}
		\begin{tablenotes}
			\footnotesize
			\item{*} The best results are marked in \textbf{bold}.
		\end{tablenotes}
		\label{tab::2}
	\end{table*}

	Table \ref{tab::2} summaries the ARI on the real data sets mentioned above. Careful observations will find that for most of these data sets,
	the best-scored forest clustering has significantly larger ARI than other density-based clustering methods. This superiority in cluster accuracy may be attributed to both the density estimation accuracy resulted from general architecture of random forest and the advantage of the density-based clustering method to group the data into arbitrarily shaped clusters. 
	We mention that interested readers can further
	tune the free parameters and 
	we believe that more accurate results could be obtained.

	\section{Proofs} \label{sec::Proofs}

	\begin{proof}[Proof of Theorem \ref{theorem::UncertaintyControlForest}]
		\textit{(i)}
		Let us first prove the inclusion $M_{\rho+\varepsilon}^{-2\sigma} \subset L_{D,\rho}$.
		To this end, we fix an $x \in M_{\rho+\varepsilon}^{-2\sigma}$, then we have $x \notin (\mathbb{R}^d \setminus M_{\rho+\varepsilon})^{+2\sigma}$, that is, for all $x' \in \mathbb{R}^d \setminus M_{\rho+\varepsilon}$, we have $\|x-x'\|_2>2\sigma$. In other words, if $x' \in \mathbb{R}^d$ satisfying $\|x-x'\|_2\leq 2\sigma$, then we have $x'\in M_{\rho+\varepsilon}$. 
        
        Now we show that for all $x_i\in D$, we have 
        \begin{align} \label{ChaoChaoIsAPig}
        f_{D,Z_{\mathrm{E}}}(x_i)\geq \rho 
        \quad
        \text { or } 
        \quad
        \|x-x_i\|_2> \sigma, 
        \end{align}
        whose proof will be conducted in the following by contradiction.       
		Suppose that there exists a sample $x_i 
		\in D$ with $f_{D,Z_{\mathrm{E}}}(x_i)<\rho$ and $\|x-x_i\|_2\leq \sigma$. 
		If we denote $A_{Z_t}(x_i)$ as the unique cell of the partition $A_{Z_t,\mathrm{P}}$ of the $t$-th tree in the forest where $x_i$ falls, 
		then the assumption $\mathrm{diam}(A_{Z_t}(x_i)) \leq \delta\leq \sigma$ implies that
		for any $x'\in A_{Z_t}(x_i)$, there holds
		$$
		\|x-x'\|_2
		\leq \|x-x_i\|_2+\|x_i-x'\|_2\leq 2\sigma
		$$ 
		and consequently we have $x'\in M_{\rho+\varepsilon}$, i.e., $A_{Z_t}(x_i) \subset M_{\rho+\varepsilon}$ for $t=1,\ldots,m$. This together with the normality of $\mathrm{P}$ yields
		$$
		\mu(A_{Z_t}(x_i)\setminus \{f \geq \rho+\varepsilon\})\leq \mu(M_{\rho+\varepsilon}\setminus \{f \geq \rho+\varepsilon\})=0,
		$$ 
		which leads to
		\begin{align*}
			\mathrm{P}(A_{Z_t}(x_i))
			& =\int_{A_{Z_t}(x_i)\cap \{f\geq \rho+\varepsilon\}} f d\mathrm{P}+\int_{A_{Z_t}(x_i)\setminus \{f\geq \rho+\varepsilon\}} f d\mathrm{P} 
			\\
			& =\int_{A_{Z_t}(x_i)\cap \{f \geq \rho+\varepsilon\}} f d\mathrm{P}\geq  \mu(A_{Z_t}(x_i))(\rho+\varepsilon).
		\end{align*}
		Consequently, we have 
		\begin{align} \label{UncertaintyReductio1}
			f_{\mathrm{P},Z_{\mathrm{E}}}(x_i)=\frac{1}{m}\sum^m_{t=1}\frac{\mathrm{P}(A_{Z_t}(x_i))}{\mu(A_{Z_t}(x_i))}\geq \rho+\varepsilon.
		\end{align}
		By $f_{D,Z_{\mathrm{E}}}(x_i)<\rho$ and $\|f_{D,Z_{\mathrm{E}}}(x_i)-f_{\mathrm{P},Z_{\mathrm{E}}}(x_i)\|_{\infty}\leq \varepsilon$, we find $f_{\mathrm{P},Z_{\mathrm{E}}}(x_i)< \rho+\varepsilon$, which contradicts (\ref{UncertaintyReductio1}). Therefore, for all $x_i\in D$, we have $f_{D,Z_{\mathrm{E}}}(x_i)\geq \rho$ or $\|x-x_i\|_2> \sigma$.
		
		Next, we show that there exist a sample $x_i \in D$ such that $\|x-x_i\|_2\leq \sigma$ by contradiction. If we denote $A_{Z_t}(x)$ as the unique cell of the partition $A_{Z_t,p}$ of the $t$-th tree in the forest where $x$ falls, then for all $x_i$, $i=1,\ldots,n$, we have $\|x-x_i\|_2>\sigma\geq \delta$, and consequently $A_{Z_t,p}\cap D=\emptyset$, $t=1,\ldots,m$. This leads to $f_{D,Z_{\mathrm{E}}}(x)=0$, which contradicts $f_{\mathrm{P},Z_{\mathrm{E}}}(x)\geq \rho+\varepsilon$ with the condition $\|f_{D,Z_{\mathrm{E}}}(x)-f_{\mathrm{P},Z_{\mathrm{E}}}(x)\|_{\infty}\leq \varepsilon$. Therefore, we conclude that there exists a sample $x_i \in D$ satisfying $\|x-x_i\|_2\leq \sigma$. This together with \eqref{ChaoChaoIsAPig} implies
	    $f_{D,Z_{\mathrm{E}}}(x_i)\geq \rho$, which means
	    $x \in L_{D,\rho}$. This finishes the proof of
		$M_{\rho+\varepsilon}^{-2\sigma} \subset L_{D,\rho}$.
		
		\textit{(ii)} To prove the second inclusion $L_{D,\rho} \subset M_{\rho-\varepsilon}^{+2\sigma}$,
         let us fix an $x \in L_{D,\rho}$, then there exists $x'\in D$ satisfying
		 $\|f_{D,Z_{\mathrm{E}}}(x')\|_2\geq \rho$ and $\|x-x'\|_2\leq \sigma$. 
		 Moreover, since $\|f_{D,Z_{\mathrm{E}}}(x)-f_{\mathrm{P},Z_{\mathrm{E}}}(x)\|_{\infty} \leq \varepsilon$, we have 
		 \begin{align} \label{XiaoYuIsAPigToo}
		 f_{\mathrm{P},Z_{\mathrm{E}}}(x')\geq \rho-\varepsilon. 
		 \end{align}
		 
		 Now we are able to prove the inclusion $L_{D,\rho} \subset M_{\rho-\varepsilon}^{+2\sigma}$
		 by contradiction.
		 Suppose that $x\notin M_{\rho-\varepsilon}^{+2\sigma}$. 
		 Since $\|x-x'\|_2\leq \sigma$,
		 then we have $B(x',\sigma)\subset \mathbb{R}^d\setminus M_{\rho-\varepsilon}$. 
		 If $A_{Z_t}(x')$ stands for the unique cell of the partition $A_{Z_t,p}$ of the $t$-th tree in the forest where $x'$ falls, since $\mathrm{diam}(A_{Z_t}(x'))\leq \delta\leq \sigma$, we thus have $A_{Z_t}(x')\subset B(x',\sigma)\subset \mathbb{R}^d\setminus M_{\rho-\varepsilon}$. This together with the normality of $\mathrm{P}$ yields
		 $$
		 \mu(A_{Z_t}(x')\setminus \{f < \rho-\varepsilon\})\leq \mu((X \setminus M_{\rho-\varepsilon})\setminus \{f > \rho-\varepsilon\})=0, 
		 $$
		 which leads to 
		\begin{align*}
			\mathrm{P}(A_{Z_t}(x'))
			& = \int_{A_{Z_t(x')}\cap \{h<\rho-\varepsilon\}} h \, d\mathrm{P} +\int_{A_{Z_t(x')} \setminus \{h<\rho-\varepsilon\}}h \, d\mathrm{P}  
			\\
			& = \int_{A_{Z_t(x')}\cap \{h<\rho-\varepsilon\}} h \, d\mathrm{P} < \mu(A_{Z_t(x')})(\rho-\varepsilon).
		\end{align*}
		Consequently we have
		\begin{align} \label{UncertaintyReductio2}
			f_{\mathrm{P}, Z_{\mathrm{E}}}(x')=\frac{1}{m}\sum^m_{t=1}\frac{\mathrm{P}(A_{Z_t}(x'))}{\mu(A_{Z_t}(x'))}< \rho-\varepsilon
		\end{align}
		which contradicts \eqref{XiaoYuIsAPigToo}.
        Therefore, we conclude that $x \in M_{\rho_\varepsilon}^{+2\sigma}$. This completes the proof of $L_{D,\rho} \subset M_{\rho-\varepsilon}^{+2\sigma}$.
	\end{proof}

	\begin{proof}[Proof of Theorem \ref{thr::BoundProbability}]
		The proof can be conducted by applying Theorem \ref{thr::LevelSetBound} directly and hence we need to verify its assumptions.
		
		Let us first prove that if 
		$\varepsilon^*\leq (\rho^{**}-\rho^*)/9$, $\delta \in (0,\delta_{\textit{thick}}]$, $\varepsilon\in (0,\varepsilon^*]$ and $\psi(\delta)<\tau$, then we have $\tau\leq \tau^*(\varepsilon^*)$.
		To this end, we define a set $E$ by
		\begin{align*}
		E := \{ \varepsilon' \in (0,\rho^{**}-\rho^*] : \tau^*(\varepsilon')\geq \tau \}.
		\end{align*}
		Obviously, we have $E \neq \emptyset$, since $\varepsilon^* < \infty$.
		This implies that there exists an $\varepsilon_0 \in E$ such that $\varepsilon_0 \leq \inf E+\varepsilon\leq \varepsilon^*$. Using the monotonicity of $\tau^*$, we conclude that $\tau \leq \tau^*(\varepsilon_0) \leq \tau^*(\varepsilon^*)$. 
		
		Next, we prove that for all $\rho > 0$,
		$(L_\rho)_{\rho\geq 0}$ satisfy \eqref{UncertaintyControl} 
		with probability not less than $1 - e^{-\varsigma}$.
		For $t = 1, \dots, m$,
		let the events $B_{1,t,\varepsilon}$ and $B_{2,t,\delta}$ be defined by
		\begin{align}
		B_{1,t,\varepsilon} & := \{ \|f_{\mathrm{D},Z_t} - f_{\mathrm{P},Z_t}\|_{\infty} \leq  \varepsilon \},
		\label{B1t}
		\\
		B_{2,t,\delta} & := \{ \forall A\in \mathcal{A}_{Z_t,p}:\mathrm{\mathrm{diam}}(A)\leq \delta \}.
		\label{B2t}
		\end{align}
		According to Proposition 15 and Inequality (19) in \cite{hang2018best},
		 there hold
		\begin{align*}
			\mathrm{P}(B_{1,t,\varepsilon}) \geq 1 - e^{-\varsigma} / (2m)
		\quad
		\text{ and }
		\quad
			\mathrm{P}(B_{2,t,\delta}) \geq 1 - e^{-\varsigma} / (2m)
		\end{align*}
		for all $t = 1, \dots, m$. Moreover,
		for the forest, we define the events $B_{1,\mathrm{E}}$ and $B_{2,\mathrm{E}}$ by
		\begin{align}
		B_{1,\mathrm{E},\varepsilon} & := \{ \| f_{\mathrm{D},Z_\mathrm{E}} - f_{\mathrm{P}, Z_\mathrm{E}} \|_{\infty} \leq  \varepsilon \},
		\label{B1E}
		\\
		B_{2,\mathrm{E},\delta} & := \{ \forall A \in \mathcal{A}_{Z_t,p} : \mathrm{diam}(A) \leq \delta, \, t=1,\dots,m \}.
		\label{B2E}
		\end{align}
		Since the splitting criteria $Z_1,\dots,Z_m$ are i.i.d.~from $\mathrm{P}_Z$, then we have
		\begin{align*}
			\mathrm{P}(B_{1,\mathrm{E},\varepsilon}) \geq 1 - e^{-\varsigma} / 2
			\quad
			\text{ and }
			\quad
			\mathrm{P}(B_{2,\mathrm{E},\delta}) \geq 1 - e^{-\varsigma} / 2
		\end{align*}
		and consequently we obtain
		\begin{align*}
			\mathrm{P}(B_{1,\mathrm{E},\varepsilon} \cap B_{2,\mathrm{E},\delta}) 
			= 1 - \mathrm{P}(B_{1,\mathrm{E},\varepsilon}^c \cup B_{2,\mathrm{E},\delta}^c) 
			\geq 1 - \mathrm{P}(B_{1,\mathrm{E},\varepsilon}^c) 
			          - \mathrm{P}(B_{2,\mathrm{E},\delta}^c) 
			\geq 1 - e^{-\varsigma}.
		\end{align*}
		This proves that for all $\rho > 0$,
		$(L_\rho)_{\rho\geq 0}$ satisfy \eqref{UncertaintyControl} 
		with probability not less than $1 - e^{-\varsigma}$
		and hence all the assumptions of Theorem \ref{thr::LevelSetBound} are indeed satisfied.
	\end{proof}

	To prove Theorem \ref{the::Consistency} concerning with the consistency of our clustering algorithm, we need the following technical lemma.

	\begin{lemma} \label{lem::trival}
		Let $(a_n)$, $(b_n)$ be strictly positive sequences and $\varsigma_n$ be the solution of equation
		\begin{align*}
			e^{2\varsigma_n}(\varsigma_n+a_n)=b_n.
		\end{align*}
		If $\lim_{n\to \infty} b_n=\infty$ and $\lim_{n\to \infty}a_n/b_n=0$, then $\lim_{n\to \infty} \varsigma_n=\infty$.
	\end{lemma}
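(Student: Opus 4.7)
The plan is to prove the limit by contradiction, exploiting the fact that the left-hand side $e^{2\varsigma_n}(\varsigma_n + a_n)$ grows very fast in $\varsigma_n$, so if $\varsigma_n$ stays bounded then $b_n$ cannot be arbitrarily large relative to $a_n$. Note first that for each fixed $n$, the map $\varsigma \mapsto e^{2\varsigma}(\varsigma + a_n)$ is strictly increasing on $(0,\infty)$ (since $a_n > 0$) and ranges from $a_n$ to $\infty$, so the equation $e^{2\varsigma_n}(\varsigma_n + a_n) = b_n$ has a unique positive solution $\varsigma_n$ whenever $b_n > a_n$, which holds eventually under the hypothesis $a_n/b_n \to 0$.

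Now I would suppose, for contradiction, that $\varsigma_n \not\to \infty$. Then $\liminf_{n \to \infty} \varsigma_n < \infty$, so there exist a constant $M < \infty$ and a subsequence $(n_k)$ with $\varsigma_{n_k} \leq M$ for all $k$. Rearranging the defining identity along this subsequence gives
\begin{align*}
a_{n_k} = b_{n_k} e^{-2\varsigma_{n_k}} - \varsigma_{n_k} \geq b_{n_k} e^{-2M} - M.
\end{align*}
Dividing by $b_{n_k}$ yields
\begin{align*}
\frac{a_{n_k}}{b_{n_k}} \geq e^{-2M} - \frac{M}{b_{n_k}}.
\end{align*}
Since $b_n \to \infty$ implies $b_{n_k} \to \infty$, letting $k \to \infty$ gives $\liminf_k a_{n_k}/b_{n_k} \geq e^{-2M} > 0$, which contradicts the assumption $a_n/b_n \to 0$.

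There is essentially no hard step here: the argument is a one-line rearrangement followed by a limit, and the role of the two hypotheses is transparent ($b_n \to \infty$ provides the diverging term while $a_n/b_n \to 0$ is exactly what bounded $\varsigma_n$ would preclude). If one preferred an explicit rather than a contradictory argument, one could instead take logarithms and observe that $2\varsigma_n = \log b_n - \log(\varsigma_n + a_n) \geq \log b_n - \log(\varsigma_n + a_n)$, and then note that $\varsigma_n + a_n \leq b_n$ forces $\varsigma_n \geq \tfrac{1}{2}\log(b_n/(\varsigma_n + a_n))$, from which divergence follows; but the contradiction route above is the cleanest.
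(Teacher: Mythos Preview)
Your proof is correct and follows essentially the same route as the paper's: argue by contradiction, extract a subsequence along which $\varsigma_{n_k}$ is bounded by some $M$, and deduce $\liminf_k a_{n_k}/b_{n_k} \geq e^{-2M}$, contradicting $a_n/b_n \to 0$. The only cosmetic difference is that the paper bounds $b_{n_k} < e^{2M}(M + a_{n_k})$ and first infers $a_{n_k}\to\infty$ before taking the ratio, whereas you rearrange directly for $a_{n_k}/b_{n_k}$; your version is slightly cleaner but the idea is identical.
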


	\begin{proof}[Proof of Lemma \ref{lem::trival}]
		We prove the lemma by contradiction.
		To this end, we assume that $\lim_{n\to \infty}\varsigma_n\neq \infty$. 
		Then there exists an $M > 0$, and a subsequence of $(\varsigma_n)$ denoted by $(\varsigma_{n_k})$ such that $|\varsigma_{n_k}| < M$ hold for all $k$.
		Consequently we obtain
		\begin{align*}
			b_{n_k}=e^{2\varsigma_{n_k}}(\varsigma_{n_k}+a_{n_k})<e^{2M}(M+a_{n_k})
		\end{align*}
		for all $k$. This together with the condition
		$\lim_{k\to \infty} b_{n_k}=\infty$ implies that $\lim_{k\to \infty} a_{n_k}=\infty$. 
		Therefore, we have
		\begin{align*}
			\varliminf_{k\to \infty}\frac{a_{n_k}}{b_{n_k}}
			\geq \varliminf_{k\to \infty} \frac{1}{e^{2M}} \cdot \frac{a_{n_k}}{M+a_{n_k}}
			= \frac{1}{e^{2M}},
		\end{align*}
		which contradicts the condition $\lim_{n\to\infty}a_n/b_n=0$
		and thus the assertion is proved.
	\end{proof}

	\begin{proof}[Proof of Theorem \ref{the::Consistency}]
		Let the events $B_{1,t,\varepsilon}$, 
		$B_{2,t,\delta}$,
		$B_{1,\mathrm{E},\varepsilon}$, and
		$B_{2,\mathrm{E},\delta}$
		be defined as in 
		\eqref{B1t},
		\eqref{B2t},
		\eqref{B1E}, and
		\eqref{B2E} respectively.
		According to Inequality (19) in \cite{hang2018best}, we have
		\begin{align*}
			\mathrm{P}^n ( B_{2,t,\delta_n} )
			\geq 1 - (K + 2 r) d \delta_n^{-1} p_n^{- c_T/(4d)}
		\end{align*}
		and consequently we obtain
		\begin{align*}
			\lim_{n\to \infty} \mathrm{P}^n (B_{2,t,\delta_n}) = 1.
		\end{align*}
		Since $m$ is finite and splitting criteria $Z_1,\dots,Z_m$ are i.i.d.~from $\mathrm{P}_Z$, we have
		\begin{align*}
			\lim_{n\to \infty} \mathrm{P}^n ( B_{2,\mathrm{E},\delta_n} ) = 1.
		\end{align*}
		Proposition 15 in \cite{hang2018best} shows that
		\begin{align*}
			\mathrm{P}^n ( B_{1,\mathrm{E},\varepsilon_n} )
			\geq 1 - 2 e^{-\varsigma_n}
		\end{align*}
		where $\varepsilon_n \geq \sqrt{\|f\|_{\infty} \mathcal{E}'_n} + \mathcal{E}'_n/3 + 2/n$ with
		\begin{align} \label{eq::E_n}
			\mathcal{E}'_n := 
			8 n^{-1} \mu(B_r)^{-1} e^{2 \varsigma_n} p_n^{2a} 
			\bigl( (8 d + 1) \varsigma_n + 23 \log n + 8 a d \log p_n \bigr).
		\end{align}
		Obviously, there exists certain $q \in (0, 1)$ such that
		$\varepsilon'_n := \sqrt{\|f\|_{\infty}\mathcal{E}'_n}+\mathcal{E}'_n/3 \geq (1 - q) \varepsilon_n$. 
		
		Next, 
		with the help of Lemma \ref{lem::trival},
		we show that if $\varepsilon'_n \to 0$, then we have $\varsigma_n \to \infty$ with $\varsigma_n$ satisfying \eqref{eq::E_n}.
		Clearly, we have
		\begin{align*} 
			\mathcal{E}'_n = 9 \bigl( \sqrt{\|f\|_{\infty} + 4 \varepsilon'_n/3} - \sqrt{\|f\|_{\infty}} \bigr)^2 / 4.
		\end{align*}
		Plugging this into \eqref{eq::E_n}, we obtain
		\begin{align*}
			e^{2\varsigma_n}((8d+1)\varsigma_n+23\log n+8ad \log p_n)=
			9 n p_n^{- 2a} \mu(B_r) 
			\bigl( \sqrt{\|f\|_{\infty}+ 4 \varepsilon'_n / 3} - \sqrt{\|f\|_{\infty}} \bigr)^2 / 32.
		\end{align*}
	    Now, by setting
		\begin{align*}
		    			a_n & := (23 \log n + 8 a d \log p_n) / (8 d  + 1),
		    			\\
			b_n& := 9 n p_n^{-2a} \mu(B_r) \bigl( \sqrt{\|f\|_{\infty} + 
				4 \varepsilon'_n / 3} - \sqrt{\|f\|_{\infty}} \bigr)^2 
			/ (32(8d + 1)),
		\end{align*}
		it can be easily verified that
		there exist finite constants $c_1$, $c_2$, $c_3$, and $c_4$ such that
		\begin{align*}
			\lim_{n\to \infty} b_n 
			= \lim_{n\to \infty} c_1 n p_n^{-2a} {\varepsilon'_n}^2
			\geq \lim_{n\to \infty} c_2 n p_n^{-2a} {\varepsilon_n}^2
			= \infty
		\end{align*}
		and
		\begin{align*}
			\lim_{n\to \infty} a_n / b_n
			=\lim_{n\to \infty} c_3 (n^{-1} \log n) p_n^{2a} {\varepsilon'_n}^{-2}
			\leq \lim_{n\to \infty} c_4 (n^{-1} \log n) p_n^{2a} {\varepsilon'_n}^{-2}
			= 0.
		\end{align*}
	    Then, Lemma \ref{lem::trival} with the above $a_n$ and $b_n$
	    implies $\varsigma_n \to \infty$
		and consequently we have
		\begin{align*}
			\lim_{n\to \infty} \mathrm{P}^n ( B_{1,t,\varepsilon_n} ) = 1.
		\end{align*}
		Since $m$ is finite and splitting criteria $Z_1,\dots,Z_m$ are i.i.d.~from $\mathrm{P}_Z$, we have
		\begin{align*}
			\lim_{n\to \infty} \mathrm{P}^n ( B_{1,\mathrm{E},\varepsilon_n} ) = 1
		\end{align*}
		which completes the proof of consistency
		according to Theorem \ref{thr::LevelSetBound} and Section A.9 in \cite{steinwart2015suppA}.
	\end{proof}

	\begin{proof}[Proof of Corollary \ref{col::rho_rates}]
		For $n\geq 1$, 
		we define 
		$$
		\varepsilon_n^*:=\varepsilon_n+(\tau_n/\underline{c}_{\textit{sep}})^\kappa.
		$$ 
		Since sequences $(\varepsilon_n)$, $(\delta_n)$ and $(\tau_n)$ converge to $0$, we have $\delta_n \in (0,\delta_{\textit{thick}}]$ and 
		for all sufficiently large $n$, there holds
		$$
		\varepsilon_n^*\leq (\rho^{**}-\rho^*)/9.
		$$
		Moreover, the assumed $\tau_n$ and $\sigma_n$ satisfy
		\begin{align*}
			\lim_{n\to\infty} \tau_n/ (2\sigma_n)^{\gamma}
			= \lim_{n\to\infty} 2^{-\gamma} 
			   \bigl( n^{\lambda(1-\alpha)}
			           (\log n)^{\lambda(\alpha-1)}
			           ( \log\log n / \log \log \log n ) \bigr)^{\gamma/2(\gamma\kappa+\epsilon)}
			 = \infty
		\end{align*}
		and therefore
		we have 
		$$
		\tau_n > 3c_{\textit{thick}} (2\sigma_n)^\gamma=\psi(2\sigma_n)
		$$ 
		for all sufficiently large $n$. Set
		\begin{align*}
			\varsigma_n : = \log\log n, 
			\qquad 
			p_n := \bigl( n/\log n \bigr)^{\frac{2d}{c_T\alpha+4ad}},
		\end{align*}
		and denote $\mathcal{E}_{\varsigma_n,p}$ as in \eqref{eq::CalE_n}.
		Since $\mathcal{E}_{\varsigma_n,p}\to \infty$ and $\mathcal{E}_{\varsigma_n,p}/n\to 0 $, we have
		\begin{align*}
			\sqrt{\|f\|_{\infty} \mathcal{E}_{\varsigma_n, p}/ n} 
			+ \mathcal{E}_{\varsigma_n, p} / (3n) 
			+ 2/n
			& \sim \sqrt{\|f\|_{\infty} \mathcal{E}_{\varsigma_n, p}/ n} 
			\\
			&\lesssim \bigl( 
			  n^{-1} (\log n)^3 (n/\log n)^{4 a d \lambda / c_T} \bigr)^{1/2}
			  \\
			&\lesssim \bigl( n^{- \lambda \alpha} (\log n)^{2+\lambda \alpha} \bigr)^{1/2}.
		\end{align*}
		Consequently, for all sufficiently large $n$, we have
		\begin{align*}
			\varepsilon_n  
			& = \bigl( n^{- \lambda \alpha} (\log n)^{2+\lambda \alpha} \log\log n \bigr)^{\gamma\kappa/(2(\gamma\kappa+\epsilon))}
			\\
			& \geq \bigl( n^{\lambda \alpha} (\log n)^{2+\lambda \alpha} \log\log n \bigr)^{1/2}
			\\
			& > \sqrt{\|f\|_{\infty} \mathcal{E}_{\varsigma_n, p}/ n} 
			+ \mathcal{E}_{\varsigma_n, p} / (3n) 
			+ 2/n
		\end{align*}
		and therefore condition \eqref{BoundVarepsilon} on $\varepsilon_n$ is satisfied.
		Moreover, there holds
		\begin{align*}
			p_n^{-1} \bigl( 2 m d (K + 2 r) e^{\varsigma_n} / \delta_n \bigr)^{4 d / c_T}
			& \lesssim \bigl(
			      n^{\lambda/2} (\log n)^{-\lambda} (\log \log n)^{-1/2} \bigr)^{4 d / c_T} 
			      (\log n / n)^{2 d \lambda / c_T}
			\\
			& = (\log\log n)^{- 2d / c_T}
		\end{align*}
		and consequently we have 
		\begin{align*}
			\lim_{n \to \infty} 
			p_n^{-1} \bigl( 2 m d (K+2r) e^{\varsigma_n} / \delta_n \bigr)^{4d/c_T} = 0.
		\end{align*}
		In other words, for all sufficiently large $n$, there holds
		\begin{align*}
			p_n > \bigl( 2 m d (K+2r) e^{\varsigma_n} / \delta_n \bigr)^{4d/c_T}
		\end{align*}
		and therefore condition \eqref{BoundSplits} on $p_n$ is satisfied.
		
		Now, by applying Theorem \ref{the::rho_rates}, there exist an $n_0 \geq 1$ and a constant $\overline{c}$ such that the right-hand side of inequalities \eqref{RhoExessError} holds for $n \geq n_0$. Moreover, if $\kappa$ is exact, \eqref{equ::tau_rho} holds for all $n\geq n_0$.
	\end{proof}

	\begin{proof}[Proof of Corollary \ref{col::sets_rate}]
		Similar as the proof of Theorem \ref{the::rho_rates}, we prove that for all sufficiently large $n$, there holds
		\begin{align*}
		\tau_n & \geq \psi(2\sigma_n),
		\\
	    \varepsilon_n & > \sqrt{\|f\|_{\infty} \mathcal{E}_{\varsigma_n, p}/ n} 
	    + \mathcal{E}_{\varsigma_n, p} / (3n) 
	    + 2/n,
			\\
		p_n & > \bigl( 2md(K+2r)e^{\varsigma_n} / \delta_n \bigr)^{4d/c_T},
		\end{align*}
		with $\mathcal{E}_{\varsigma_n,p}$ as in \eqref{eq::CalE_n}, and thus
		the conditions in Theorem \ref{the::sets_rate} are all satisfied.
		Then, for such $n$, by applying Theorem \ref{the::sets_rate}, we obtain
		\begin{align*}
			\mathrm{P}^n \biggl( D : \sum^2_{i=1}
			\mu(B_i(D)\triangle A_i^*)
			\leq \bigl( 
			      7 c_{\textit{flat}} \varepsilon_n
			      + c_{\textit{flat}} (\tau_n/\underline{c}_{\textit{sep}})^\kappa
			      \bigr)^{\vartheta}
			      + 6 c_{\textit{bound}}\delta_n^{\alpha_0} 
			\biggr)
			\geq 1 - 1/\log n.
		\end{align*}
		Elementary calculations show that with the assumed $\varepsilon_n$,$\tau_n$,  and $\delta_n$, there exists a constant $c > 0$ such that
		\begin{align*}
			\mathrm{P}^n \biggl( D : \sum^2_{i=1} \mu(B_i(D) \triangle A_i^*) 
			\leq c \bigl( n^{- \lambda \alpha} (\log n)^{2+\lambda \alpha} \log \log n \bigr)^{\frac{\varrho\vartheta}{2(\varrho+\vartheta)}} \biggr)
			\geq 1 - 1 / \log n.
		\end{align*}
		Obviously, we have
		\begin{align*}
			\sum^2_{i=1} \mu(B_i(D) \triangle A_i^*)
			\leq 2\mu(\mathcal{X})<\infty.
		\end{align*}
		Therefore, we can choose a constant $c$ large enough such that the desired inequality holds for all $n \geq 1$.
	\end{proof}

	\section{Conclusion} \label{sec::Conclusion}

	In this paper, we present an algorithm called \emph{best-scored clustering forest} to efficiently solve the single-level density-based clustering problem. From the theoretical perspective, our main results comprise statements and complete analysis of statistical properties such as consistency and learning rates. The convergence analysis is conducted within the framework established in \cite{steinwart2015adaptive}. With the help of best-scored random forest density estimator proposed by \cite{hang2018best}, we show that consistency 
	of our proposed clustering algorithm
	can be established 
	with properly chosen hyperparameters
	of the density estimators and partition diameters. Moreover, we obtain fast rates of convergence for estimating the clusters under certain mild conditions on the underlying density functions and target clusters.
	Last but not least, the excellence of \emph{best-scored clustering forest} was demonstrated by various numerical experiments. 
	On the one hand, 
		the new approach provides better average adjusted rand index (ARI) than other state-of-the-art methods such as ADP-Cluster, DBSCAN, $k$-means and PDF-Cluster on synthetic data, while providing average ARI that are at least comparable on several benchmark real data sets.
	On the other hand, 
	due to the intrinsic advantage of random forest, it is  to be expected that this strategy enjoys satisfactory computational efficiency by taking utmost advantage of the parallel computing.

	\bibliographystyle{chicago}
	
\end{document}